\newif\ifarxiv
\newif\ifdraft
\newcommand*{\SavedLstInline}{}
\LetLtxMacro\SavedLstInline\lstinline
\DeclareRobustCommand*{\lstinline}{
  \ifmmode
    \let\SavedBGroup\bgroup
    \def\bgroup{
      \let\bgroup\SavedBGroup
      \hbox\bgroup
    }
  \fi
  \SavedLstInline
}
\crefname{listing}{Algorithm}{Algorithms}
\Crefname{listing}{Algorithm}{Algorithms}
\theoremstyle{definition}
\newcommand{\mynewtheorem}[3]{
\ifthenelse{\equal{#1}{theorem}}{
    \newtheorem{my#1}{#2}
}{
    \newtheorem{my#1}[mytheorem]{#2}
    \ifthenelse{\equal{#3}{}}{
        \crefname{my#1}{#2}{#2s}
    }{
        \crefname{my#1}{#2}{#3}
    }
}}
\newenvironment{theorem}
  {\pushQED{\qed}\mytheorem}
  {\popQED\endmytheorem}
\newenvironment{lemma}
  {\pushQED{\qed}\mylemma}
  {\popQED\endmylemma}
\newenvironment{corollary}
  {\pushQED{\qed}\mycorollary}
  {\popQED\endmycorollary}
\newenvironment{proposition}
  {\pushQED{\qed}\myproposition}
  {\popQED\endmyproposition}
\newenvironment{definition}
  {\pushQED{\qed}\mydefinition}
  {\popQED\endmydefinition}
\newenvironment{remark}
  {\pushQED{\qed}\myremark}
  {\popQED\endmyremark}
\newenvironment{example}
  {\pushQED{\qed}\myexample}
  {\popQED\endmyexample}
\newenvironment{proofnoqed}{\begin{proof}}{\end{proof}}
\definecolor{darkred}{rgb}{.5,0,0}
\definecolor{darkgreen}{rgb}{0,.5,0}
\definecolor{darkblue}{rgb}{0,0,.5}
\definecolor{darkorange}{rgb}{.8,.4,0}
\definecolor{violet}{rgb}{.5,0,1.}
\definecolor{darkcyan}{rgb}{0,.6,.7}
\newcommand{\todo}[1]{\textcolor{darkorange}{(\emph{TODO: #1})}}
\newcommand{\comment}[1]{\textcolor{gray}{(\emph{#1})}}
\newcommand{\warning}[1]{\textcolor{red}{(\emph{WARNING: #1})}}
\newcommand{\quest}[1]{\textcolor{darkgreen}{(\emph{Q: #1})}}
\newcommand{\mh}[1]{\textcolor{gray}{(\textbf{MH:} \emph{#1})}}
\newcommand{\levi}[1]{\textcolor{gray}{(\textbf{LL:} \emph{#1})}}
\newcommand{\eser}[1]{\textcolor{gray}{(\textbf{EA:} \emph{#1})}}
\newcommand{\todo}[1]{}
\newcommand{\comment}[1]{}
\newcommand{\warning}[1]{}
\newcommand{\quest}[1]{}
\newcommand{\mh}[1]{}
\newcommand{\levi}[1]{}
\newcommand{\eser}[1]{}
\newenvironment{eqitems}{
\begin{enumerate}[label=(\alph*),nosep]
}{
\end{enumerate}
}
\newenvironment{eqitems*}{
\begin{enumerate*}[label=(\alph*),nosep]
}{
\end{enumerate*}
}
\newcommand{\mask}[2]{{\mathpalette\mask@{{#1}{#2}}}}
\newcommand{\mask@}[2]{\mask@@{#1}#2}
\newcommand{\mask@@}[3]{
  \settowidth{\dimen@}{$\m@th#1#2$}
  \makebox[\dimen@]{$\m@th#1#3$}
}
\newcommand{\makenamedref}[2]{
\expandafter\newcommand\csname#1name\endcsname{#2}
\expandafter\newcommand\csname#1label\endcsname{\label{#1_label}}
\expandafter\newcommand\csname#1ref\endcsname{\Cref{#1_label} (#2)}
}
\newcommand{\Proofof}{Proof of{}}
\newcommand{\ie}{\emph{i.e.}, }
\newcommand{\eg}{\emph{e.g.}, }
\DeclareMathOperator*{\argmin}{argmin}
\newcommand{\Naturals}{{\mathbb N}_1}
\newcommand{\Reals}{{\mathbb R}}
\newcommand{\indicator}[1]{\left\llbracket{#1}\right\rrbracket}
\newcommand{\D}{\mathrm{d}}
\newcommand{\floor}[1]{\left\lfloor#1\right\rfloor}
\newcommand{\ceil}[1]{\left\lceil#1\right\rceil}
\newcommand{\eps}{\varepsilon}
\newcommand{\nodeset}{\mathcal{N}}
\newcommand{\leafset}{\mathcal{L}}
\newcommand{\children}{\mathcal{C}}
\newcommand{\nodesettheta}{\nodeset_\theta}
\newcommand{\nodesetthetaplus}{\nodeset_{\theta}^{+}}
\newcommand{\parent}{\mathrm{par}}
\newcommand{\anc}{\mathrm{anc}}
\newcommand{\ancn}{\mathrm{anc}_+}
\newcommand{\desc}{\mathrm{desc}}
\newcommand{\descn}{\mathrm{desc}_+}
\newcommand{\pol}{\pi}
\newcommand{\nup}{\overline{n}}
\newcommand{\ndn}{\underline{n}}
\newcommand{\costargs}[2]{(#1;#2)}
\newcommand{\slend}{\lambda}
\newcommand{\dop}{\tfrac{d}{\pol}}
\newcommand{\dopa}[2]{\dop\costargs{#1}{#2}}
\newcommand{\sop}{\tfrac{\lambda}{\pol}}
\newcommand{\sopa}[2]{\sop\costargs{#1}{#2}}
\newcommand{\tpr}{w}
\newcommand{\ttpr}{\tilde\tpr}
\newcommand{\Tprm}[1]{
    \tpr{\raisebox{-1pt}{\ensuremath{{}_{#1}}}}}
\newcommand{\TTprm}[1]{
    \ttpr{\raisebox{-1pt}{\ensuremath{{}_{#1}}}}}
\newcommand{\queue}{Q}
\newcommand{\Cnm}{\tilde c}
\newcommand{\Crlts}{c^{\mathrm{r}}}
\newcommand{\Cmax}{c^{\mathrm{max}}}
\newcommand{\clueset}{\nodeset_q}
\newcommand{\rootlts}{\ensuremath{\sqrt{\mathrm{\textnormal{\textsc{lts}}}}}}
\newcommand{\RED}[1]{\textcolor{red}{#1}}
\newcommand{\BLUE}[1]{\textcolor{blue}{#1}}
\title{Exponential Speedups by Rerooting Levin Tree Search}
\author{
Laurent Orseau$^1$
\and
Marcus Hutter$^1$\and
Levi H. S. Lelis$^{2}$\\
}
\date{
$^1$Google DeepMind\\
$^2$Department of Computing Science, University of Alberta, Canada\\
and Alberta Machine Intelligence Institute (Amii), Canada\\[1ex]
\{lorseau,mhutter\}@google.com,
levi.lelis@ualberta.ca
}
\newenvironment{keywords}{\centerline{\bf\small Keywords}\begin{quote}\small}{\par\end{quote}\vskip 1ex}
\begin{document}

\maketitle

\begin{abstract}
Levin Tree Search (LTS) \citep{orseau2018single} is a search algorithm for deterministic environments that uses a user-specified \emph{policy} to guide the search.
It comes with a formal guarantee on the number of search steps (node visits) for finding a solution node that depends on the quality of the policy.
In this paper, we introduce a new algorithm, called \rootlts{} (pronounce root-LTS), which implicitly 
starts an LTS search rooted at every node of the search tree.
Each LTS search is assigned a \emph{rerooting weight} by a \emph{rerooter}, and the search effort is shared between all LTS searches proportionally to their weights.
The rerooting mechanism implicitly decomposes the search space into subtasks, leading to significant speedups.
We prove that the number of node visits that \rootlts{} takes
is competitive with the best decomposition into subtasks, at the price of a factor that relates to the uncertainty of the rerooter.
If LTS takes time $T$, in the best case with $q$ rerooting points, \rootlts{} only takes time $O(q\sqrt[q]{T})$.
Like the policy, we expect that the rerooter can be learnt from data, making \rootlts{} applicable to a wide range of domains.
\ifarxiv\vspace{5ex}\def\contentsname{\centering\normalsize Contents}\setcounter{tocdepth}{1}
{\parskip=-2.7ex\tableofcontents}\fi
\end{abstract}

\begin{keywords}
  informed search, planning, efficiency guarantees, anytime algorithms, rerooting, Sokoban
\end{keywords}

\ifarxiv\clearpage\fi
\section{Introduction}\label{sec:Intro}

We are interested in tree search algorithms for deterministic domains.
Tree search algorithms such as all variants of best-first search --- including A*~\citep{hart1968formal}, Weighted-A* (WA*), and Greedy Best-First Search (GBFS)~\citep{doran1966gbfs} --- and variants of MCTS --- such as UCT~\citep{kocsis2006uct}, AlphaGo, AlphaZero and other variants~\citep{silver2016alphago,silver2017mastering,silver2017zero} --- explore the search tree starting at the root and can visit a node only if its parent has been visited first.
These algorithms are often guided by some side information, such as with cost-to-go heuristic function for A*, WA* and GBFS, a reward/value function for UCT and AlphaZero, or a policy for AlphaZero, Levin Tree Search (LTS)~\citep{orseau2018single}, and Policy-Guided Heuristic Search~\citep{orseau2021policy}.

Such algorithms also sometimes come with different types of guarantees:
A* and WA*, with an admissible heuristic function --- \ie a function that never overestimates the optimal cost-to-go --- are guaranteed to return a solution 
that is cost-optimal (A*) or bounded-suboptimal (WA*), while UCT and AlphaZero are guaranteed to (eventually) have low regret in terms of cumulative reward during the search.
LTS is guaranteed to return a solution within a number of node visits that depends on the quality of its policy.
In this paper, we consider the latter type of guarantee,
on the efficiency of the search process depending on the quality of the side information.

To explain the main concepts of this paper, let us consider a kind of side information we call \emph{clues}: some nodes are clue nodes, and a node can be known to be a clue node only when reaching it.
A clue may be \emph{helpful} if it is on the path toward a solution node,
or \emph{misleading} otherwise.
The following example describes a minimalistic clue environment.

\begin{figure}[tb]
    \centering
    \newcommand{\mytriangle}[4]{
\draw[fill=#4,color=#4] let \p1  = #1 in let \p2 = #2 in (\x1, \y1) 
-- (\x1 - #3cm,\y2) -- (\x1 + #3cm, \y2) -- cycle;
}
\begin{tikzpicture}[
        scale=0.6,
        mynode/.style={fill=white,draw=black,circle,scale=.7},
    ]
    \begin{scope}[xshift=0cm]
      \node (Top) at (0, 0) {};
      \node (Na) at (-3.5, -2) {};
      \node (Nb) at (-1., -2) {};
      \node (Nc) at (1.8, -2) {};
      \node (Nd) at (4, -2) {};
      \node (NT) at (1.2, -4) {};
        \mytriangle{(Top)}{(NT)}{8}{lightgray}
        \mytriangle{(Top)}{(Na)}{4}{darkgray}
        \mytriangle{(Na)}{(NT)}{1}{darkgray}
        \mytriangle{(Nb)}{(NT)}{1}{darkgray}
        \mytriangle{(Nc)}{(NT)}{1}{darkgray}
        \mytriangle{(Nd)}{(NT)}{1}{darkgray}
        \draw[white, line width=0.5mm] (Top) -- (Nc) -- (NT);
        \draw (Top) node[mynode] {$n_1$};
        \draw (Na) node[mynode] {$n_a$};
        \draw (Nb) node[mynode] {$n_b$};
        \draw (Nc) node[mynode] {$n_c$};
        \draw (Nd) node[mynode] {$n_d$};
        \draw (NT) node[mynode] {$n^*$};
    \end{scope}
\end{tikzpicture}
    \caption{A schematic representation of the binary tree of \Cref{ex:1000clues} with four clue nodes $n_a, n_b, n_c, n_d$ at depth 50,
    and the solution node $n^*$ at depth 100.
    }
    \label{fig:clue_env_triangles}
\end{figure}

\begin{example}[1000-Clues]\label{ex:1000clues}
The search space is a perfect binary tree of depth 100, and the single solution node is one of the leaves.
Without any further information, no tree search algorithm can do better than searching through the $2^{100}$ leaf nodes in the worst case. 
Fortunately, at depth 50 there are 1000 random \emph{clue nodes}, and the solution node is a descendant of one of these clue nodes.
That is, only one clue node is helpful, and 999 clue nodes are misleading.
See \Cref{fig:clue_env_triangles}.
The search algorithm can discover whether a node $n$ is a clue node only upon visiting it.\footnotemark\ 
How can we leverage this information to search faster?
A simple algorithm first finds all the 1000 clue nodes, which takes about $2^{50}$ steps, then starts a new search under each clue node for about $2^{50}$ steps each until finding the solution node.
This algorithm would visit about $1001\times 2^{51}$ nodes in the worst case,
which is far less than the $2^{100}$ steps required on average when the clues are not used.
\end{example}
\footnotetext{Clues could be observed also upon node generation, that is, when the parent is visited --- see \cref{eq:queue_update}.}

In \Cref{apdx:admissible}, we show that A* and WA* cannot make use of clues in general, while retaining their bounded-suboptimality guarantees.
Policy-guided algorithms like LTS use a probability distribution over the available actions at any node of the tree:
LTS is a best-first search with visits the nodes in increasing order of their cost $n\mapsto d(n) /\pol(n)$ where $d(n)$ is the depth of a node $n$,
and $\pol(n)$ is the product of the edge (or `action') probabilities from the root to $n$.
On \Cref{ex:1000clues}, it can be shown that LTS needs to visit at least $2^{99}$ nodes in the worst case until visiting $n^*$ --- see \Cref{apdx:lts_clues}.
Reward-based algorithms such as AlphaZero are greedy in the sense that they spend most of their search time in the subtree rooted in the first high-reward node they visit.
This greedy behavior can lead to taking double-exponential time (with the solution depth) to recover from misleading rewards/clues~\citep{coquelin2007bandit,orseau2024superexponential}
--- see \Cref{apdx:mcts}.

In this paper, we propose the \rootlts{} algorithm (pronounced root-LTS), which solves the kind of clue environments described in \Cref{ex:1000clues} and much more.
\rootlts{} uses both a policy and a new guiding construct called \emph{rerooter}.
The rerooter assigns a \emph{rerooting weight} to each visited node.
This weight can depend on whatever information is available when visiting the node, including whether the node is a clue node, or how many clue nodes have been visited up to this node, or any other feature of the current and previously-visited nodes. 
Intuitively, the rerooting weight of a node informs \rootlts{} what fraction of its search time should be devoted to the tree rooted in this node. 
\rootlts{} comes with formal guarantees on the number of node visits required to reach a solution node, depending on the quality of the policy and of the rerooter. 
Like the LTS policy, we expect that the rerooter may be learnt from data or designed by the user, but this is beyond the scope of this paper.

While \rootlts{} does not specifically require clues since only the rerooting weights matter, clues are still a useful concept both for intuition and perhaps to design rerooters.
We envision that clues could be given whenever the search is `on the right path'.
This is related to rewards and shaping rewards in reinforcement learning~\citep{ng1999policy,sutton1998reinforcement},
landmarks in classical planning~\citep{hoffmann2004landmarks}.
In automated theorem proving~\citep{loveland2016automated}, a clue may be given when a hopefully-helpful lemma is found.
In constraint satisfaction programming~\citep{tsang2014foundations}, a clue could be given when 
some difficult constraints are satisfied.
More generally, a clue could be given whenever substantial progress is made on some scoring function, or when a bottleneck has been passed, or when a subtask has been solved.
Clues may not all have the same meaning either, and rerooting weights need not be equal for all clues.

After introducing notation (\Cref{sec:notation}),
we present in \Cref{sec:lower_bound} a generalization of \Cref{ex:1000clues} that will serve both as a running example and for deriving lower bounds.
\Cref{sec:self-counting} introduces self-counting cost functions
and how to compose them, and proposes the self-counting cost function $\sop$
as a replacement for $\dop$ in LTS with tighter bounds.
In \Cref{sec:rootlts}, we build the best-first-search cost function of \rootlts{}
and give its main guarantee as well as simplified ones.
Finally, we make our algorithm robust to clue overload (\Cref{sec:robustness}).

\section{Notation and Background}\label{sec:notation}

A table of notation can be found in \Cref{apdx:table_notation}.
The set of nodes is $\nodeset$.
The set of children of a node $n$ is $\children(n)$.
Each node $n$ has either one parent $\parent(n)$ except for the root node $n_1$ which has no parent.
The set of descendants of a node $n$ is $\desc(n)$ (the transitive closure of $\children(\cdot))$, and we define $\descn(n) = \desc(n)\cup\{n\}$.
Similarly, the set of ancestors of $n$ is $\anc(n)$, and we define $\ancn(n) = \anc(n)\cup\{n\}$.
For two nodes $n$ and $\ndn$,
we write $n \prec \ndn$ for $n\in \anc(\ndn)$ and 
$n\preceq \ndn$ for $n\in\ancn(\ndn)$.
The depth of a node $n$ is $d(n) = |\anc(n)|$, hence $d(n_1)=0$.

A policy $\pol:\nodeset\times\nodeset\to [0,1]$,
written $\pol(\cdot| \cdot)$,
is such that for all $n \not\preceq \dot n: \pol(\dot n| n) = 0$,
and for each node $n$,
$\sum_{\ndn\in\children(n)} \pol(\ndn| n) \leq 1$;
the policy is called \emph{proper} if this holds with equality.
We define the \emph{path probability} recursively:
for all $n$, $\pol(n| n) = 1$, and 
for all $\nup\preceq n\prec \ndn$
with $\ndn\in\children(n): \pol(\ndn| \nup) = \pol(\ndn| n) \pol(n| \nup)$.
We write $\pol(n) = \pol(n| n_1)$.
We say that a policy is \emph{uniform} if all children of a node have the same conditional probability: $\pol(\ndn| n) = 1/|\children(n)|$ if $\ndn\in\children(n)$.
The policy $\pol$ is assumed to be given by the user, and may be learnt (\eg \citet{orseau2021policy,orseau2023ltscm}).

A \emph{cost function} is a function from the set of nodes $\nodeset$ to the reals $\Reals$.
A cost function $c:\nodeset\to\Reals$ is \emph{monotone}
if it is monotone non-decreasing from parent to child, or equivalently
$c(n) \leq c(\ndn)$ for all $n \prec \ndn$.
Given a (possibly non-monotone) cost function $c:\nodeset\to\Reals$, a \emph{best-first search} (BFS)
\citep{pearl1984heuristics}
is an enumeration $n_1, n_2, \dots$ of the nodes using a priority queue $\queue$, initialized with $\queue_1 = \{n_1\}$.
For each step $t$, define:
\begin{align}
    n_t &\in\argmin_{n\in\queue_t} c(n) \,, \text{ with ties broken arbitrarily} 
    \label{eq:visited_node}\\
    \queue_{t+1} &= (\queue_t \setminus\{n_t\})\cup \children(n_t)\,.
    \label{eq:queue_update}
\end{align}
The node $n_t$ is the \emph{visited} node at step $t$ of the BFS.
With a \emph{monotone} cost function $c$, it is well-known that BFS ensures that the number of visited nodes $t$ when the node $n_t$ is visited is bounded by $t = |\{n_1,\dots n_t\}|\leq |\{n\in\descn(n_1): c(n)\leq c(n_t)\}|$.

We also assume that the user provides a \emph{rerooter}
$\tpr:\nodeset\to[0,\infty)$.
At step $t$ of a BFS enumeration, when visiting the node $n_t$,
the rerooter assigns a \emph{rerooting weight} $\tpr_t$
that can depend on all the information available up to and including step $t$.
During such a BFS enumeration, the \emph{cumulative rerooting weight} at any step $T$
is $\Tprm{\leq T} = \sum_{t\leq T} \tpr_t$.

\begin{remark}[No stopping]\label{rmk:no_stopping}
For the sake of generality, in this paper we do not use a stopping criterion for BFS. Our upper bounds hold for every node visited before the algorithm stops, irrespective of the stopping criterion used.
\end{remark}

\section{Lower Bound}\label{sec:lower_bound}

First we prove a general lower bound on the number of node visits that any search algorithm, randomized or deterministic,
using any kind of heuristic guide,
must perform before visiting a solution node in the presence of clues.
We will also use this set of environments as a running example and to show that some improvements to \rootlts{} are impossible in general.
In the following result, for simplicity we assume that the algorithm can test whether a node is the solution node $n^*$ only upon visiting it.

\begin{theorem}\label{thm:lower_bound}
Consider an infinite perfect binary tree.
Choose $a, q\in\Naturals$.
Let $\clueset\subseteq\nodeset$ be the set of clue nodes,
where $q=|\clueset|$ is the number of clue nodes.
The root is a clue node.
For every other clue node $n$,
there exists a clue ancestor $\nup$ at relative depth at most $a$:
$1\leq d(n) - d(\nup) \leq a$.
Optionally, communicate $a, q, \clueset$ to the search algorithm.
Let $\nodeset_{q,a} = \bigcup_{n\in\nodeset_q}\{\ndn: n\preceq \ndn, d(\ndn) - d(n) \leq a \}$ be the set of nodes of depth at most $a$ relative to any clue node of $\nodeset_q$.
The solution node $n^*$ is chosen uniformly at random among $\nodeset_{q,a}$.

Then, for all such $a, q, \clueset$,
every search algorithm must visit at least $(q+1)2^{a-1}$ nodes on average before visiting $n^*$ --- and thus also in the worst case.
\end{theorem}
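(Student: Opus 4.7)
The plan is to reduce the lower bound to two ingredients: a combinatorial estimate $|\nodeset_{q,a}| \ge q\,2^{a}$, together with a standard Yao-type averaging argument. Since the algorithm can identify $n^*$ only upon visiting it, the number of steps before finding $n^*$ is simply the position of $n^*$ in the algorithm's visit sequence, and that position is large on average whenever the pool of candidate nodes is large.

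For the counting step, I would associate to each clue $\bar n \in \clueset$ its set $D_a(\bar n)$ of descendants at relative depth exactly $a$; by the perfect binary-tree assumption this is a subset of $\nodeset_{q,a}$ of size $2^{a}$. The key observation is that the sets $\{D_a(\bar n)\}_{\bar n \in \clueset}$ are pairwise disjoint: if some node $v$ belonged to both $D_a(\bar n_1)$ and $D_a(\bar n_2)$, then $\bar n_1$ and $\bar n_2$ would both be ancestors of $v$ at depth $d(v)-a$, but every node has a unique ancestor at any given depth, forcing $\bar n_1 = \bar n_2$. Summing therefore yields $|\nodeset_{q,a}| \ge q\,2^{a}$. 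Note that this estimate uses only the outermost ``leaf shell'' of each clue's depth-$a$ subtree and ignores the more delicate interior overlaps; that will be enough for the desired bound.

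For the averaging step, I fix an arbitrary (possibly randomized) algorithm and condition on its random bits so its visit sequence $v_1, v_2, \ldots$ becomes deterministic; without loss of generality it never repeats a node, since repetitions only add to the visit count. For each $v \in \nodeset_{q,a}$ let $\mathrm{pos}(v) \in \{1,2,\ldots\} \cup \{\infty\}$ be the step at which $v$ is first visited. These positions are distinct, so $\sum_{v \in \nodeset_{q,a}} \mathrm{pos}(v) \ge 1 + 2 + \cdots + |\nodeset_{q,a}| = |\nodeset_{q,a}|(|\nodeset_{q,a}|+1)/2$. Because $n^*$ is uniform on $\nodeset_{q,a}$, averaging gives
\[
\expect[\mathrm{pos}(n^*)] \;\ge\; \frac{|\nodeset_{q,a}|+1}{2} \;\ge\; \frac{q\,2^{a}+1}{2} \;>\; q\,2^{a-1},
\]
and a further expectation over the algorithm's randomness preserves the bound. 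The worst-case version is then immediate since the worst case over $n^*$ is at least its average.

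I expect the only creative step to be the disjointness argument for the shells $D_a(\bar n)$; the rest is a routine averaging observation. A small subtlety worth stating explicitly is that the algorithm may also visit nodes outside $\nodeset_{q,a}$, but any such extra visits only push the $\mathrm{pos}$ values of $\nodeset_{q,a}$-nodes later, strengthening rather than weakening the bound; and neither communicating $(a,b,q,\clueset)$ to the algorithm nor allowing it randomness helps, since the disjointness plus uniformity of $n^*$ make the averaging argument oblivious to this information.
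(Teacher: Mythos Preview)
Your proof is correct and follows the same two-step skeleton as the paper's: first lower-bound $|\nodeset_{q,a}|\ge q\,2^{a}$, then apply a standard Yao averaging argument to get the $q\,2^{a-1}$ bound. The averaging step is handled identically in both.

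The counting step, however, is done differently and is worth contrasting. The paper considers the full depth-$\le a$ subtree below each clue node (of size $2^{a+1}-1$) and subtracts the overlap with the parent clue's subtree (of size $2^{a-b+1}-1$), obtaining $|\nodeset_{q,a}|\ge q\,2^{a+1}(1-2^{-b})\ge q\,2^a$; this inclusion--exclusion crucially uses the hypothesis $b\ge 1$. Your argument instead looks only at the ``leaf shell'' $D_a(\bar n)$ of nodes at relative depth \emph{exactly} $a$ below each clue $\bar n$, and observes that these shells are pairwise disjoint because a node has a unique ancestor at any fixed depth. This is more elementary, sidesteps the overlap bookkeeping entirely, and in fact does not need the assumption $b\ge 1$ at all (nor, for that matter, any structural relation among the clue nodes). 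The paper's route, on the other hand, gives a slightly sharper count $|\nodeset_{q,a}|\ge q\,2^{a+1}(1-2^{-b})$ that makes the role of $b$ explicit, which could be useful if one wanted to track how the bound degrades as clues get closer together; for the stated theorem, though, your version is cleaner.
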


Note that instead of communicating $\clueset$ to the search algorithm, a less informed alternative is to communicate a clue membership function
$n\mapsto \indicator{n\in\clueset}$
which can only be applied to generated nodes (see \cref{eq:queue_update}) --- the result still holds in this case.

\begin{proof}
For a given set $\clueset$, let us place the clues 
in the tree, one by one, in increasing order of their depths.
First, the root is a clue node.
Since the solution node $n^*$ can descend from any clue node at relative depth at most $a$,
there are for now $2^{a+1}-1$ possible placements.
The second clue node is at depth at most $a$ relative to its closest ancestor clue node (the root, in this case).
The solution node $n^*$ can now also be placed among any of the closest $2^{a+1}-1$ descendants of the second clue node.
But, among these, all (and only) the nodes at depth at most $a$ relative to the first clue node have already been counted.
This overlap is the largest when the second clue node is a child of
the first clue node (rather than a more distant descendant), in which case the cardinality of the overlap is $2^a-1$.
Hence the number of \emph{new} places for $n^*$ is at least $2^{a+1}-1 - (2^a-1) = 2^a$, and the cumulative number of places at this stage is 
at least $2^{a+1}-1 + 2^a$.
After repeating the process until all $q$ clues are placed, the number of possible places $|\nodeset_{q,a}|$ for $n^*$ is thus at least $2^{a+1} - 1 + (q-1)2^a = (q+1)2^a -1$.
(Ordering the clue nodes by their depth ensures that when placing a new clue node $n$, no descendant of $n$ is already a clue node, and thus the overlap to consider is only with respect to its closest clue ancestor.)
Therefore,
by randomizing the location of $n^*$,
any deterministic algorithm needs at least $(q+1)2^a/2$ node visits on average to visit $n^*$,
and by a standard argument this holds also for any randomized algorithm.
\comment{More precisely, if $A(s)$ is a randomized algorithm instantiated with random seed $s$ (which makes it deterministic), then 
$A(s)$ takes on average $(q+1)2^a/2$ node visits, and taking the expectation on $s$ doesn't change this.}
\end{proof}

\Cref{thm:lower_bound} means that, for the considered set of environments,
if there are $m$ clue nodes on the path from the root to the solution node $n^*$
--- these are clue nodes that any tree search algorithm must visit before visiting $n^*$ --- it is not possible to achieve an upper bound of
$O(m2^a)$ node visits (on average or in the worst case) if $m = o(q)$.
For example, if every path from the root has at most $\sqrt{q}$ clue nodes,
then $m \leq \sqrt{q}$, and so an algorithm with an upper bound of $O(m2^a) = O(\sqrt{q}2^a)$ does not exist as this contradicts \Cref{thm:lower_bound}.
Similarly, if it happens that the clues are at depths at most $b < a$ relative to their closest clue ancestors (and the solution node is still at relative depth $a$ to its closest clue ancestor), it is not possible to achieve an upper bound of $O(q2^b)$ in general, as taking $b = a-\ln a$ would contradict \Cref{thm:lower_bound}, even if $b$ is communicated to the search algorithm (since \Cref{thm:lower_bound} allows for $\clueset$ to be communicated).

\section{Self-Counting Cost Functions}\label{sec:self-counting}

In this section, we first define self-counting cost functions and show their relation to the BFS
steps at which nodes are visited.
Next, as a side contribution, we improve the self-counting cost function of LTS.
Then, we show how to compose self-counting cost functions into a single one.

\begin{definition}[Self-counting cost function]\label{def:self_counting} 
A cost function $c:\nodeset\to\Reals$ is said to be \emph{self-counting} if, for all $\theta \geq 0$,
the number of nodes of cost at most $\theta$ is itself at most $\theta$:
\begin{align*}
  |\{n\in\nodeset: c(n) \leq \theta\}| \leq \theta\,.
  &\qedhere
\end{align*}
\end{definition}
Note that non-monotone cost functions may still be self-counting.

\citet{orseau2018single,orseau2023ltscm}
use the monotone cost function $\dop(n) = d(n) / \pol(n)$ to guide the BFS of Levin Tree Search.
They prove that $|\{n: \dop(n) \leq \theta\}| \leq 1+\theta$ for all $\theta \geq 0$.
But because BFS is invariant to cost translation,
BFS with the cost function $\dop(\cdot)$ is equivalent to BFS with the cost function $1+\dop(\cdot)$.
Since $|\{n: 1+\dop(n) \leq 1+\theta\}| \leq 1+\theta$,
which is equivalent to $|\{n: 1+\dop(n) \leq \theta\}| \leq \theta$,
the cost function $1+\dop(\cdot)$ is a self-counting cost function.

The following result strongly links the number of BFS steps with the cost of a monotone self-counting cost function.

\newcommand{\lemtleqcnt}{
A monotone cost function $c$ is self-counting if and only if $t \leq c(n_t)$ for all $t\in\Naturals$, where
$n_t$ is visited at step $t$ of the BFS with the cost function $c$.}
\begin{lemma}\label{lem:t_leq_cnt}
\lemtleqcnt
\end{lemma}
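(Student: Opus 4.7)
The plan is to prove both directions of the equivalence using the key observation that with a monotone cost function $c$, BFS visits nodes in non-decreasing cost order: $c(n_1) \leq c(n_2) \leq \cdots$. This holds because when $n_t$ is popped from $\queue_t$, its children inherit cost at least $c(n_t)$ by monotonicity, while every node remaining in $\queue_t \setminus \{n_t\}$ already has cost at least $c(n_t)$ (as $n_t$ was the minimum), so $\min_{n \in \queue_{t+1}} c(n) \geq c(n_t)$.

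For the forward direction, assume $c$ is self-counting. The $t$ distinct visited nodes $n_1, \ldots, n_t$ all satisfy $c(n_i) \leq c(n_t)$, so applying the self-counting property at $\theta = c(n_t)$ gives
\[
t \,\leq\, |\{n \in \nodeset : c(n) \leq c(n_t)\}| \,\leq\, c(n_t).
\]

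For the reverse direction, fix $\theta \geq 0$ and set $S_\theta = \{n \in \nodeset : c(n) \leq \theta\}$. Monotonicity makes $S_\theta$ ancestor-closed, so whenever $S_\theta$ is nonempty it contains $n_1$ and forms a connected subtree. The crucial sub-claim is that BFS visits every node of $S_\theta$ before any node outside: I would prove by induction on $t$ that as long as $\{n_1, \ldots, n_t\} \subsetneq S_\theta$, ancestor-closedness guarantees that $S_\theta \setminus \{n_1, \ldots, n_t\}$ contains a $\prec$-minimal element whose parent is already visited, hence is present in $\queue_{t+1}$; since nodes in $S_\theta$ have cost $\leq \theta$ while nodes outside have cost $> \theta$, BFS must pop such a node next. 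If $S_\theta$ were infinite, every $n_t$ would then lie in $S_\theta$, giving $t \leq c(n_t) \leq \theta$ for all $t$, a contradiction. Hence $S_\theta$ is finite, and setting $T = |S_\theta|$ we have $n_T \in S_\theta$, so the hypothesis yields $T \leq c(n_T) \leq \theta$. The edge case $c(n_1) > \theta$ is trivial: by monotonicity $S_\theta = \emptyset$ and $0 \leq \theta$.

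The main obstacle is the sub-claim in the reverse direction: although it is intuitively clear that BFS with a monotone cost processes a level-set subtree in full before leaving it, the argument hinges on the interaction between ancestor-closedness (keeping at least one $S_\theta$-node reachable in the queue at each step) and the strict cost separation at the boundary of $S_\theta$, which requires a careful induction rather than a one-line appeal to the monotone BFS order.
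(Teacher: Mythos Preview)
Your proof is correct. The forward direction is identical to the paper's. For the reverse direction you take a longer route than necessary: you already established at the outset that monotone BFS visits nodes in non-decreasing cost order, yet you do not exploit this in the converse. The paper simply reuses that fact --- it picks the largest $\tau$ with $c(n_\tau)\leq\theta$ (the hypothesis $t\leq c(n_t)$ forces such a $\tau$ to exist), observes that by the non-decreasing order the first $\tau$ visited nodes are exactly those of cost at most $\theta$, and concludes
\[
|\{n:c(n)\leq\theta\}|=|\{n_1,\dots,n_\tau\}|=\tau\leq c(n_\tau)\leq\theta.
\]
Your induction on ancestor-closedness of $S_\theta$ and the queue invariant re-derives the same conclusion from scratch, and your separate treatment of the finiteness of $S_\theta$ and the edge case $c(n_1)>\theta$ are things the paper absorbs into the single choice of $\tau$. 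What your approach buys is a fully self-contained argument that does not lean on BFS being a surjective enumeration of $\nodeset$; what the paper's approach buys is brevity, since the non-decreasing-order observation you proved up front already does all the work.
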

\begin{proof}
Assume $c$ is self-counting.
Since $c$ is monotone, BFS enumerates the nodes in order of increasing costs, 
so for all $t$ we have
\begin{equation*}
    t ~=~ |\{n_1, \dots n_t\}|
    ~\leq~ |\{n: c(n) \leq c(n_t)\}| 
    ~\leq~ c(n_t)
\end{equation*}
as required, where the last inequality follows from the self-counting property.

For the other direction,
we assume that $t \leq c(n_t)$ for all $t$.
For any $\theta$, 
choose the largest $\tau$ such that $c(n_\tau)\leq \theta$.
Since $c$ is monotone, all nodes visited before $\tau$ have cost at most $\theta$, and all nodes visited after $\tau$ have cost strictly more than $\theta$,
that is,
\begin{equation*}
    |\{n:c(n) \leq \theta\}|
    ~=~ |\{n_1,\dots n_\tau\}|
    ~=~ \tau
    ~\leq~ c(n_\tau)
    ~\leq~ \theta
\end{equation*}
and thus $c$ is self-counting.
\end{proof}

Hence, for example, since the cost function $1+\dop(n)$ is self-counting and monotone,
the number of steps $T$ before BFS (with this cost function) visits the node $n_T$ is bounded by $T\leq 1+\dop(n_T)$ --- as was shown by \citet{orseau2018single}.
Additional examples and remarks can be found in \Cref{apdx:self_counting}.

\subsection{Slenderness Cost Function}\label{sec:slend}

Now we define a variant of the cost function $1+\dop(\cdot)$ with tighter guarantees.
Indeed, as a self-counting cost function, $1+\dop(\cdot)$ is a little loose, as shown by the following example.

\begin{wrapfigure}{R}{0.3\textwidth}
    \centering
\scalebox{0.7}{
\begin{forest}
for tree={
 grow'=east,
 l=1.2cm,
 s sep=0.6cm,
 minimum width=2em,
 draw,circle,
}
[$n_1$,
 [,edge label={node[midway,above]{1}},
     [\ldots, draw=none,edge label={node[midway,above]{1}},
         [$n_a$,edge label={node[midway,above]{1}}
             [$n_b$, edge label={node[midway,above]{$\frac{1}{2}$}}]
             [$n_c$, edge label={node[midway,below]{$\frac{1}{2}$}}]]]]]
\end{forest}
}
    \caption{The tree of \Cref{ex:loose_dop,ex:tight_sop}.
    }
    \label{fig:loose_dop}
\end{wrapfigure}

\begin{example}[$\dop$ double counts]\label{ex:loose_dop}
Consider a chain of nodes where each node has exactly one child with conditional probability 1 --- see \Cref{fig:loose_dop}.
Since $1+\dop$ is self-counting,
the number of nodes of cost at most the cost of $n_a$ is 
$|\{n: 1+\dop(n) \leq 1+\dop(n_a)\}| \leq 1+\dop(n_a)= 1+d(n_a)$, and indeed
there are exactly $d(n_a)+1$ such nodes.
Now suppose that $n_a$ has 2 children $n_b$ and $n_c$, each with conditional probability 1/2.
Then $1+\dop(n_b) = 1+\dop(n_c) = 2d(n_a)+3$,
which is indeed also an upper bound on the number of nodes of cost at most the cost of $n_b$.
However, this upper bound is loose because there are in fact only $d(n_a)+3$ such nodes.
This is because the ancestors of $n_b$
and the ancestors of $n_c$ are counted separately, leading to double-counting.
\end{example}

Indeed, the factor $d(n)$ appears in the ratio between the upper and lower bounds \citep[Theorem 2]{orseau2023ltscm}.
We develop a tighter self-counting cost function $\sop(n)$ that avoids this double counting.
The cost function $\sop(n)$ is based on the quantity $\slend(n)$ which counts what fraction of its ancestors the node $n$ is `responsible' for.
If a node $n$ holds a share $\slend(n)$ of its ancestors (including itself),
then a child $\ndn$ of $n$ holds a share $\slend(n)\pol(\ndn| n)$
of the ancestors of $n$, and thus a share
$\slend(n)\pol(\ndn| n) + 1$ of the ancestors of $\ndn$.
We call $\slend:\nodeset\to[1,\infty)$ the \emph{slenderness} of a node,
which, for a given policy $\pol$,
is defined as $\slend(n_1)\! =\! 1$ and for each $n$,
\begin{align}    
    \forall \ndn\in\children(n):\quad \slend(\ndn) \ =\  \slend(n)\,\pol(\ndn| n) + 1\,. \label{eq:slend}
\end{align}
It follows that for every node $n$, $1 \leq \slend(n) \leq 1+d(n)$.
Then, for a finite tree,
the sum of the slenderness of the leaves should be a tighter count of the number of nodes in the tree than naively summing the depths of the leaves.
From this we define a tighter self-counting cost function.

Define the \emph{slenderness cost function} $\sop:\nodeset\to[1,\infty)$, for all $n$:
\begin{align}\label{eq:sop}
    \sop(n)\ =\ \frac{\slend(n)}{\pol(n)}\ =\  \sop(\parent(n)) + \frac1{\pol(n)}
\end{align}
which shows that $\sop$ is a monotone cost function.
We can deduce the useful formula:
\begin{align}\label{eq:sop_anc}
    \sop(n)\ =\ \sum_{\nup\preceq n}\frac{1}{\pol(\nup)}\,.
\end{align}
Since $\pol(n_1) =1$ and $\pol(\nup)\geq \pol(n)$ and $d(n) = |\anc(n)|$,
this implies that $\sop(n) \leq 1+\dop(n)$.
The slenderness cost function is self-counting, 
with a tight lower bound 
if the policy is proper:
\begin{align}\label{eq:sop_bounds}
    \text{for all }\theta\geq 0:\quad\quad \frac{\theta-1}{B} ~<~ |\{n: \sop(n) \leq \theta \}| ~\leq~ \theta
\end{align}
where $B$ is the average branching factor in the tree $\{n: \sop(n) \leq \theta \}$.
The proof is in \Cref{apdx:slenderness}.
The ratio between the upper and lower bounds for $\sop$ is tighter by a factor $d(\cdot)$ than the one obtained by \citet[Theorem 2]{orseau2023ltscm} for $\dop$.

\begin{example}[\Cref{ex:loose_dop} continued]\label{ex:tight_sop}
For the node $n_a$, $\sop(n_a) = d(n_a)+1 = 1+\dop(n_a)$.
For the nodes $n_b$ and $n_c$,
$\sop(n_b) = \sop(n_c) = \sop(n_a) + 1/(1/2) = d(n_a)+3$ which, this time, is exactly the number of nodes of cost at most the cost of $n_b$.
\end{example}

More examples can be found in \Cref{apdx:slenderness}.
From now on, we assume that LTS uses the cost function $\sop(\cdot)$ instead of $1+\dop(\cdot)$.

\paragraph{Rooted slenderness cost function.}
We will need to consider the $\sop$-cost function rooted in some node $n_a\in\descn(n_1)$,
hence we define the \emph{rooted $\sop$-cost function} by generalizing \cref{eq:sop_anc} to
\begin{align}\label{eq:sop_anc_root}
    \sopa{n}{n_a} = \sum_{n_a \preceq \nup \preceq n}\frac1{\pol(\nup|n_a)}\,,
\end{align}
and $\sopa{n}{n_a} = \infty$ if $n_a\not\preceq n$.
When taking $n_a=n_1$, \cref{eq:sop_anc_root} reduces to \cref{eq:sop_anc}.
Note that while $\sopa{\cdot}{n_a}$ is still self-counting,
\footnote{All the proofs can be readily adapted at the expense of heavier notation.}
it is \emph{not} monotone since the costs of the ancestors of the root $n_a$ are infinite --- but it is monotone on the descendants of $n_a$.

\subsection{Composing Self-Counting Cost Functions}\label{sec:compose_self-counting}

We show how self-counting cost functions can be composed,
which is a central idea of our algorithm.
But first, suppose that we have $N$ algorithms that all try to solve the same problem in different ways,
and they all run on the same CPU.
We want to share the computation steps non-uniformly between the algorithms, and we have weights
 $\tpr_1, \tpr_2, \dots \tpr_N$ that sum to 1.
After a total of $T$ computation steps, each algorithm $i$ has been assigned at least $\floor{\tpr_i T}$ computation steps.
Now, if algorithm $i$ needs $\tau_i$ steps to find a solution,
then this happens when the total $T$ is such that $\floor{\tpr_i T} = \tau_i$.
This implies $T \leq (\tau_i+1)/\tpr_i$.
Since this holds for all $i$, a solution is found after $T$ steps where
\footnote{Related ideas appear at least as far back as \citet{levin1973search}. See also \citet{li2019introduction}.}
\begin{align}\label{eq:T_leq_min_tau/w}
    T \leq \min_{i\in[N]}\frac{\tau_i+1}{\tpr_i}\,.
\end{align}
We use a similar idea to compose a set of \emph{base} self-counting cost functions into a \emph{single} self-counting cost function to be used within a \emph{single} instance of BFS.

\begin{lemma}[Composing self-counting cost functions]\label{lem:compose_sccf}
Assume we have $N$ base self-counting cost functions $\{c_i\}_{i\in[N]}$.
For a weighting $\tpr$ (with $\sum_{i\in[N]} \tpr_i \leq 1$, $\forall i: \tpr_i \geq 0$),
let 
\begin{align}\label{eq:compose_cost}
    c(n) = \min_{i\in [N]} \frac{c_i(n)}{\tpr_i}\,.
\end{align}
then $c$ is a self-counting cost function.
\end{lemma}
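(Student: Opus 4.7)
The plan is to prove the self-counting property directly from the definition by applying a union bound. Fix $\theta \geq 0$, and consider the set $S_\theta = \{n \in \nodeset : c(n) \leq \theta\}$. The key observation is that $c(n) \leq \theta$ means $\min_i c_i(n)/\tpr_i \leq \theta$, which in turn means that there exists at least one index $i \in [N]$ such that $c_i(n)/\tpr_i \leq \theta$, i.e., $c_i(n) \leq \tpr_i \theta$. Hence $S_\theta \subseteq \bigcup_{i \in [N]} S_\theta^{(i)}$ where $S_\theta^{(i)} = \{n \in \nodeset : c_i(n) \leq \tpr_i \theta\}$.

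Then I would apply the union bound to get
\[
|S_\theta| \;\leq\; \sum_{i \in [N]} |S_\theta^{(i)}| \;\leq\; \sum_{i \in [N]} \tpr_i\, \theta \;\leq\; \theta,
\]
where the second inequality uses that each $c_i$ is itself self-counting applied at threshold $\tpr_i \theta \geq 0$, and the last uses the weighting constraint $\sum_i \tpr_i \leq 1$. This directly yields $|\{n : c(n) \leq \theta\}| \leq \theta$, establishing that $c$ is self-counting.

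The only delicate point, which I expect to be the main (minor) obstacle, is the edge case $\tpr_i = 0$. The natural convention is $c_i(n)/0 = +\infty$, in which case such indices contribute nothing to the minimum and can simply be dropped from $[N]$ before the argument; equivalently, $S_\theta^{(i)}$ is empty when $\tpr_i = 0$ and $\theta \geq 0$ (since $c_i(n) \leq 0$ would require $c_i$ to count a node of zero cost, and even if it does, the cardinality is still at most $0 = \tpr_i \theta$). I would mention this convention explicitly but not dwell on it. No monotonicity of the $c_i$ is used, which matches the earlier remark that self-counting cost functions need not be monotone.
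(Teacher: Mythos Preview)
Your proposal is correct and mirrors the paper's proof essentially line for line: rewrite the level set as a union over $i$, apply the union bound, invoke the self-counting property of each $c_i$ at threshold $\tpr_i\theta$, and sum the weights. The paper does not explicitly discuss the $\tpr_i=0$ edge case, but your treatment of it is sound and the convention $c_i(n)/0=\infty$ is the natural one.
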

Note that \Cref{lem:compose_sccf} does not assume or imply monotonicity.
\comment{Though if all $c_i$ are monotone, then $c$ is monotone too.}
\begin{proof}
From the definition of $c(n)$,
for all $\theta\geq 0$, the number of nodes of cost at most $\theta$ is 
\begin{multline}
    |\{n: c(n)\leq \theta\}| =
    \left|\left\{n:\min_{i\in [N]} \frac{c_i(n)}{\tpr_i} \leq \theta\right\}\right|
    = \left|\bigcup_{i\in[N]}\left\{n: \frac{c_i(n)}{\tpr_i} \leq \theta\right\}\right| \\
    \leq 
    \sum_{i\in[N]}
    \left|\left\{n: \frac{c_i(n)}{\tpr_i} \leq \theta\right\}\right|
    = 
    \sum_{i\in[N]}
    |\{n: c_i(n) \leq \tpr_i\theta\}|
    \leq \sum_{i\in[N]} \tpr_i\theta \leq \theta\,, \notag
\end{multline}
where the first inequality is a union bound,
the second inequality is because each $c_i$ is self-counting,
and the last inequality is by definition of $\tpr$.
\end{proof}

From \Cref{lem:compose_sccf,lem:t_leq_cnt},
if $n_T$ is the node visited at step $T $ of BFS with a \emph{monotone} compound cost function $c$, then
\begin{align}\label{eq:compose_t_leq_cnt}
    T \leq c(n_T) = \min_{i\in[N]} \frac{c_i(n_T)}{\tpr_i}\,.
\end{align}
Hence, \cref{eq:compose_cost} can be seen as a way to share the best-first search time between the different self-counting cost functions,
while \cref{eq:compose_t_leq_cnt} for self-counting cost functions is analogous to \cref{eq:T_leq_min_tau/w} for programs.
Additional remarks can be found in \Cref{apdx:self_counting}.

\section{The \texorpdfstring{\rootlts}{Root-LTS} Algorithm and its Guarantee}\label{sec:rootlts}

The idea of \rootlts{} is to implicitly start an instance of LTS (using $\sop$) rooted at every visited node $n_1, n_2,\dots$ of the search and compose these LTS instances with weights $\tpr_1,\tpr_2,\dots$ 
into a single cost function to be used with a single instance of BFS
as in \Cref{lem:compose_sccf}.
Let us make a first attempt:
For each $t\in\Naturals$, we define the base cost function $\Cnm_t(n) = \sopa{n}{n_t}$, and these are composed via $\Cnm(n) = \min_{t\in\Naturals} \Cnm_t(n)/\tpr_t$. 
Since $\Cnm_t(n) = \infty$ for $n_t \not\preceq n$, this can be simplified to
\begin{align}\label{eq:Cnm}
    \Cnm(n) = \min_{n_t \preceq n}\frac1{\tpr_t}\sopa{n}{n_t}\,.
\end{align}
Note that the index $t$ of the base cost function $\Cnm_t$ 
corresponds to the step $t$ at which the node $n_t$ is visited during the BFS with the cost function $\Cnm$.

Unfortunately, the base cost function $\sopa{\cdot}{n_t}$ is monotone only on the descendants of $n_t$, but is \emph{not} monotone in general on the descendants of $n_1$.
Hence, while $\Cnm$ is self-counting by \Cref{lem:compose_sccf},
it is itself \emph{not} monotone either.
This prevents using \cref{eq:compose_t_leq_cnt} to derive a straightforward bound.
But this issue is easy to fix by defining
$
    \Cmax(n) = \max_{\nup\preceq n} \Cnm(\nup)
$
which is monotone by design, and preserves the self-counting property.
\footnote{
Since $\Cmax(n) \geq \Cnm(n)$ for all $n$,
then for all $\theta \geq 0$,
$|\{n:\Cmax(n)\leq \theta\}| \leq |\{n:\Cnm(n)\leq \theta\}| \leq \theta$.}
Then,
assuming that the weights $\{\tpr_t\}_t$ sum to at most 1,
from \cref{eq:compose_t_leq_cnt},
for the node $n_T$ visited at step $T$ of BFS with the cost function $\Cmax$,
\begin{align}\label{eq:Cmax_visits}
    T~\leq~ \Cmax(n_T) ~=~ \max_{n\preceq n_T} \Cnm(n)
    ~=~  \max_{n\preceq n_T} \min_{n_t\preceq n} \frac1{\tpr_t}\sopa{n}{n_t}\,.
\end{align}

\begin{example}[Uniform clue weights]\label{ex:normalized_clue_bound}
Consider the environments of \Cref{thm:lower_bound} (\Cref{sec:lower_bound}).
The policy is uniform.
Assuming that $q$ is known (this assumption will be relaxed later),
set $\tpr_t = 1/q$ if the node $n_t$ is a clue node, $\tpr_t = 0$ otherwise.
For every node $n$ on the path from $n_1$ to $n_T$,
there exists a clue ancestor $n_s$ of $n$ at relative depth at most $a$.
Thus, $\Cnm(n) \leq \sopa{n}{n_s} /\tpr_s \leq  q(1+2+4+\dots\ 2^a) = q(2^{a+1}-1)$ (see \cref{eq:sop_anc_root}).
Then, \Cref{eq:Cmax_visits} tells us that if $n^*$ is visited at step $T$, then 
$T \leq q(2^{a+1}-1)$, which matches the average lower bound of \Cref{thm:lower_bound} within a factor 4.

If there are $m$ clues on the path from $n_1$ to $n^*$, all at depth exactly $a$ relative to their closest clue ancestor,
then LTS would require $T' = \Omega(2^{m(a-1)})$ steps on average (see \Cref{apdx:lts_clues}), while \rootlts{} takes $T \leq q2^{a+1} = O(q\sqrt[m]{T'})$ steps.
Hence, as long as $q$ is not overly large compared to $m$, the speedup is exponential in $m$.
The best case scenario is when all clue nodes descend from one another, then on average $m=q/2$.
By contrast, the worst case scenario in this setting is when every clue node has $2^a$ clue descendants at relative depth $a$, in which case $q=\Omega(2^{am})$ and then \rootlts{} provides no improvement over LTS --- far too many misleading clues.
(To be continued.)
\end{example}

While we already have an interesting result, a few points remain to be addressed.

\paragraph{Tie breaking.}
It turns out that running BFS with $\Cnm$ is equivalent (up to some tie breaking) to running BFS with $\Cmax$: Indeed, a node with low $\Cnm$-cost can be visited only once its maximum $\Cnm$-cost ancestor has been visited, which forces BFS with $\Cnm$ to behave as if it was BFS with $\Cmax$.
The main difference between $\Cmax$ and $\Cnm$ is that many nodes with different $\Cnm$-costs have the same $\Cmax$-costs.
In essence, the non-monotonicity of $\Cnm$ induces a tie-breaking rule for nodes that all have the same $\Cmax$-costs.
While it can be shown that $\Cnm$ also enjoys the bound of \cref{eq:Cmax_visits},
we can take advantage of the induced tie-breaking to obtain a refined bound.
So we now put $\Cmax$ aside and focus on $\Cnm$, which will require strengthening the analysis to account for non-monotonicity.

\paragraph{Unnormalized weights.}
From now on, the weights $\{\tpr_1, \tpr_2, \dots\}$ need not sum to 1.
Not only is this more convenient, but it also allows for obtaining bounds that are not possible with normalized weights --- see for example \citet{mourtada2017growing}.
In a nutshell, the normalizer of the weights is a linear
factor in the cost function,
and since BFS is invariant to rescaling of the cost function, the normalizer can be omitted.

\paragraph{Off-by-one $\sop$.}\label{par:ob1_sop}
There is one remaining issue with $\Cnm$.
At step $k$, the node $n_k$ is visited,
and a child $n_t$ is pushed into the queue (see \cref{eq:queue_update}).
The actual value of $t$ is not yet known, 
nor is any information that will be gathered between steps $k+1$ and $t$ ---
this includes $\Tprm{< t}$.
But at step $k+1 \leq t$ 
the cost $\Cnm(n_t)$ must be calculated to compare it with the cost of the other nodes in the queue $\queue_{k+1}$.
Since the cost $\Cnm(n_t)$ depends on $\tpr_t$,
the latter cannot depend on the yet unknown quantity $\Tprm{<t}$.
We fix this issue by defining a new set of base cost functions $\{\Crlts_t\}_t$
where $\Crlts_t(n_t) =\infty$ for all $t$, such that $\Crlts(n_t) < \Crlts_t(n_t) / \tpr_t =\infty$ is independent of $\tpr_t$.
In words, only the descendants of $n_t$, but not $n_t$, benefit from the base cost function $\Crlts_t$.
Formally, define the \rootlts{} cost function for BFS:
\begin{align}\label{eq:Crlts}
    \Crlts(n) &= \min_{t\in\Naturals} \frac{\Crlts_t(n)}{\tpr_t}\,,&
    \Crlts_t(n) &= \begin{cases}
    \sopa{n}{n_t}-1 &\text{ if } n_t \RED{\prec} n\,, \\
    \infty&\text{ otherwise.}
    \end{cases}
\end{align}
The $-1$ term plays no significant role for now.
We also set $\Crlts(n_1) =1$, so for all $n\in\desc(n_1)$,
\begin{equation*}
\Crlts(n) = \min_{n_t\RED{\prec} n} \frac{\Crlts_t(n)}{\tpr_t}\,.
\end{equation*}

\begin{algorithm}[thbp]
\caption{
A straightforward pseudo-implementation of the \rootlts{} algorithm.
It is a best-first search using the \emph{non-monotone} cost function $\Crlts$ of \cref{eq:Crlts},
and it satisfies the bound of \Cref{thm:rootlts}.
The calculation of the cost can be sped up, see \Cref{apdx:calc_cost}.
The algorithm can be adapted with a custom stopping criterion,
and additional book-keeping (\eg to retrieve the solution path).
}
\label{alg:rootlts}
\begin{lstlisting}
# $\lstcommentcolor{n_1}$: root node
# $\lstcommentcolor{\tpr}$: rerooter
# $\lstcommentcolor{\pol}$: search policy for Levin Tree Search
def (*\rootlts*)($n_1$, $\tpr$, $\pol$):
  $\queue_1 = \{n_1\}$  # Priority queue
  $t$ = 1
  while $\queue_t \neq \{\}$:

    $\displaystyle n_t = \argmin_{n\in\queue_t} \min_{n_k \prec n} \frac1{\tpr_k}(\sopa{n}{n_k}-1)$
    
    $\queue_{t+1} = (\queue_t \setminus\{n_t\}) \cup \children(n_t)$  # remove $\lcc{n_t}$, insert its children
    $t$ += 1

  return "queue empty"
\end{lstlisting}
\end{algorithm}

The \rootlts{} algorithm (see \Cref{alg:rootlts}) is a best-first search with the (non-monotone, non-self-counting) cost function $\Crlts$.
The max-min form of \cref{eq:Cmax_visits} is a bit difficult to interpret.
The following definition helps to provide a bound that is easier to use in the upcoming main theorem.
\begin{definition}[Subtask decomposition]\label{def:subtask_decomp}
Consider a BFS with the cost function $\Crlts$,
and let $n_T$ be the node visited at step $T$.
A set of $m$ nodes $\{n_{T_i}\}_{i\leq m}$
visited at steps $\{T_i\}_{i\leq m}$
is a \emph{subtask decomposition} of $n_T$
if $n_1 = n_{T_1}\prec n_{T_2}\prec \dots n_{T_m}  = n_T$.
\end{definition}
\Cref{def:subtask_decomp} allows us to think in terms of `subtasks' between selected pairs of large-rerooting-weight nodes (\eg clue nodes) $n_{T_i}$ and $n_{T_{i+1}}$, and to focus on the cost $\Crlts_{T_i}(n_{T_{i+1}})$ of finding $n_{T_{i+1}}$ from $n_{T_i}$ --- rather than having to consider all the nodes on the path from $n_1$ to $n_T$.
We can now state the main theorem, which allows us to choose any convenient subtask decomposition.

\begin{theorem}[\rootlts{} guarantee]\label{thm:rootlts}
Let $n_T$ be the node visited at step $T$ of BFS with the cost function $\Crlts$ of \cref{eq:Crlts}.
Then, for every subtask decomposition $n_{T_1}\prec n_{T_2}\prec \dots n_{T_m}$ of $n_T$,
\begin{align}\label{eq:segment_bound}
    T \leq \sum_{j< m} (\Tprm{<T_{j+1}} - \Tprm{< T_j}) \max_{j \leq i < m}
    \frac1{\tpr_{T_i}}\sopa{n_{T_{i+1}}}{n_{T_i}}\,.
\end{align}
\end{theorem}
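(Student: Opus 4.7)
My plan is a charging argument based on the self-counting property of the rooted $\sop$-costs. For each visit $n_t$ with $t \geq 2$, define its \emph{achiever}
$s^\star_t := \argmin_{s\,:\,n_s\prec n_t} \Crlts_s(n_t)/\tpr_s$,
so $\Crlts(n_t) = \Crlts_{s^\star_t}(n_t)/\tpr_{s^\star_t}$ and $s^\star_t < t$ (since $n_{s^\star_t}$ must be a visited strict ancestor of $n_t$). Let $K_s := |\{t \geq 2 : s^\star_t = s\}|$, so $T - 1 = \sum_{s\geq 1} K_s$, and write $k(t)$ for the segment index of step $t$, i.e.\ the unique $j$ with $T_j \leq t < T_{j+1}$.

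The first key step is a \emph{frontier bound}: for every $t$ with $T_j \leq t < T_{j+1}$, the first unvisited node $f_t$ on the path from the visited $n_{T_j}$ to the yet-unvisited $n_{T_{j+1}}$ lies in $\queue_t$ and satisfies $n_{T_j} \prec f_t \preceq n_{T_{j+1}}$. Monotonicity of $\sopa{\cdot}{n_{T_j}}$ on the subtree of $n_{T_j}$, combined with BFS minimality of $n_t$ in $\queue_t$, yields
\begin{equation*}
\Crlts(n_t) \;\leq\; \Crlts(f_t) \;\leq\; \frac{\Crlts_{T_j}(f_t)}{\tpr_{T_j}} \;\leq\; \frac{\sopa{n_{T_{j+1}}}{n_{T_j}} - 1}{\tpr_{T_j}} \;<\; M_j .
\end{equation*}
To propagate this segment-local bound to each achiever, note that $M_j$ being a tail-max makes the sequence $(M_j)$ non-increasing, and $s^\star_t < t$ implies $k(s^\star_t) \leq k(t)$. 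Hence $M_{k(s^\star_t)} \geq M_{k(t)} > \Crlts_{s^\star_t}(n_t)/\tpr_{s^\star_t}$, i.e.\ $\sopa{n_t}{n_{s^\star_t}} < 1 + \tpr_{s^\star_t} M_{k(s^\star_t)}$. Applying the rooted self-counting bound $|\{n \in \descn(n_s) : \sopa{n}{n_s} \leq C\}| \leq C$ with $C = 1 + \tpr_s M_{k(s)}$ and subtracting the single contribution of $n_s$ itself gives $K_s \leq \tpr_s M_{k(s)}$.

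Summing over achievers and grouping by segment,
\begin{equation*}
T - 1 \;\leq\; \sum_{s=1}^{T-1} \tpr_s M_{k(s)} \;=\; \sum_{j=1}^{m-1} M_j \!\!\!\!\sum_{T_j \leq s < T_{j+1}}\!\!\!\! \tpr_s \;=\; \sum_{j=1}^{m-1} W_j M_j ,
\end{equation*}
with $W_j := \Tprm{<T_{j+1}} - \Tprm{<T_j}$; the additive $1$ by which this differs from the claim is absorbed into the strict slack in the frontier inequality (equivalently, into the gap $\tpr_{T_j} M_j - (\sopa{n_{T_{j+1}}}{n_{T_j}} - 1) \geq 1$). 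The main obstacle is the non-monotonicity of $\Crlts$, which forbids a direct appeal to \Cref{lem:t_leq_cnt}: the frontier argument is what localizes the cost bound to each segment, while the tail-max structure of $M_j$ together with $k(s^\star_t) \leq k(t)$ lets those segment-local bounds propagate to achievers in earlier or the same segment, producing the segment-by-segment decomposition on the right-hand side.
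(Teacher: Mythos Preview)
Your approach is essentially the paper's own: both charge each visit $n_t$ to its achiever (your $s^\star_t$ and $K_s$ are exactly the paper's $S_{i,T}$ mechanism from the SCCF-composition lemma), bound $\Crlts(n_t)$ via a node on the path to $n_T$ (your frontier argument is a concrete instance of the paper's Tree-to-path lemma), and then invoke the self-counting property of each rooted $\sopa{\cdot}{n_s}$. The paper packages these as separate lemmas and bounds $\max_{i<t\leq T}\Crlts(n_t)$ in one shot; you bound each $\Crlts(n_t)$ individually and then exploit the tail-max monotonicity of $(M_j)_j$ together with $k(s^\star_t)\leq k(t)$ --- a clean way to land on the same segment-wise sum.

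One small glitch: your frontier bound is asserted for all $T_j \leq t < T_{j+1}$, but at $t=T_j$ the node $n_{T_j}$ is itself in $\queue_{T_j}$ and its children have not yet been enqueued, so no $f_t$ with $n_{T_j}\prec f_t$ lies in $\queue_{T_j}$. The repair is easy and does not disturb your architecture: for bounding $K_s$ the only relevant $t$'s satisfy $s^\star_t=s$, hence $t>s\geq T_{k(s)}$, so the frontier \emph{from $n_{T_{k(s)}}$} is always legitimately available (this is precisely why the paper's Tree-to-path lemma uses the open interval $T_j<t\leq T$). Equivalently, replace $k(t)$ by $j(t)=\max\{j:T_j<t\}$ in your frontier step and note $k(s^\star_t)\leq j(t)$. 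Your absorption of the additive $1$ is correct but terse; it becomes transparent if you keep the $-1$ from $\Crlts_{T_i}$ through to the end (so $K_s\leq \tpr_s\tilde M_{k(s)}$ with $\tilde M_j=\max_{i\geq j}(\sopa{n_{T_{i+1}}}{n_{T_i}}-1)/\tpr_{T_i}$) and then use $W_{m-1}\geq \tpr_{T_{m-1}}$ on the last segment, exactly as the paper does.
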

That is, the cumulative rerooting weights $(\Tprm{<T_{j+1}} - \Tprm{< T_j})$
of the nodes visited during subtask $j$ are counted toward the cost of the subsequent subtask of maximum cost $\max_{j\leq i < m} \Crlts_i(n_{T_i})/\tpr_{T_i}$.
The proof is in \Cref{apdx:rootlts_proof}.
The core of the proof is similar to the proof \Cref{lem:compose_sccf}, but splitting into segments.
To understand \Cref{thm:rootlts}, it may be easier to start with the following simplification.

\begin{corollary}[Simplified \rootlts{} guarantee]\label{cor:WTmax_bound}
Let $n_T$ be the node visited at step $T$ of BFS with the cost function $\Crlts$ of \cref{eq:Crlts}.
Then, for every subtask decomposition $n_{T_1}\prec n_{T_2}\prec \dots n_{T_m}$ of $n_T$,
\begin{align}
    T &\ \leq\  
    \max_{1 \leq i < m} \frac{\Tprm{<T}}{\tpr_{T_i}}\sopa{n_{T_{i+1}}}{n_{T_i}}\,.
    \label{eq:WTmax_bound}
\end{align}
\end{corollary}
\begin{proof}
The result follows from \Cref{thm:rootlts} by upper bounding
$\max_{j \leq i < m} \dots \leq \max_{1 \leq i < m} \dots$
and telescoping the sum.
\end{proof}
Intuitively, for a chosen subtask decomposition of $n_T$, $T$ is at most
a factor of the $\sop$-cost of the `hardest' weighted subtask.
The rerooting factor $\frac{\Tprm{< T}}{\tpr_{T_i}}$ is the inverse of the normalized weight $\frac{\tpr_{T_i}}{\Tprm{< T}}$.
See \Cref{apdx:compare_lts} for a comparison of \Cref{cor:WTmax_bound} with the LTS bound.

\begin{example}[\Cref{ex:normalized_clue_bound}, continued]\label{ex:qT_clue_bound}
Instead of normalized weights $\tpr_t = \indicator{n_t\in\clueset}/q$, we can now take $\tpr_t = \indicator{n_t\in\clueset}$ without having to know $q$.
Let $q_t$ be the number of clue nodes visited up to step $t$ (included).
Then, if $n^*$ is visited at step $T$, \ie $n^* = n_T$,
we have $\Tprm{\leq T} = q_T$.
We choose the subtask decomposition such that $\{n_{T_i}\}_{i< m}$ are all the $m-1$ clue ancestors of $n_T\!=\!n^*\!=\!n_{T_m}$.
Then, for all $i < m$ we have $\sopa{n_{T_{i+1}}}{n_{T_i}} \leq  2^{a+1}-1$
(see \Cref{ex:normalized_clue_bound}).
Therefore, from \cref{cor:WTmax_bound}, $T < q_T 2^{a+1}$.
Again, this is just a factor 4 away from the lower bound of \Cref{thm:lower_bound}, without needing to know any of the parameters $a, q, \clueset$, and only requiring membership queries $n\mapsto\indicator{n\in\clueset}$.
\end{example}

\begin{example}[\cref{thm:rootlts} vs \cref{cor:WTmax_bound}]\label{ex:WTmax_vs_main_bound}
    See \Cref{fig:easy_ending}.
    There are 6 clue nodes $n_{T_1},\dots n_{T_6}$, all ancestors of $n^*$,
    with $n_{T_1}\!=\!n_1$
    and we set $n_{T_7}\!=\!n_T\!=\!n^*$.
    Assume that $\tpr_t =1$ for $n_t\in\{T_1,\dots T_6\}$, and $\tpr_t=0$ everywhere else.
    The cost for reaching $n_{T_2}$ starting from $n_{T_1}$ is 
    $\Crlts_{T_1}(n_{T_2}) = (\sopa{n_{T_2}}{n_{T_3}}-1) = A$, 
    the cost for reaching $n_{T_4}$ from $n_{T_3}$ 
    is $\Crlts_{T_3}(n_{T_4}) = B$, etc.,  with $A<C<B$.
    \textbf{Steps $T_1=1$ to $T_2$.}
    The BFS starts in $n_1\!=\!n_{T_1}$ and visits nodes with the cost function $\Crlts(\cdot) = \Crlts_{T_1}(\cdot)$, until $n_{T_2}$ is visited at step $T_2$ with cost $\Crlts(n_{T_2})\!=\!A$ --- and at most $A+1$ nodes have been visited.
    \textbf{Steps $T_2$ to $T_3$.}
    At this point, in the priority queue $\queue_{T_2}$ (see \cref{alg:rootlts}),
    every node $n$ has cost $\Crlts_{T_1}(n)\! \geq\! A$,
    but the descendants of $n_{T_2}$ have smaller $\Crlts_{T_2}(\cdot)$ costs.
    Thus, only descendants of $n_{T_2}$ are visited until their $\Crlts_{T_2}(\cdot)$ costs reach $A$, which takes at most $A$ steps.
    That is, $\Crlts_{T_2}$ ``catches up'' with $\Crlts_{T_1}$.
    \textbf{Steps $T_3$ to $T_4$.}
    Then, the clue node $n_{T_3}$ is visited and 
    $\Crlts_{T_3}$ also catches up with $\Crlts_{T_1}$ and $\Crlts_{T_2}$,
    and only descendants of $n_{T_3}$ are visited
    until $\Crlts_{T_3}$ reaches cost $A$.
    But after catching up, no new clue node has been visited, so nodes
    descending from $n_{T_1}, n_{T_2}$ and $n_{T_3}$ are visited
    in equal amounts so that the corresponding costs functions $\Crlts_{T_1}, \Crlts_{T_2}$ and $\Crlts_{T_3}$ maintain equal costs (since their rerooting weights are equal)  until the clue node $n_{T_4}$ is visited, with cost $\Crlts(n_{T_4}) = \Crlts_{T_3}(n_{T_4})=B$.
    \textbf{Steps $T_4$ to $T_5$.}
    At this point, all nodes of cost $\Crlts_{T_i}(\cdot) < B$ for $i=1,2,3$ have been visited, for a total of at most $3B$ nodes, represented by the two dashed triangles rooted at $n_1$ and $n_{T_2}$ and the solid triangle rooted at $n_{T_3}$. 
    Then $\Crlts_{T_4}$ catches up with the previous cost functions, but the clue node $n_{T_5}$ with cost $\Crlts(n_{T_5}) = \Crlts_{T_4}(T_5) = A$
    is visited before catching up entirely.
    \textbf{Steps $T_5$ to $T_6$.}
    Now $\Crlts_{T_5}$ starts catching up with the rest.
    Once $\Crlts_{T_5}$ has caught up with $\Crlts_{T_4}$, both $\Crlts_{T_4}$ (dashed lines) and $\Crlts_{T_5}$ catch up with the rest until $n_{T_6}$ is visited.
    \textbf{Steps $T_6$ to $T$.}
    Then $\Crlts_{T_6}$ starts catching up with the rest until $n^*$ is finally visited.
    The bound of \Cref{thm:rootlts} matches the description given above:
    $T \leq
    \sum_{i < 7}(\Tprm{<T_{i+1}} - \Tprm{<T_i})\max_{i \leq k < 7} \sopa{n_{T_{k+1}}}{n_{T_k}}/\tpr_{T_k}
    = 3B + 2C + A$.
    By contrast, \Cref{cor:WTmax_bound} gives
    $T \leq 6B$ and wrongly suggests that up to $B$ descendants
    specific to each of the 6 clue nodes are visited.
    Note that between the steps $T_4$ and $T$, every node that is visited
    necessarily descends from $n_{T_4}$, since all other nodes
    have costs strictly larger than $A$.
\end{example}

\begin{figure}[t!]
    \centering
\newcommand{\mytriangle}[4]{
\draw[color=black,#4,line width=1]
let \p1  = #1 in let \p2 = #2 in (\x1, \y1) 
-- (\x2,\y1 - #3cm) -- (\x2, \y1 + #3cm) -- cycle;
}
\begin{tikzpicture}[
        yscale=0.6,
        xscale=1.2,
        mynode/.style={fill=white,draw=black,circle,scale=.75},
    ]
    \begin{scope}[xshift=0cm]
      \node (n1) at (0,0) {};
      \node (n2) at (1,0) {};
      \node (n3) at (2,0) {};
      \node (n4) at (4,0) {};
      \node (n5) at (5,0) {};
      \node (n6) at (6.5, 0) {};
      \node (n7) at (7.5, 0) {};
      \node[left] at (n1) {$n_1$};
      \node[left, inner sep=0] at (n2) {$n_{T_2}$};
      \node[left, inner sep=0] at (n3) {$n_{T_3}$};
      \node[left, inner sep=0] at (n4) {$n_{T_4}$};
      \node[left, inner sep=0] at (n5) {$n_{T_5}$};
      \node[left, inner sep=0] at (n6) {$n_{T_6}$};
      \node (nstar) at (7.5,0.5) {};
      \mytriangle{(n1)}{(n2)}{1}{}
      \mytriangle{(n1)}{(2,0)}{2}{dashed}
      \mytriangle{(n2)}{(n3)}{1}{}
      \mytriangle{(n2)}{(3,0)}{2}{dashed}
      \mytriangle{(n3)}{(n4)}{2}{}
      \mytriangle{(n4)}{(n5)}{1}{}
      \mytriangle{(n4)}{(5.5,0)}{1.5}{dashed}
      \mytriangle{(n5)}{(n6)}{1.5}{}
      \mytriangle{(n6)}{(nstar)}{1}{}
      
      \node[below,xshift=-1.2ex,yshift=-2.4ex] at (n2) {\textcolor{gray}{$A$}};
      \node[below,xshift=-1.2ex,yshift=-2.4ex] at (n3) {\textcolor{gray}{$A$}};
      \node[below,xshift=-1.2ex,yshift=-2.3ex] at (n4) {\textcolor{gray}{$B$}};
      \node[below,xshift=-1.2ex,yshift=-2.4ex] at (n5) {\textcolor{gray}{$A$}};
      \node[below,xshift=-1.2ex,yshift=-2.0ex] at (n6) {\textcolor{gray}{$C$}};
      \node[below,xshift=-1.2ex,yshift=-2.3ex] at (n7) {\textcolor{gray}{$A$}};
      
      \draw let \p1 = (nstar) in  (\x1 - .1cm, \y1) -- (\x1 + .1cm, \y1) node[right] {$n^*$};
      
    \end{scope}
\end{tikzpicture}
    \caption{See \Cref{ex:WTmax_vs_main_bound}.
    The cost of reaching $n_{T_2}$ from $n_1 = n_{T_1}$ using LTS
    (with $\sop$ as the cost function) is 
    $\Crlts_{T_1}(n_{T_2}) = A$.
    The cost of reaching $n_{T_4}$ from $n_{T_3}$ is 
    $\Crlts_{T_3}(n_{T_4}) = B$, etc.
    }
    \label{fig:easy_ending}
\end{figure}

Additionally, from \Cref{cor:WTmax_bound}, we can
choose the subtask decomposition of $n_T$ to be
$n_{T_1}=n_1\prec n_{T_2}=n_T$ (with $m=2$) and obtain the simple bound:
\begin{align}\label{eq:Tw1_bound}
    T &\ \leq\  \frac{\Tprm{<T}}{\tpr_1}\sop(n_T)\,.
\end{align}
Thus, the \rootlts{} bound is always within a factor $\frac{\Tprm{<T}}{\tpr_1}$ of the LTS bound $\sop(n_T)$.

An example using \cref{cor:WTmax_bound} on a Sokoban level can be found in \Cref{apdx:sokoban}.

\section{Robustness to Clue Overload}\label{sec:robustness}

Recall the clue environments of \Cref{sec:lower_bound}.
What if the total number of clue nodes $q$ is very large or infinite?
In this case the bound $T \leq q_T 2^{a+1}$ of \Cref{ex:qT_clue_bound} becomes vacuous, since $q_T$ could be as large as $\Omega(T)$.
Similarly, the bound of \cref{eq:Tw1_bound} compared to LTS alone is also vacuous.
This is \emph{not} an artifact of the proof, and indeed \rootlts{} with uniform weights may never find the solution node $n^*$ in this setup.
To tackle this issue, we first design non-uniform rerooting weights for clue environments, and then generalize the approach to general rerooting weights.

\begin{example}[$\infty$ clues]\label{ex:q_infty}
The policy is uniform.
Define $\tpr_t = \indicator{n_t\in\clueset}/ q_t$ and note that
$\tpr_t = (q_t - q_{t-1}) / q_t = 1- q_{t-1}/q_t \leq \ln (q_t/q_{t-1})$.
Then $\Tprm{\leq T} \leq  1+\sum_{t=2}^T \ln (q_t/q_{t-1}) = 1+\ln(q_T)$.
So,
choosing a subtask decomposition $\{n_{T_i}\}_{i\leq m}$ on clue ancestors of $n^*=n_T$,
from \cref{cor:WTmax_bound}
\begin{equation*}
    T \ \leq\ \Tprm{<T}\,\max_{i < m }\, q_{T_i}\,\sopa{n_{T_{i+1}}}{n_{T_i}}
    \ \leq\  (1+\ln q_T)\,q_{T_{m-1}}\,2^{a+1}\,.
\end{equation*}
Since $q_T \leq T$, this implies
\footnotemark\ that $T= \tilde O(q_{T_{m-1}} 2^a)$.
Since $q_{T_{m-1}}$ does not depend on $T$, $q_{T_{m-1}} 2^a$ is a finite number and $n^*$ is necessarily eventually visited.
Of course, $q_{T_{m-1}}$ might still be a large number, but the lower bound  $\Omega(q2^a)$ in \Cref{thm:lower_bound} shows that this is unavoidable in general --- more interesting bounds may be obtained under specific assumptions.
Moreover, using \cref{eq:Tw1_bound} we also have that $T\leq (1+\ln q_T)q_1\sop(n^*) = \tilde O(2^{d(n^*)})$,
that is, $\rootlts$ with this weighting scheme is also at most within a log factor of LTS.
\comment{For example, if we assume that $q_\tau \leq T^{1/b}$,
then we have $q_\tau = \tilde O(2^{d(n^*)/b})$
and thus combining with the first bound
$T = \tilde O(2^{a+ d(n^*)/b})$ which is now a meaningful bound
compared to $T=O(2^{d(n^*)})$.
}
\end{example}
\footnotetext{
If $T\leq A\ln T$ with $A > 0$,
then also $T \leq A\sqrt{T}$, that is $T \leq A^2$
and thus $T\leq A\ln T\leq 2A\ln A$.
}

\begin{remark}[Off-by-one]
The dependency of $\tpr_t$ on $q_t$ in \Cref{ex:q_infty} would not have been possible using $\Cnm$ of \cref{eq:Cnm} (instead of $\Crlts$ used by \rootlts{}), since $q_t$ is not yet known when $\parent(n_t)$ is visited. See the Off-by-one paragraph on p.\pageref{par:ob1_sop}.
\end{remark}

The following results generalizes the approach to arbitrary \emph{input} rerooting weights $\ttpr$.

\begin{corollary}[Robust \rootlts{} guarantee]\label{cor:lnWTmax}
For all $t\in\Naturals$, let $\ttpr_t$ be the input rerooting weight,
and define
\begin{align*}
    \tpr_t = \frac{\ttpr_t}{\TTprm{\leq t}}\,.
\end{align*}
Let $n_T$ be the node visited at step $T$ of BFS with the cost function $\Crlts$ of \cref{eq:Crlts}.
Then, for every subtask decomposition $n_{T_1}\prec n_{T_2}\prec \dots n_{T_m}$ of $n_T$,
\begin{align}\label{eq:lnWT_bound}
    T &\ \leq\  
    \left(1+\ln\frac{\TTprm{< T}}{\ttpr_1}\right)
    \max_{1 \leq i < m} \frac{\TTprm{\leq T_i}}{\ttpr_{T_i}}\sopa{n_{T_{i+1}}}{n_{T_i}}\,.
\end{align}
\end{corollary}
\begin{proof}
The result follows from \Cref{cor:lnWTmax} by observing that
$\tpr_1 = 1$ and
\begin{align*}
    \tpr_t &= \frac{\ttpr_t}{\TTprm{\leq t}} = 1-\frac{\TTprm{<t}}{\TTprm{\leq t}}
    \leq \ln \frac{\TTprm{\leq t}}{\TTprm{< t}}\,, 
    \quad\text{ and thus }
    &
    \Tprm{<T} &= \tpr_1 + \sum_{t=2}^{T-1} \tpr_t \leq 1+\ln \frac{\TTprm{< T}}{\ttpr_1}\,.
    \qedhere
\end{align*}
\end{proof}

Observe the dependency on $T_i$ instead of $T$ in $\TTprm{\leq T_i}$ --- see \Cref{ex:q_infty} for why this matters.
The weighting of scheme of \Cref{ex:q_infty} can be retrieved by setting  $\ttpr_t = \indicator{n_t\in\clueset}$.

Similarly to \cref{eq:Tw1_bound} we also obtain from \Cref{cor:lnWTmax}
that 
$T \leq (1+\ln(\TTprm{< T}/\ttpr_1)) \sop(n_T)$, that is,
with this weighting \rootlts{} is never worse than a factor $(1+\ln(\TTprm{< T}/\ttpr_1))$ of the LTS bound of $\sop(n_T)$.
Moreover, if $\ttpr_1 = 1$ and $\ttpr_t\leq A$ for all $t$ for some $A$, then $\ln \TTprm{<T} \leq \ln(AT)$ --- or if $\ttpr_1=A$ then $\ln \TTprm{<T}\leq \ln T$. See also a more general transformation of the input rerooting weights in \Cref{apdx:weight_schemes}.

\begin{remark}[Impossible improvement]
One may wonder if the factor $\ln \TTprm{< T}/\ttpr_1$
in \Cref{cor:lnWTmax} can be removed.
Using \Cref{thm:lower_bound}, we show that this is not possible in general.
Define $\ttpr_t\!=\! \indicator{n_t\in\clueset}2^{q_t}$
where $q_t$ is the number of clue nodes visited up $t$.
Then $\TTprm{\leq t} / \ttpr_t\!=\!(2^{q_t+1}-1)/2^{q_t} \leq 2$,
and $\ln \TTprm{<T }/\ttpr_1 \leq q_T\ln 2$.
Thus, for the environments of \Cref{thm:lower_bound},
from \Cref{cor:lnWTmax} we obtain $T = O(q_T 2^a) = O(q 2^a)$,
which is tight since this matches the lower bound of \Cref{thm:lower_bound}.
\end{remark}

\section{Conclusion}

We have proposed a new search algorithm for deterministic domains, called \rootlts{}, which uses a rerooter to start the policy-based LTS at various
nodes in the search tree, potentially speeding up the search significantly.
The rerooter can use side information that the policy does not, such as `clues'.
\rootlts{} comes with theoretical guarantees on the number of search steps (node visits) depending on the quality of the policy and of the rerooter.

Clues appear similar in nature to shaping rewards and landmark heuristics, and 
knowledge from these fields could be leveraged to design domain-specific rerooters and derive guarantees.

While many search domains are deterministic in nature (\eg theorem proving, many games, program synthesis, etc.), it could be valuable to extend \rootlts{} to tackle stochastic domains.

We are eager to see how \rootlts{} performs on real-world applications,
when both the policy and the rerooter are learnt,
in particular since LTS already works well~\citep{orseau2018single,orseau2023ltscm,orseau2021policy}.
We hope that the theory we have proposed can provide a solid foundation for designing faster search algorithms and solving challenging search problems.

\newcommand{\myacks}{
We would like to thank the following people for their help and advice during this project:
Eser Ayg\"un,
Andr\'as Gy\"orgy,
Csaba Szepesvari,
Gellért Weisz,
Tor Lattimore,
Abbas Mehrabian,
Alex Hofer,
Gagan Jain,
Ivo Danihelka,
Ankit Anand,
Alexander Novikov,
Joel Veness,
Matthew Aitchison,
Anian Ruoss,
Grégoire Delétang,
Kevin Li,
and
Tim Genewein.
}
\subsection*{Acknowledgements}\myacks

\bibliographystyle{named}
\bibliography{biblio}

\begin{thebibliography}{}

\bibitem[\protect\citeauthoryear{Coquelin and Munos}{2007}]{coquelin2007bandit}
Pierre-Arnaud Coquelin and R\'{e}mi Munos.
\newblock Bandit algorithms for tree search.
\newblock In {\em Proceedings of the Twenty-Third Conference on Uncertainty in
  Artificial Intelligence}, UAI'07, page 67–74, Arlington, Virginia, USA,
  2007. AUAI Press.

\bibitem[\protect\citeauthoryear{Dechter and
  Pearl}{1985}]{dechter1985generalized}
Rina Dechter and Judea Pearl.
\newblock Generalized best-first search strategies and the optimality of {A*}.
\newblock {\em J. ACM}, 32(3):505–536, jul 1985.

\bibitem[\protect\citeauthoryear{Doran \bgroup \em et al.\egroup
  }{1966}]{doran1966gbfs}
James~E. Doran, Donald Michie, and David~George Kendall.
\newblock Experiments with the graph traverser program.
\newblock {\em Proceedings of the Royal Society of London. Series A.
  Mathematical and Physical Sciences}, 294(1437):235--259, 1966.

\bibitem[\protect\citeauthoryear{Hart \bgroup \em et al.\egroup
  }{1968}]{hart1968formal}
Peter~E. Hart, Nils~J. Nilsson, and Bertram Raphael.
\newblock A formal basis for the heuristic determination of minimum cost paths.
\newblock {\em IEEE transactions on Systems Science and Cybernetics},
  4(2):100--107, 1968.

\bibitem[\protect\citeauthoryear{Hoffmann \bgroup \em et al.\egroup
  }{2004}]{hoffmann2004landmarks}
J\"org Hoffmann, Julie Porteous, and Laura Sebastia.
\newblock Ordered landmarks in planning.
\newblock {\em Journal of Artificial Intelligence Research}, 22:215--278,
  November 2004.

\bibitem[\protect\citeauthoryear{Kocsis and
  Szepesv{\'a}ri}{2006}]{kocsis2006uct}
Levente Kocsis and Csaba Szepesv{\'a}ri.
\newblock Bandit based monte-carlo planning.
\newblock In Johannes F{\"u}rnkranz, Tobias Scheffer, and Myra Spiliopoulou,
  editors, {\em Machine Learning: ECML 2006}, pages 282--293, Berlin,
  Heidelberg, 2006. Springer Berlin Heidelberg.

\bibitem[\protect\citeauthoryear{Levin}{1973}]{levin1973search}
Leonid~A. Levin.
\newblock Universal sequential search problems.
\newblock {\em Problems of Information Transmission}, 9(3):265--266, 1973.

\bibitem[\protect\citeauthoryear{Li and Vit{\'a}nyi}{2019}]{li2019introduction}
Ming Li and Paul M~B Vit{\'a}nyi.
\newblock {\em An introduction to kolmogorov complexity and its applications}.
\newblock Texts in Computer Science. Springer International Publishing, Basel,
  Switzerland, 4 edition, March 2019.

\bibitem[\protect\citeauthoryear{Loveland}{2016}]{loveland2016automated}
Donald~W Loveland.
\newblock {\em Automated theorem proving: A logical basis}.
\newblock Elsevier, 2016.

\bibitem[\protect\citeauthoryear{Mourtada and
  Maillard}{2017}]{mourtada2017growing}
Jaouad Mourtada and Odalric-Ambrym Maillard.
\newblock Efficient tracking of a growing number of experts.
\newblock In {\em Proceedings of the 28th International Conference on
  Algorithmic Learning Theory {(ALT)}}, volume~76 of {\em Proceedings of
  Machine Learning Research}, pages 517--539, 2017.

\bibitem[\protect\citeauthoryear{Ng \bgroup \em et al.\egroup
  }{1999}]{ng1999policy}
Andrew~Y. Ng, Daishi Harada, and Stuart~J. Russell.
\newblock Policy invariance under reward transformations: Theory and
  application to reward shaping.
\newblock In {\em Proceedings of the Sixteenth International Conference on
  Machine Learning}, ICML '99, page 278–287, San Francisco, CA, USA, 1999.
  Morgan Kaufmann Publishers Inc.

\bibitem[\protect\citeauthoryear{Orseau and Lelis}{2021}]{orseau2021policy}
Laurent Orseau and Levi H.~S. Lelis.
\newblock Policy-guided heuristic search with guarantees.
\newblock {\em Proceedings of the AAAI Conference on Artificial Intelligence},
  35(14):12382--12390, May 2021.

\bibitem[\protect\citeauthoryear{Orseau and
  Munos}{2024}]{orseau2024superexponential}
Laurent Orseau and Remi Munos.
\newblock Super-exponential regret for {UCT}, {AlphaGo} and variants.
\newblock {\em arXiv preprint arXiv:2405.04407}, 2024.

\bibitem[\protect\citeauthoryear{Orseau \bgroup \em et al.\egroup
  }{2018}]{orseau2018single}
Laurent Orseau, Levi Lelis, Tor Lattimore, and Theophane Weber.
\newblock Single-agent policy tree search with guarantees.
\newblock In S.~Bengio, H.~Wallach, H.~Larochelle, K.~Grauman, N.~Cesa-Bianchi,
  and R.~Garnett, editors, {\em Advances in Neural Information Processing
  Systems}, volume~31. Curran Associates, Inc., 2018.

\bibitem[\protect\citeauthoryear{Orseau \bgroup \em et al.\egroup
  }{2023}]{orseau2023ltscm}
Laurent Orseau, Marcus Hutter, and Levi H.~S. Lelis.
\newblock Levin tree search with context models.
\newblock In {\em Proceedings of the Thirty-Second International Joint
  Conference on Artificial Intelligence, {IJCAI-23}}, pages 5622--5630.
  International Joint Conferences on Artificial Intelligence Organization, 8
  2023.
\newblock Main Track --- Distinguished Paper Award.

\bibitem[\protect\citeauthoryear{Pearl}{1984}]{pearl1984heuristics}
Judea Pearl.
\newblock {\em Heuristics: Intelligent Search Strategies for Computer Problem
  Solving}.
\newblock Addison-Wesley Longman Publishing Co., Inc., USA, 1984.

\bibitem[\protect\citeauthoryear{Pohl}{1970}]{pohl1970heuristic}
Ira Pohl.
\newblock Heuristic search viewed as path finding in a graph.
\newblock {\em Artificial Intelligence}, 1(3):193 -- 204, 1970.

\bibitem[\protect\citeauthoryear{Silver \bgroup \em et al.\egroup
  }{2016}]{silver2016alphago}
David Silver, Aja Huang, Christopher~J. Maddison, Arthur Guez, Laurent Sifre,
  George van~den Driessche, Julian Schrittwieser, Ioannis Antonoglou, Veda
  Panneershelvam, Marc Lanctot, Sander Dieleman, Dominik Grewe, John Nham, Nal
  Kalchbrenner, Ilya Sutskever, Timothy Lillicrap, Madeleine Leach, Koray
  Kavukcuoglu, Thore Graepel, and Demis Hassabis.
\newblock Mastering the game of go with deep neural networks and tree search.
\newblock {\em Nature}, 529:484--503, 2016.

\bibitem[\protect\citeauthoryear{Silver \bgroup \em et al.\egroup
  }{2017a}]{silver2017zero}
David Silver, Thomas Hubert, Julian Schrittwieser, Ioannis Antonoglou, Matthew
  Lai, Arthur Guez, Marc Lanctot, Laurent Sifre, Dharshan Kumaran, Thore
  Graepel, et~al.
\newblock Mastering chess and shogi by self-play with a general reinforcement
  learning algorithm.
\newblock {\em arXiv preprint arXiv:1712.01815}, 2017.

\bibitem[\protect\citeauthoryear{Silver \bgroup \em et al.\egroup
  }{2017b}]{silver2017mastering}
David Silver, Julian Schrittwieser, Karen Simonyan, Ioannis Antonoglou, Aja
  Huang, Arthur Guez, Thomas Hubert, Lucas Baker, Matthew Lai, Adrian Bolton,
  Yutian Chen, Timothy Lillicrap, Fan Hui, Laurent Sifre, George van~den
  Driessche, Thore Graepel, and Demis Hassabis.
\newblock Mastering the game of go without human knowledge.
\newblock {\em Nature}, 550(7676):354–359, October 2017.

\bibitem[\protect\citeauthoryear{Sutton and
  Barto}{1998}]{sutton1998reinforcement}
Richard~S. Sutton and Andrew~G. Barto.
\newblock {\em Reinforcement Learning : An Introduction}.
\newblock MIT Press, 1998.

\bibitem[\protect\citeauthoryear{Tsang}{2014}]{tsang2014foundations}
Edward Tsang.
\newblock {\em Foundations of constraint satisfaction: the classic text}.
\newblock BoD--Books on Demand, 2014.

\end{thebibliography}

\clearpage
\begin{appendix}
\addtocontents{toc}{\vspace{1em}}

\section{Clues and Classic Algorithms}

Expanding on the claims made in the introduction, we show that classical algorithms (A*, WA*, LTS and MCTS variants) all struggle to make good use of clues.

\subsection{Admissible A* Cannot Use Clues}\label{apdx:admissible}

A* \citep{hart1968formal,dechter1985generalized} is a best-first search algorithm whose priority queue is sorted according to the function $f(n) = g(n) + h(n)$, for each node $n$. The $g$-value of a node is the depth of the path connecting the root of the tree to $n$, while $h(n)$ is an estimate of the number of actions to go.
We denote as $h^*(n)$ the minimum number of actions to connect $n$ to a goal node.
A* is guaranteed to find depth-optimal solutions if $h(n) \leq h^*(n)$ for all $n$. 
Weighted A*~\citep{pohl1970heuristic} is a generalization of A* that uses a weighted version of the heuristic $h$ to ensure $g$-optimality of the return solution within a factor $1+\eps$ for a chosen $\eps \geq 0$.

Regarding the specific clue environment of \Cref{ex:1000clues} of the Introduction, it is actually possible to design an admissible heuristic function for A* that solves the problem as efficiently as \rootlts:
for every node at depth 50 or depth 100 that is not a clue node, set the heuristic to infinity.
This effectively prunes all these nodes.
However, this `rerooting-by-pruning' behaviour is not feasible in general while guaranteeing depth-optimality of the solution.
For example, if there is even a small chance that the solution node 
does not descend from a clue node of depth 50, then no single path can be pruned by any admissible heuristic and A* must search everywhere to guarantee depth-optimality of the returned solution.
The following theorem implies that no algorithm (including A* and Weighted A*) can be claimed to guarantee $1+\epsilon$ $g$-optimality for all monotone cost functions $g$ while also making efficient use of clues.

\begin{theorem}[Bounded suboptimality vs clues]\label{thm:WA*}
Let $a \geq 2$ be a natural number, and let $\eps\geq 0$ be a real number.
There exist sets of environments with $q$ clue nodes where
any algorithm that is guaranteed to return a $1+\eps$ depth-optimal solution
must visit (as per \cref{eq:visited_node}) $\Omega( 2^{qa/(1+\eps)} )$ nodes
while \rootlts{} with rerooting weights $\tpr_t = \indicator{n_t\in\clueset}$
and a uniform policy returns a (non-depth-optimal) solution in $O(q2^a)$ node visits.
\end{theorem}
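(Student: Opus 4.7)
The plan is to construct an environment class that combines an easy-to-follow clue chain (which \rootlts{} exploits directly) with a potential hidden optimal solution that any algorithm with a $(1+\eps)$-optimality guarantee must essentially certify absent. For each $q\geq 1$ and $a\geq 2$, I take an infinite perfect binary tree with a chain of clue nodes $n_{(1)} \prec n_{(2)} \prec \dots \prec n_{(q)}$ where $n_{(1)} = n_1$ and $d(n_{(i+1)}) - d(n_{(i)}) = a$, together with a distinguished \emph{clue solution} $n_{\mathrm{cl}}$ placed at relative depth $a$ below $n_{(q)}$, so that $d(n_{\mathrm{cl}}) = (q+1)a$. This is a valid instance of the clue model of \Cref{sec:lower_bound} with $b=a$.

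For the \rootlts{} upper bound I invoke \Cref{ex:qT_clue_bound} (with $b=a$) verbatim: with a uniform policy and $\tpr_t = \indicator{n_t\in\clueset}$, \rootlts{} visits at most $q_T\,2^{a+1} \leq q\,2^{a+1} = O(q\,2^a)$ nodes before reaching $n_{\mathrm{cl}}$, and returns it as a (non-depth-optimal) solution.

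For the lower bound, set $D = \lfloor qa/(1+\eps)\rfloor$ and let $\mathcal U$ be the set of non-clue nodes of depth at most $D$, excluding $n_{\mathrm{cl}}$ itself; since $(q+1)a > D$ we have $n_{\mathrm{cl}}\notin \mathcal U$ automatically, and after subtracting the at most $q+1$ clue ancestors of depth $\leq D$, the size satisfies $|\mathcal U| = \Omega(2^D) = \Omega(2^{qa/(1+\eps)})$. I then run an adversary argument in the spirit of \Cref{thm:lower_bound}: consider the family where the adversary may optionally add a single \emph{hidden solution} at some $n' \in \mathcal U$, leaving the clue structure untouched. Fix any algorithm $\mathcal A$ with the $(1+\eps)$-optimality guarantee, execute it on the base environment (no hidden solution), and let $V$ be the set of nodes it visits before returning. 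I claim $\mathcal U \subseteq V$: otherwise, picking any $n'\in \mathcal U \setminus V$ and switching to the augmented environment with a hidden solution at $n'$, the visit-only reveal model of \Cref{thm:lower_bound} together with the fact that $n'$ is non-clue ensures $\mathcal A$ produces the exact same trace, so it still returns $n_{\mathrm{cl}}$ at depth $(q+1)a > qa \geq (1+\eps)D \geq (1+\eps)\,d(n')$, violating the guarantee. Hence $|V| \geq |\mathcal U| = \Omega(2^{qa/(1+\eps)})$. Randomized algorithms follow by averaging the guarantee over a uniform random choice of $n' \in \mathcal U$, exactly as in the proof of \Cref{thm:lower_bound}.

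The most delicate step is asserting that planting $n'$ does not perturb $\mathcal A$'s trace before the contradiction fires; this will rest squarely on the reveal model of \Cref{thm:lower_bound} (solution status learned only upon visiting) and on $n'$ being non-clue, so clue-membership queries and the topology seen by $\mathcal A$ are identical in both environments. A small bookkeeping check will be that the $O(q)$ clue ancestors removed from $\mathcal U$ are negligible compared to $2^D$, which is immediate since $D \geq q$ whenever $\eps < a-1$ and $a\geq 2$ gives a non-vacuous bound throughout.
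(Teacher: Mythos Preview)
Your construction and argument mirror the paper's proof closely: a chain of $q$ clue nodes at spacing $a$ with a deep clue-solution that \rootlts{} reaches in $O(q2^a)$ steps via \Cref{ex:qT_clue_bound}, plus a shallow alternative solution that forces any $(1+\eps)$-optimal algorithm to sweep $\Omega(2^{qa/(1+\eps)})$ nodes. The paper differs only in presentation: it always places \emph{both} solutions, with the shallow one $n^*_2$ at a uniformly random leaf of depth $\lceil qa/(1+\eps)\rceil-1$, and then argues that any $(1+\eps)$-optimal algorithm must return $n^*_2$, invoking the standard average-to-worst-case reduction. You instead run an explicit adversary that plants the shallow solution post hoc at an unvisited node; this is equivalent and your trace-indistinguishability step is exactly what is needed.

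One arithmetic slip to fix: with $n_{(1)}=n_1$ at depth $0$ and consecutive spacing $a$, you get $d(n_{(q)})=(q-1)a$, hence $d(n_{\mathrm{cl}})=qa$, not $(q+1)a$. With the correct depth and $D=\lfloor qa/(1+\eps)\rfloor$, the needed \emph{strict} inequality $d(n_{\mathrm{cl}})>(1+\eps)d(n')$ can fail at the boundary (when $qa/(1+\eps)$ is an integer and $d(n')=D$), so returning $n_{\mathrm{cl}}$ would still be $(1+\eps)$-optimal and no contradiction arises. The paper sidesteps this by putting the shallow solution at depth $\lceil qa/(1+\eps)\rceil-1$; you can equivalently take $D=\lceil qa/(1+\eps)\rceil-1$ (or restrict $\mathcal U$ to depth at most $D-1$) and the rest of your argument goes through unchanged.
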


That is, in some environments, if WA* takes time $T$ to guarantee $1+\eps$ suboptimality, then \rootlts{} can take as little as $O(q\sqrt[q]{T^{1+\eps}})$.

\begin{proof}
Consider the environments of \Cref{thm:lower_bound} (\Cref{sec:lower_bound}),
$\clueset$ is such that all clues descend from one another and are at relative depth exactly $a$ from their closest clue ancestor.
The first solution node $n^*_1$ is placed as a random descendant of the $q$th clue node, thus at depth $qa$.
A second solution node $n^*_2$ is placed randomly at depth $\ceil{qa/(1+\eps)}-1$
under the root --- but does not necessarily descend from another clue node.

Since $d(n^*_1) > (1+\eps)d(n^*_2)$, any algorithm 
that is guaranteed to return a depth-optimal solution within a factor $1+\eps$
must return the solution $n^*_2$.
This requires $2^{\ceil{qa/(1+\eps)}-1}$ node visits on average.
For \rootlts{}, see \Cref{ex:qT_clue_bound} (p. \pageref{ex:qT_clue_bound}).
\end{proof}

\subsection{LTS Struggles with Clues}\label{apdx:lts_clues}

LTS~\citep{orseau2018single} is a best-first search~\citep{pearl1984heuristics}
that uses the cost function $n\mapsto \dop(n) = d(n)/\pol(n)$ to guide the search, where $\pol$ is a policy (see its definition in \Cref{sec:notation}),
and the path probability $\pol(n)$ is the product of the edge (or `action') probabilities
\footnote{No randomness is involved. `Probabilities' should be read as `nonnegative weights that sum to 1.'}
from the root to $n$.
For example, if at each node there are two actions, left and right, with constant respective probabilities $p$ and $1-p$, and the solution node $n^*$ is found after taking $\ell$ lefts and $r$ rights, then $\pol(n^*) = p^\ell(1-p)^r$
and $d(n^*) = \ell + r$. Then, LTS is guaranteed to visit $n^*$ in at most $1+d(n^*)/\pol(n^*) = 1+(\ell + r)/ [p^\ell(1-p)^r]$ node visits~\citep{orseau2018single}.
See also \Cref{apdx:slenderness} for more examples using the new LTS cost function $\sop$ from \Cref{sec:slend}.  

We assume that the policy is proper since this is what existing learning procedures produce, either using cross-entropy~\citep{orseau2018single} or the LTS loss function~\citep{orseau2021policy,orseau2023ltscm}.

Consider the clue environment shown in \Cref{ex:1000clues} of the Introduction.
From the definition of the priority queue \cref{eq:visited_node,eq:queue_update}, 
the path probability of a node $n$ ---
and thus also $\pol(n|\parent(n))$ ---
must be known at the step where the parent $\parent(n)$ is visited,
 so as to calculate the cost of $n$ and compare it with the other nodes in the queue.
Since there is no difference between the children of a node apart from clue information, the min-max optimal choice for the policy is to set $\pol(\ndn|n) = 1/2$ for a child $\ndn$ of $n$ when neither children of $n$ are clue nodes,
and set $\pol(\ndn|n) = 1$ when the child $\ndn$ is a clue node.
Then, assuming the best case that whether a node is a solution can be tested when its parent is visited, the min-max optimal policy sets $\pol(n^*|\parent(n^*)) = 1$.
Therefore, $\pol(n^*) = \prod_{n_1 \prec n\preceq n^*} \pol(n|\parent(n)) = 2^{-98}$.
Hence, $\dop(n^*) = 100\times2^{98}$.
For every node $n$ at depth 98 that does \emph{not} descend from a clue node at depth 50, its cost is $\dop(n) = 98\times 2^{98} < \dop(n^*)$ which means that LTS visits $n$ before $n^*$.
Nodes $n$ at depth 98 descending from a clue node have a cost $98\times 2^{97}$, 
so all nodes of depth 98 are visited before $n^*$ is visited.
Hence the number of nodes that LTS visits before visiting $n^*$ is at least $2^{98+1}-1$, and so $2^{99}$ when including $n^*$.

\subsection{MCTS Struggles}\label{apdx:mcts}

We compare \rootlts{} with MCTS on a couple of \emph{illustrative} examples.
It is important to note that these examples do \emph{not} mean that \rootlts{} is better suited than MCTS for reward-based environments in general (in particular for adversarial or stochastic environments), but they show at least that the linear scaling of the bounds in \Cref{sec:rootlts} and \Cref{sec:robustness} are a reassuring feature of \rootlts.

\subsubsection{D-chain environment}
\citet{coquelin2007bandit,orseau2024superexponential} show that several variants of UCT, including AlphaGo~\citep{silver2016alphago} and its descendants, can take double-exponential time with the depth of the solution node
in the $D$-chain environment --- see \Cref{fig:D-chain}.
How does \rootlts{} behave on this environment?

We take a uniform policy.
We set $\tpr_t$ to the the reward observed at this node, and set $\tpr_1 = 1$.
Like for UCT, all the intermediate rewards/clues are misleading and delay \rootlts{} from finding $n^*$.
However, in this example at least, the exploration/exploitation ratio struck by \rootlts{} is far more balanced.
The analysis is also quite simple.
If $n^*$ is visited at step $T$, then we have $\Tprm{<T }\leq \tpr_1 + \sum_{i< D} (D-1)/D = (D+1)/2$.
The node $n^*$ is at depth $D$, so $\sopa{n^*}{n_1} = 2^{D+1}-1$.
Then from \cref{eq:Tw1_bound} we obtain that
$T \leq (D+1)2^D$, which is only a log factor worse than breadth-first search (which is not misled by the intermediate rewards), and is exponentially faster than the MCTS algorithms mentioned above.

But perhaps these MCTS algorithms are significantly faster than \rootlts{} if the solution node is placed elsewhere in the tree?
Assume that $n^*$ is placed randomly at depth $D$ and is a descendant of
the node $n_2$ (see \cref{fig:D-chain}), which is the very first reward the algorithms may observe.
Since the reward is so high in the tree, MCTS still needs to visit at least $2^D$
descendants of $n_2$.
As in the previous example, \rootlts{} visits at most $(D+1)2^D$ nodes,
which is only a log factor worse.\footnote{
This could be reduced to a log log factor by using the reparameterization of \cref{cor:lnWTmax}.}

What if the solution node $n^*$ is placed at depth $D$ and is a descendant of the node at depth $D/2$ of reward $1/2$?
Let us call this node $n_{\hat t}$, visited at step $\hat t$, and assume $D$ is even for simplicity.
Then the MCTS algorithms may still take double-exponential time
with the depth of $\hat n$ before finding $\hat n$ --- for large enough $D$.
By contrast, in this case \rootlts{} really takes advantage of the rewards to speed up the search:
Using \Cref{cor:WTmax_bound},
if $n^*$ is visited at step $T$,
choosing the subtask decomposition
$\{n_{T_1} = n_1,\, n_{T_2} = n_{\hat t},\, n_{T_3} = n_*\}$
with $m=3$ we obtain
\begin{align*}
    T &\ \leq\  \Tprm{< T}\max\left\{\frac{\Crlts_1(n_{\hat t})}{\tpr_1}, \, \frac{\Crlts_{\hat t}(n^*)}{\tpr_{\hat t}}\right\} 
    \ \leq\  \frac{D+1}2 \max\{2^{D/2+1}-1,\, 2(2^{D/2+1}-1)\} \\
    &\ \leq\  2(D+1)2^{D/2}
\end{align*}
which shows that, within a log factor, \rootlts{} takes the square root of the time that breadth-first search would take to visit $n^*$.

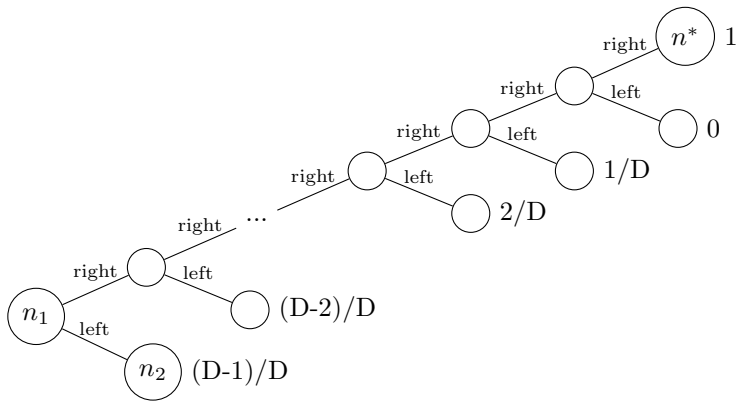
\begin{figure}
    \centering

\def\D{6}

\begin{tikzpicture}[level distance=1cm, sibling distance=2cm]

    \tikzstyle{mynode} = [circle, draw, minimum size=0.5cm]

    \node[mynode] (n0) {$n_1$};
    \foreach \i in {1,...,\D} {
        \pgfmathsetmacro{\j}{int(\i-1)}
        \ifnum \i=2
            \node[above right=0.2cm and 1cm of n\j,minimum size=0.5cm] (n\i) {...};
        \else\ifnum \i=6
            \node[above right=0.2cm and 1cm of n\j,mynode] (n\i) {$n^*$};
        \else
            \node[above right=0.2cm and 1cm of n\j,mynode] (n\i) {};
        \fi\fi
        
        \draw (n\j) -- node[above,midway] {\scriptsize right} (n\i);
        
        \ifnum \i=3
        \else
            \ifnum \i=1
                \node [below right=0.2cm and 1cm of n\j,mynode] (branch\i 2) {$n_2$};
            \else
                \node [below right=0.2cm and 1cm of n\j,mynode] (branch\i 2) {};
            \fi
            \draw (n\j) -- node[above,midway] {\scriptsize left} (branch\i 2);
        \fi
    }
    \node[right=0ex of branch12] () {(D-1)/D};
    \node[right=0ex of branch22] () {(D-2)/D};
    \node[right=0ex of branch42] () {2/D};
    \node[right=0ex of branch52] () {1/D};
    \node[right=0ex of branch62] () {0};
    \node[right=0ex of n6] () {1};

    \foreach \i in {1,...,\D} {
        \pgfmathsetmacro{\j}{int(\i-1)}
        
    }

\end{tikzpicture}

    \caption{The $D$-chain environment. Edge labels are actions,
    and node labels are rewards.
    The binary tree is perfect and infinite.
    UCT, `Polynomial' UCT, AlphaZero and other MCTS variants take double exponential time (and more) with the depth of the node $n^*$ of highest reward.
    }
    \label{fig:D-chain}
\end{figure}

\subsubsection{A measly misleading reward}

\newcommand{\cpuct}{c_{\mathrm{puct}}}

It can even be shown that AlphaZero-like MCTS algorithms can take quadratic time compared to breadth-first search even with just one misleading reward close to the root.
For each child $\ndn$ of a node $n$, define, according to the AlphaZero formula:
\begin{align*}
    B_t(\ndn) = X_t(\ndn) + \cpuct\pol(\ndn|n)\frac{\sqrt{m_t(n)}}{m_t(\ndn) + 1}
\end{align*}
where $X_t(\ndn)$ is taken to be the average reward observed on the descendants of $\ndn$ (included),
and $m_t(n)$ is the number of times the node $n$ has been traversed before step $t$.
We assume that $\cpuct=2$.
At every expansion step, the tree is traversed from the root $n_1$
and, at every parent node, its child with maximum $B$ value is selected for traversal.

Consider an infinite perfect binary tree where
the two children $n_2$ and $n_3$ of the root are such that there is a reward
$\alpha\in(0, 1)$ at $n_2$.
We assume that $n_2$ is visited before $n_3$.
For both \rootlts{} and AlphaZero we take a uniform policy.
Then $\cpuct \pol(\ndn|n) = 1$ for all $\ndn\in\children(n)$.
The node $n^*$ has a reward of 1 and descends from $n_3$.
If $n^*$ is visited at step $T$,
we must have $B_T(n_3) \geq B_T(n_2)$ which implies $B_T(n_3) \geq \alpha$.
To visit $n^*$, the MCTS algorithm must first visit all descendants of $n_3$ at depth less than $n^*$, so it must traverse $n_3$ at least $2^{d(n^*)-1}$ times,
and thus we must have $m_T(n_3) \geq 2^{d(n^*)-1}$.
Since $X_T(n_3) = 0$ for all $t$, it follows that 
\begin{align*}
    \frac{\sqrt{m_T(n_1)}}{m_T(n_3) + 1} \geq \alpha
\end{align*}
and since $m_T(n_3) \geq 2^{d(n^*)-1}$ then $T = m_T(n_1) \geq (\alpha 2^{d(n^*)-1})^2$.

By contrast, breadth-first search takes at most $2^{d(n^*)+1}$ steps to visit $n^*$.
For \rootlts, from \cref{eq:Tw1_bound}, assuming $\tpr_1 = 1$,
an additional clue of weight $\alpha$ only increases the leading factor $\Tprm{<T}$ of the bound by $\alpha$ and it visits $n^*$ in at most $(1+\alpha)2^{d(n^*)+1}$ node visits.
That is, if breadth-first search takes $T^*$ steps to find $n^*$, then
AlphaZero takes $\Omega((\alpha T^*)^2)$ steps while \rootlts{} takes
$O((1+\alpha)T^*)$.

\section{Slenderness}\label{apdx:slenderness}

This appendix provides further details and examples regarding the slenderness cost function introduced in \Cref{sec:slend},
as well as the proof of \cref{eq:sop_bounds}.

\subsection{More Slenderness Examples}

\begin{example}[$p$ left, $1-p$ right]\label{ex:slend_left_right}
Consider an infinite binary tree where the conditional probability of the left child is always $p$ and that of the right child is always $1-p$.
Then, starting at the root and taking $\ell$ times the left child and $r$ times the right child, in any order, the corresponding node $n$ satisfies
\begin{align*}
    \sop(n) \leq \max\left\{\frac1p,\frac1{1-p}\right\}\frac1{\pol(n)}\,,
\end{align*}
with $\pol(n) = p^\ell(1-p)^r$.
Compare with 
$\dop(n) = (\ell+r)/\pol(n)$
--- and recall that $\sop(n) \leq 1+\dop(n)$.
Also note that this means that the slenderness $\slend(n) \leq \max\{1/p,1/(1-p)\}$.
\end{example}

\begin{proof}[\Proofof{} \Cref{ex:slend_left_right}]
First, from \cref{eq:sop_anc} we show that for a given unordered sequence of conditional probabilities,
the node with the largest $\sop$-cost is the one for which the smallest conditional probabilities are nearest to the root. Indeed, two nodes $n_b$ and $n_c$ that both descend from a node $n_a$ with sequences of conditional probabilities $(a, b)$ and $(b,a)$ respectively.
Then, using \cref{eq:sop},
\begin{align*}
    \sop(n_b) &= \sop(n_a) + \frac1{a\pol(n_a)} + \frac1{ab\pol(n_a)}\,, \\
    \sop(n_c) &= \sop(n_a) + \frac1{b\pol(n_a)} + \frac1{ab\pol(n_a)}
\end{align*}
and thus $\sop(n_b) \geq \sop(n_c)$ if and only if $a \leq b$.

Now, back to the example, let us assume first that $p > 1-p$.
The ordering of the $\ell$ lefts and $r$ rights that maximizes $\sop(n)$ is if all the rights (with probabilities $1-p$) come first.
Hence,
\begin{align*}
    \sop(n) &\leq 1+\frac{1}{1-p}+\dots \frac1{(1-p)^r} ~+~ \frac1{(1-p)^rp} + \dots
    \frac1{(1-p)^rp^\ell} \\
    &= 1+\frac{1}{1-p}+\dots \frac1{(1-p)^{r-1}} + \frac1{(1-p)^r}\left(1+\frac1p+\dots\frac1{p^\ell}\right) \\
    &= \frac{\frac1{(1-p)^r}-1}{\frac1{1-p} - 1} + \frac1{(1-p)^r}\frac{\frac1{p^{\ell+1}}-1}{\frac1p -1} \\
    &= \frac1{(1-p)^r} \left(\frac{1-p}{p} + \frac1{1-p}\frac1{p^\ell} - \frac{p}{1-p}\right) - \frac{1-p}{p} \\
    &\leq \frac1{1-p}\frac1{(1-p)^rp^\ell}\,.
\end{align*}
The case $p \leq 1-p$ is proven similarly.
\end{proof}

With a bit more work,
using $1+1/p+\dots1/p^\ell \leq 1+\ell/p^\ell$, the bound can be improved to
\begin{align*}
    \sop(n) \leq \max\left\{\min\left\{r+2, \frac1{p_{\mathrm{left}}}\right\},~
    \min\left\{\ell+2, \frac1{p_{\mathrm{right}}}\right\}
        \right\}
    \frac1{\pol(n)}\,,
\end{align*}
where $p_{\mathrm{left}} = p = 1- p_{\mathrm{right}}$.
\comment{Actually, we can get $\min\{r + 1/(1-p), 1/p\},\dots$
Even more generally,
we can get
\begin{align}
    \slend(n) \leq \min_{0\leq i \leq d(n)}\left\{ i + \frac1{1-p_{i+1}}\right\}
\end{align}
with $p_{d(n)+1}=0$,
where $p_i$ is the $i$th \emph{largest} conditional probability on the path to $n$.
Note that if $p_\ell> p_r$, and there are $\ell$ lefts,
then the first $p_r$ is for $i=\ell+1$.
Note that this is not always tight, for example with Laplace estimator,
on the leftmost path we have $1+\ln d(n)$ but the bound above would give $\Omega(d(n))$. It's because the above display considers the worst case
ordering of the conditionals, but in the best case it can be much lower.
}

\comment{One may wonder if we could obtain a bound of the form $\sop(n) = O(d(n) + 1/\pol(n))$
but taking
$\pol_1 = (1/2)^\alpha, \pol_2 = (2/3)^\alpha, \pol_3 = (3/4)^\alpha\dots$
leads to $1/\pol_{1:d(n)} = (d(n)+1)^\alpha$,
and $\sop(n) = 1^\alpha+2^\alpha+3^\alpha+4^\alpha+\dots d(n)^\alpha 
\geq d(n)^{1+\alpha} / (1+\alpha) = 
\Omega(d(n)/\pol(n) / (1+\alpha))$.
}

\begin{example}[Depth-dependent arity]\label{ex:depth_arity_sop}
Consider an infinite tree where the number of children of any node
depends exclusively on the depth of the node, in an arbitrary way.
Let $C_d$ be the number of children of any node $n$ of depth $d(n) = d$.
Assume that the policy is uniform.
Then for any node $n$, $\sop(n)$ is exactly the number of nodes of depth at most $d(n)$ (and also exactly the number of nodes of cost at most $\sop(n)$).
The proof is as follows.
If the number of nodes of depth exactly $d$ is $L$ and the number of nodes of depth at most $d$ is $N$,
then the number of nodes of depth exactly $d+1$ is $LC_d$ and the number of nodes of depth at most $d+1$ is $N+LC_d$.
Assume that $\sop(n) = N$
and that $1/\pol(n) = L$,
then for all children $\ndn\in\children(n)$,
$1/\pol(\ndn) = 1/(\pol(n)\pol(\ndn| n)) = LC_d$
and $\sop(\ndn) = \sop(n) + 1/\pol(\ndn) = N+LC_d$.
Since $\pol(n_1) = \sop(n_1) = 1$, the result follows by induction.
\end{example}

\subsection{Proof of the Slenderness Bounds}

Refer to \Cref{ex:depth_arity_sop} for some intuition about why $\sop$ `counts' the number of nodes in a tree.

Before proving the bounds of \cref{eq:sop_bounds},
we build an intermediate concept we call the \emph{complementary policy} with respect to a tree,
and we prove that it satisfies some useful properties.

Recall that a set $\nodeset'\subseteq\nodeset$ of nodes is a tree
rooted in some node $n_a$
if for every node in the set, its parent is also in the set, except for the root $n_a$: $\forall n\in\nodeset'\setminus\{n_a\}, \parent(n)\in\nodeset'$.

\begin{definition}[Complementary policy]\label{def:comp_pol}
For a given tree $\nodeset'\subseteq\nodeset$ rooted in $n_1$, the $\nodeset'$-complementary policy $\pol'$ of $\pol$
is, for all $n\in\nodeset$,
\begin{align*}
    \pol'(n)\ =\ \pol(n) - \sum_{\ndn\in\children(n)\cap \nodeset'} \pol(\ndn)
    \ =\ \pol(n)\left(1- \sum_{\ndn\in\children(n)\cap \nodeset'} \pol(\ndn|n)\right)\,.
    &\qedhere
\end{align*}
\end{definition}
The quantity $\pol'(n)$ is the amount of the path probability $\pol(n)$ 
that is \emph{not} passed down to children of $n$ \emph{within} $\nodeset'$.
That is, $\pol'(n)$ is `lost' (or `leaked') by $n$ from $\nodeset'$.
This also accounts for the fact that the policy may not be proper
--- which is a form of `leakage' too.
Note that a leaf of $\nodeset'$ loses all its path probability $\pol(n)$, that is, $\pol'(n) = \pol(n)$.
Then we show that the total path probability that is lost is exactly 1 (the path probability of the root).

\begin{lemma}\label{lem:comp_pol_sum1}
For any tree $\nodeset'$ rooted in $n_1$, the $\nodeset'$-complementary policy $\pol'$ of $\pol$ satisfies
\begin{align*}
    \sum_{n\in\nodeset'} \pol'(n) = 1\,.
    &\qedhere
\end{align*}
\end{lemma}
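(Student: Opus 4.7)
The plan is to prove this by a direct double-counting (telescoping) argument on the definition of $\pol'$. Starting from the definition, I would write
\begin{align*}
\sum_{n\in\nodeset'} \pol'(n)
\;=\; \sum_{n\in\nodeset'}\pol(n) \;-\; \sum_{n\in\nodeset'}\sum_{\ndn\in\children(n)\cap\nodeset'} \pol(\ndn)
\end{align*}
and then swap the order of summation in the double sum, reindexing by the child $\ndn$ rather than by the parent $n$.

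The key combinatorial observation is that since $\nodeset'$ is a tree rooted at $n_1$, every node $\ndn\in\nodeset'\setminus\{n_1\}$ has its (unique) parent $\parent(\ndn)$ lying in $\nodeset'$, whereas $n_1$ itself has no parent in $\nodeset'$. Therefore the pairs $(n,\ndn)$ appearing in the double sum are in bijection with the nodes of $\nodeset'\setminus\{n_1\}$, and the double sum collapses to $\sum_{\ndn\in\nodeset'\setminus\{n_1\}} \pol(\ndn)$. Substituting back, the sum over $\nodeset'$ of $\pol(\cdot)$ cancels with itself, leaving exactly $\pol(n_1) = 1$, as desired.

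I expect essentially no difficulty here; the only subtlety worth stating carefully is the bijection in the previous step, which uses that $\nodeset'$ is \emph{rooted at $n_1$} (so $n_1\in\nodeset'$ and the parent of every other node of $\nodeset'$ is itself in $\nodeset'$). If a more pedagogical argument is preferred, one can instead prove the statement by induction on $|\nodeset'|$: the base case $\nodeset'=\{n_1\}$ gives $\pol'(n_1) = \pol(n_1) = 1$, and adding a new leaf $\ndn$ to $\nodeset'$ (as a child of some $n\in\nodeset'$) decreases $\pol'(n)$ by $\pol(\ndn)$ and adds a new term $\pol'(\ndn) = \pol(\ndn)$, leaving the total invariant.
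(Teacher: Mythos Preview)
Your proposal is correct and follows essentially the same approach as the paper: expand the definition of $\pol'$, reindex the double sum by the child (using that every non-root node of $\nodeset'$ has its parent in $\nodeset'$), and telescope to leave $\pol(n_1)=1$. The inductive alternative you mention is also valid but is not needed.
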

\begin{proofnoqed}
Using \Cref{def:comp_pol}:
\begin{align*}
\sum_{n\in\nodeset'} \pol'(n) &= 
\sum_{n\in\nodeset'} \pol(n) - 
\sum_{n\in\nodeset'}\ {}
\sum_{\ndn\in\children(n)\cap\nodeset'} \pol(\ndn)  \\
&= 
\sum_{n\in\nodeset'} \pol(n) - 
\sum_{n\in\nodeset'\setminus\{n_1\}} \pol(n) 
\ = \ \pol(n_1)\  =\  1\,.
\qedhere
\end{align*}
\end{proofnoqed}

In the next result,
the quantity $\frac{\pol'(n)}{\pol(n)}\slend(n)$ corresponds to the \emph{fraction} of the share $\slend(n)$
that is not passed down to its children within $\nodeset'$, that is,
the fraction that is `lost'.
The next result shows that every share is eventually lost, either partially by some nodes of $\nodeset'$,
or in full by the leaves of $\nodeset'$ (since they pass down no share at all within $\nodeset'$).
Hence, 
since the share $\slend(n)$ represents a fraction of the number of ancestors of $n$ (included),
when summing over all nodes of $\nodeset'$, the cumulative fraction of the lost shares
is equal to the number of nodes in $\nodeset'$.

\begin{lemma}\label{lem:comp_pol_sum_slend}
For any tree $\nodeset'$ rooted in $n_1$, the $\nodeset'$-complementary policy $\pol'$ of $\pol$ satisfies
\begin{align*}
    |\nodeset'| = \sum_{n\in\nodeset'} \frac{\pol'(n)}{\pol(n)}\slend(n)\,.
    &\qedhere
\end{align*}
\end{lemma}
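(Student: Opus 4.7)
The plan is to expand the factor $\pol'(n)/\pol(n)$ using \Cref{def:comp_pol}, rewrite the resulting double sum as a sum over non-root nodes (indexed by the child rather than the parent), and then invoke the slenderness recursion \cref{eq:slend} to obtain a telescoping sum.

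Concretely, writing
\begin{equation*}
\frac{\pol'(n)}{\pol(n)}\slend(n) \ =\ \slend(n)\left(1 - \sum_{\ndn\in\children(n)\cap\nodeset'}\pol(\ndn|n)\right),
\end{equation*}
I would sum both sides over $n\in\nodeset'$ and split into two parts: a single sum of $\slend(n)$, minus a double sum whose inner term is $\slend(n)\pol(\ndn|n)$ with $\ndn\in\children(n)\cap\nodeset'$. Since $\nodeset'$ is a tree rooted in $n_1$, the pairs $(n,\ndn)$ appearing in the double sum are exactly the pairs $(\parent(\ndn),\ndn)$ for $\ndn\in\nodeset'\setminus\{n_1\}$, so the double sum collapses to $\sum_{\ndn\in\nodeset'\setminus\{n_1\}}\slend(\parent(\ndn))\pol(\ndn|\parent(\ndn))$.

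The key step is then to apply the slenderness recursion in reverse: $\slend(\parent(\ndn))\pol(\ndn|\parent(\ndn)) = \slend(\ndn) - 1$. Substituting gives
\begin{equation*}
\sum_{n\in\nodeset'}\frac{\pol'(n)}{\pol(n)}\slend(n) \ =\ \sum_{n\in\nodeset'}\slend(n) - \sum_{\ndn\in\nodeset'\setminus\{n_1\}}(\slend(\ndn)-1).
\end{equation*}
The two $\slend$-sums cancel except for the $n_1$ term, leaving $\slend(n_1) + (|\nodeset'|-1) = 1 + |\nodeset'| - 1 = |\nodeset'|$, which is the claim.

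I do not anticipate a serious obstacle here; the content of the lemma is essentially a discrete integration-by-parts (or telescoping) statement, and the only nontrivial recognition is that the slenderness recursion \cref{eq:slend} is precisely what is needed to match the child-indexed double sum with the parent-indexed sum of $\slend$ values. The tree structure of $\nodeset'$ (used to reindex $(n,\ndn)$ as $(\parent(\ndn),\ndn)$) and the base case $\slend(n_1)=1$ are the two ingredients that make the telescoping land exactly on $|\nodeset'|$. As a sanity check, applying \Cref{lem:comp_pol_sum1} gives a cruder identity $\sum_{n\in\nodeset'}\pol'(n)=1$, and the present lemma is the $\slend$-weighted refinement of it, which is consistent with the interpretation that $\slend(n)$ measures the cumulative ancestor share of $n$.
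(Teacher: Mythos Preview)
Your proposal is correct and follows essentially the same approach as the paper's proof: expand $\pol'(n)/\pol(n)$ via \Cref{def:comp_pol}, split the sum, use the slenderness recursion \cref{eq:slend} to rewrite $\pol(\ndn|n)\slend(n)$ as $\slend(\ndn)-1$, and telescope to obtain $\slend(n_1)+|\nodeset'|-1=|\nodeset'|$. The only cosmetic difference is that you make the reindexing of the double sum (from parent--child pairs to non-root children) explicit, whereas the paper leaves it implicit in the cancellation step.
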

\begin{proofnoqed}
We have
\begin{subequations}
\begin{align}
    \sum_{n\in\nodeset'} \frac{\pol'(n)}{\pol(n)}\slend(n)
    &= \sum_{n\in\nodeset'} \pol(n)\left(1-\sum_{\ndn\in\children(n)\cap\nodeset'}\pol(\ndn|n)\right)\frac{\slend(n)}{\pol(n)} \\
    &= \sum_{n\in\nodeset'} \slend(n)
    - \sum_{n\in\nodeset'} \sum_{\ndn\in\children(n)\cap\nodeset'}\pol(\ndn|n) \slend(n) \\
    &= \sum_{n\in\nodeset'} \slend(n)
    - \sum_{n\in\nodeset'} \sum_{\ndn\in\children(n)\cap\nodeset'}(\slend(\ndn) -1) \\
    &= \slend(n_1) + \sum_{n\in\nodeset'\setminus\{n_1\}} 1 \\
    &= |\nodeset'|\,,
\end{align}
\end{subequations}
with 
\begin{eqitems}
\item from the definition of $\pol'$ in \Cref{def:comp_pol},
\item from the definition of $\slend$ in \cref{eq:slend},
\item by cancellation of every $\slend(n)$ except for the root $n_1$,
and dealing with the $-1$ terms separately,
\item since $\slend(n_1)=1$.
\qedhere
\end{eqitems}
\end{proofnoqed}

We can now easily prove the upper bound of \cref{eq:sop_bounds}.
For any given $\theta \geq 0$, define
\begin{align*}
    \nodesettheta = \left\{n:\sop(n) \leq \theta\right\}
\end{align*}
to be the set of nodes of $\sop$-cost at most $\theta$,
and observe that it forms a tree rooted in $n_1$, due to the monotonicity of $\sop$.

\begin{proposition}[$\sop$ SCCF]\label{prop:sop_selfcounting}
The slenderness cost function $\sop$ is self-counting.
\end{proposition}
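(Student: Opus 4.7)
The plan is to apply the two complementary-policy lemmas (\Cref{lem:comp_pol_sum1} and \Cref{lem:comp_pol_sum_slend}) to the tree $\nodesettheta = \{n : \sop(n) \leq \theta\}$. The excerpt already points to this set and observes that it is a tree rooted at $n_1$ (by monotonicity of $\sop$ shown in \cref{eq:sop}), so I would start by recording this fact, since the complementary-policy lemmas require a tree rooted at $n_1$ as input.

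Next I would rewrite the key identity of \Cref{lem:comp_pol_sum_slend} by pulling $\slend(n)/\pol(n) = \sop(n)$ out of the summand, giving
\begin{equation*}
    |\nodesettheta| \;=\; \sum_{n\in\nodesettheta} \frac{\pol'(n)}{\pol(n)}\slend(n) \;=\; \sum_{n\in\nodesettheta} \pol'(n)\,\sop(n)\,.
\end{equation*}
Since $\sop(n) \leq \theta$ for every $n\in\nodesettheta$ by definition, and each $\pol'(n)\geq 0$ (non-negativity is immediate from \Cref{def:comp_pol} applied to a subtree), the right-hand side is at most $\theta \sum_{n\in\nodesettheta}\pol'(n)$. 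A single invocation of \Cref{lem:comp_pol_sum1} collapses this last sum to $1$, yielding $|\nodesettheta| \leq \theta$, which is exactly the self-counting property.

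The hard work has already been done in the two preceding lemmas, so I do not expect any real obstacle here — the proof should fit in a few lines. The only thing worth double-checking is that $\pol'(n)\geq 0$ for every $n\in\nodesettheta$, which follows because the children of $n$ inside $\nodesettheta$ form a subset of $\children(n)$, and the conditional probabilities $\pol(\cdot|n)$ sum to at most $1$ by the standing assumption on $\pol$ in \Cref{sec:notation}.
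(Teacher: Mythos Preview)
Your proposal is correct and follows essentially the same approach as the paper: apply \Cref{lem:comp_pol_sum_slend} to $\nodesettheta$, rewrite $\frac{\pol'(n)}{\pol(n)}\slend(n)$ as $\pol'(n)\sop(n)$, bound $\sop(n)\le\theta$, and finish with \Cref{lem:comp_pol_sum1}. The paper's proof is the same one-line chain of equalities and inequality, with the non-negativity of $\pol'(n)$ used implicitly just as you anticipated.
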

\begin{proofnoqed}
The following holds for all $\theta \geq 0$.
Let $\pol_\theta$ be the $\nodesettheta$-complementary policy of $\pol$.
Then from \Cref{lem:comp_pol_sum_slend}, the definition of $\nodesettheta$,
followed by \Cref{lem:comp_pol_sum1},
\begin{align}
    |\nodesettheta| =
    \sum_{n\in\nodesettheta} \frac{\pol_\theta(n)}{\pol(n)}\slend(n) 
    = \sum_{n\in\nodesettheta} \pol_\theta(n)\sop(n)
    \leq \sum_{n\in\nodesettheta} \pol_\theta(n)\theta = \theta\,.
    &\qedhere
\end{align}
\end{proofnoqed}

\newcommand{\leafsetthetaplus}{\leafset_\theta^+}

Now, for the lower bound of \cref{eq:sop_bounds}, define the `child-expansion' of $\nodesettheta$:
\begin{equation*}
    \nodesetthetaplus\ =\ \nodesettheta\cup \bigcup_{n\in\nodesettheta}\children(n) \,,
\end{equation*}
and let $\leafsetthetaplus = \nodesetthetaplus \setminus \nodesettheta$ be the set of leaves of $\nodesetthetaplus$.
Recall that for a proper policy $\pol$, for every node $n$, $\sum_{\ndn\in\children(n)} \pol(\ndn| n) = 1$.

\begin{lemma}[$\theta$ bounds]\label{lem:theta_bounds}
For all $\theta\geq 0$,
\begin{equation*}
    |\nodesettheta|\ \leq\ \theta\ <\ |\nodesetthetaplus|\,,
\end{equation*}
where the rightmost inequality holds only if the policy is proper.
\end{lemma}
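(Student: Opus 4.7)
The upper bound $|\nodesettheta| \leq \theta$ is immediate: it is just the self-counting property from Proposition \ref{prop:sop_selfcounting} applied to $\sop$, so no additional work is required there.

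For the lower bound $\theta < |\nodesetthetaplus|$, the plan is to apply the machinery of complementary policies, this time to the larger tree $\nodesetthetaplus$ rather than to $\nodesettheta$. Let $\pol^+$ be the $\nodesetthetaplus$-complementary policy of $\pol$. The key observation is that for a proper policy, $\pol^+$ vanishes on $\nodesettheta$: every child of a node $n\in\nodesettheta$ lies in $\nodesetthetaplus$ by construction, so in the defining sum $\sum_{\ndn\in\children(n)\cap\nodesetthetaplus}\pol(\ndn|n)$ nothing is `lost' from $n$. Conversely, on the leaves $\leafsetthetaplus = \nodesetthetaplus\setminus\nodesettheta$, none of the children lies in $\nodesetthetaplus$, so $\pol^+(n) = \pol(n)$ there. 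Thus the complementary policy is supported exactly on the leaves $\leafsetthetaplus$.

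Plugging this into Lemma \ref{lem:comp_pol_sum_slend}, and using $\slend(n)/\pol(n) = \sop(n)$,
\begin{equation*}
    |\nodesetthetaplus| \ =\ \sum_{n\in\nodesetthetaplus}\frac{\pol^+(n)}{\pol(n)}\slend(n) \ =\ \sum_{n\in\leafsetthetaplus}\pol(n)\,\sop(n)\,.
\end{equation*}
Every leaf $n\in\leafsetthetaplus$ satisfies $\sop(n) > \theta$ by definition of $\nodesettheta$, so the right-hand side is strictly greater than $\theta\sum_{n\in\leafsetthetaplus}\pol(n)$. Finally, Lemma \ref{lem:comp_pol_sum1} applied to the tree $\nodesetthetaplus$ gives $\sum_{n\in\leafsetthetaplus}\pol(n) = \sum_{n\in\nodesetthetaplus}\pol^+(n) = 1$, yielding $|\nodesetthetaplus| > \theta$ as desired.

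The only delicate point is checking that the properness of $\pol$ really is needed for the second inequality: without it, the complementary policy on $\nodesettheta$ may be nonzero (since a node may leak probability mass even to nonexistent children), so there is no guarantee that all the `weight' of the shares $\slend(n)$ ends up on leaves of $\nodesetthetaplus$ with $\sop$-cost strictly exceeding $\theta$. This is why the statement restricts the lower bound to proper policies, and the proof above explicitly uses properness precisely at the step where $\pol^+$ is shown to vanish on $\nodesettheta$.
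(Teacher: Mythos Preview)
Your proof is correct and follows essentially the same route as the paper: invoke \Cref{prop:sop_selfcounting} for the upper bound, and for the lower bound apply \Cref{lem:comp_pol_sum_slend} and \Cref{lem:comp_pol_sum1} to the tree $\nodesetthetaplus$, using properness to argue that the complementary policy vanishes on $\nodesettheta$ and is supported on $\leafsetthetaplus$. The only cosmetic difference is that you make the identity $\pol^+(n)=\pol(n)$ on $\leafsetthetaplus$ explicit, whereas the paper keeps $\pol_\theta^+$ in the expression throughout; the logic is identical.
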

\begin{proof}
The left-hand-side follows from \Cref{prop:sop_selfcounting}.
For the right-hand-side,
let $\pol_\theta^+$ be the $\nodesetthetaplus$-complementary policy of $\pol$.
For every $n$ of $\nodesettheta$, all its children are in $\nodesetthetaplus$
and thus, since the policy is proper, $\pol_\theta^+(n) = 0$ (by \Cref{def:comp_pol}).
Moreover, by \Cref{lem:comp_pol_sum1},
\begin{equation*}
    \sum_{n\in\leafset_\theta^+} \pol_\theta^+(n) = 1\,.
\end{equation*}
Therefore, by \Cref{lem:comp_pol_sum_slend},
\begin{equation*}
    |\nodesetthetaplus| = \sum_{n\in\nodesetthetaplus}\frac{\pol_\theta^+(n)}{\pol(n)}\slend(n)
    = \sum_{n\in\leafsetthetaplus}\frac{\pol_\theta^+(n)}{\pol(n)}\slend(n)
    = \sum_{n\in\leafsetthetaplus}\pol_\theta^+(n) \sop(n) 
    > \sum_{n\in\leafsetthetaplus}\pol_\theta^+(n) \theta = \theta\,.
\end{equation*}
where the inequality follows by $\sop(n) > \theta$ for $n\in\leafsetthetaplus$
since $n\notin\nodesettheta$.
\end{proof}

\begin{proposition}[$\sop$ lower bound]\label{prop:sop_lower_bound}
Let $\theta \geq 0$.
Let $B$ be the average branching factor of $\nodesettheta$:
\begin{equation*}
    B= \frac1{|\nodesettheta|}\sum_{n\in\nodesettheta}|\children(n)|\,.
\end{equation*}
If the policy $\pol$ is proper then 
\begin{align*}
    |\nodesettheta| > \frac{\theta -1}{B}\,.
    &\qedhere
\end{align*}
\end{proposition}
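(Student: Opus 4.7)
The plan is to derive the lower bound directly from the already-established upper bound $\theta < |\nodesetthetaplus|$ of \Cref{lem:theta_bounds}, by relating the size of $\nodesetthetaplus$ to the size of $\nodesettheta$ through the average branching factor $B$.

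First, I would use the decomposition $\nodesetthetaplus = \nodesettheta \cup \leafsetthetaplus$, which is a disjoint union by definition of $\leafsetthetaplus$. So $|\nodesetthetaplus| = |\nodesettheta| + |\leafsetthetaplus|$, and the task reduces to expressing $|\leafsetthetaplus|$ in terms of $B$ and $|\nodesettheta|$.

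Next, I would count the multiset $\bigcup_{n\in\nodesettheta}\children(n)$ in two ways. By definition of $B$, the total count is $\sum_{n\in\nodesettheta}|\children(n)| = B\,|\nodesettheta|$. On the other hand, since $\nodesettheta$ is a tree rooted at $n_1$, each of its nodes except $n_1$ is the child of exactly one node of $\nodesettheta$; the remaining children lie in $\leafsetthetaplus$. Hence
\begin{equation*}
    B\,|\nodesettheta| \;=\; (|\nodesettheta|-1) + |\leafsetthetaplus|\,,
\end{equation*}
which gives $|\leafsetthetaplus| = (B-1)|\nodesettheta|+1$ and therefore $|\nodesetthetaplus| = B\,|\nodesettheta|+1$.

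Finally, combining with \Cref{lem:theta_bounds}, $\theta < |\nodesetthetaplus| = B\,|\nodesettheta|+1$, which rearranges to the desired $|\nodesettheta| > (\theta-1)/B$. There is no real obstacle here: the only subtlety is the double-counting argument for the children, and making sure to use the tree structure of $\nodesettheta$ (so that children in $\nodesettheta$ are counted exactly once, as children of their unique parent). The properness of $\pol$ is only needed to invoke the right-hand inequality of \Cref{lem:theta_bounds}.
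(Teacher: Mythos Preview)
Your proof is correct and follows the same approach as the paper's: invoke the upper bound $\theta<|\nodesetthetaplus|$ from \Cref{lem:theta_bounds}, then use the tree structure of $\nodesettheta$ to obtain $|\nodesetthetaplus|=1+\sum_{n\in\nodesettheta}|\children(n)|=1+B\,|\nodesettheta|$, and rearrange. The paper skips the intermediate $|\leafsetthetaplus|$ step and goes directly to $|\nodesetthetaplus|=1+\sum_{n\in\nodesettheta}|\children(n)|$, but your more explicit double-counting argument is equivalent.
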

\begin{proof}
The result follows from \Cref{lem:theta_bounds}:
\begin{equation*}
    \theta < |\nodesetthetaplus| = 1+\sum_{n\in\nodesettheta}|\children(n)|
    = 1+|\nodesettheta|B\,,
\end{equation*}
and rearranging.
\end{proof}

\subsection{Telescoping Property}

We show that the root-dependent cost $\sop$ function 
satisfies a form of `telescoping' property.
\begin{lemma}[$\sop$ telescopes]\label{lem:sop_telescopes}
For all  $n_a, n_b, n\in\nodeset$
such that $n_a \preceq n_b \preceq n$,
\begin{equation*}
    \frac{\sopa{n}{n_a}-1}{\pol(n_a)} = \frac{\sopa{n_b}{n_a}-1}{\pol(n_a)} + \frac{\sopa{n}{n_b}-1}{\pol(n_b)}\,.
\end{equation*}
\end{lemma}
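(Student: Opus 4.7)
The identity is essentially a restatement of path-additivity in the form of \cref{eq:sop_anc_root}, so my plan is to reduce everything to a single unified sum and split it at $n_b$.

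First I would rewrite \cref{eq:sop_anc_root} using the fact that for $n_a \preceq \nup$ we have $\pol(\nup|n_a) = \pol(\nup)/\pol(n_a)$. This gives
\begin{equation*}
\sopa{n}{n_a}
\ =\ \sum_{n_a\preceq \nup\preceq n}\frac{\pol(n_a)}{\pol(\nup)}\,,
\end{equation*}
and the $\nup = n_a$ term equals exactly $1$. Dividing by $\pol(n_a)$ and peeling off that term yields the key identity
\begin{equation*}
\frac{\sopa{n}{n_a}-1}{\pol(n_a)}
\ =\ \sum_{n_a\prec \nup\preceq n}\frac1{\pol(\nup)}\,,
\end{equation*}
which now has no dependence on $n_a$ inside the summand beyond restricting the range of $\nup$.

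Applying the same rewriting to the two right-hand-side terms gives
\begin{equation*}
\frac{\sopa{n_b}{n_a}-1}{\pol(n_a)}
\ =\ \sum_{n_a\prec \nup\preceq n_b}\frac1{\pol(\nup)}\,,
\quad
\frac{\sopa{n}{n_b}-1}{\pol(n_b)}
\ =\ \sum_{n_b\prec \nup\preceq n}\frac1{\pol(\nup)}\,.
\end{equation*}
Since $n_a \preceq n_b \preceq n$, the ancestor chain from (strictly) below $n_a$ up to $n$ decomposes as the disjoint union of the chain from strictly below $n_a$ up to $n_b$ and the chain from strictly below $n_b$ up to $n$. Adding the two displays therefore recovers $\sum_{n_a\prec \nup\preceq n}1/\pol(\nup)$, which equals the left-hand side.

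There is no real obstacle here; the only thing to watch is the bookkeeping between $\prec$ and $\preceq$ to ensure $n_b$ is counted in exactly one of the two pieces (it belongs to the first sum, not the second), and that $n_a$ is excluded from both. Everything else is the observation that division by $\pol(n_a)$ cancels the root-relative conditioning and turns $\sop\costargs{\cdot}{\cdot}$ into a quantity that telescopes along ancestor chains.
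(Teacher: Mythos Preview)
Your proof is correct and follows essentially the same route as the paper: both arguments split the sum in \cref{eq:sop_anc_root} at $n_b$ and use multiplicativity of the path probability. Your version is marginally cleaner in that you first pass to the root-free form $\frac{\sopa{n}{n_a}-1}{\pol(n_a)}=\sum_{n_a\prec\nup\preceq n}1/\pol(\nup)$, which makes the disjoint-union step immediate, whereas the paper keeps the conditional $\pol(\cdot|n_a)$ throughout and converts to $\pol(\cdot|n_b)$ via $\pol(\nup|n_a)=\pol(\nup|n_b)\pol(n_b|n_a)$ before dividing by $\pol(n_a)$ at the end; the underlying computation is the same.
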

This is of the form $f_a(n) = f_a(n_b) + f_b(n)$.
\begin{proof}
From \cref{eq:sop_anc_root},
\begin{align*}
    \sopa{n}{n_a} &= \sum_{n_a \preceq \nup \preceq n} \frac{1}{\pol(\nup|n_a)} \\
    &= \sum_{n_a\preceq \nup\preceq n_b} \frac{1}{\pol(\nup|n_a)}
    + \sum_{n_b\prec \nup \preceq n} \frac{1}{\pol(\nup|n_a)} \\
    &= \sopa{n_b}{n_a}
    + \frac1{\pol(n_b|n_a)}\sum_{n_b \prec \nup \preceq n} \frac{1}{\pol(\nup|n_b)} \\
    &= \sopa{n_b}{n_a} + \frac{1}{\pol(n_b|n_a)}\left(\sopa{n}{n_b}-1\right)\,,
\end{align*}
and the result follows by subtracting 1 on each side, then dividing by $\pol(n_a)$.
\end{proof}

Note that, by contrast, the rooted $\dopa{n}{n_a} = (d(n) - d(n_a)) / \pol(n|n_a)$ does not have a similar telescoping form (with the offending term in red):
\begin{align*}
    \frac{d(n)-d(n_a)}{\pol(n|n_a)} &= \frac1{\pol(n_b|n_a)}\frac{d(n)- d(n_b)}{\pol(n|n_b)} + \frac{d(n_b)-d(n_a)}{\RED{\pol(n|n_b)}\pol(n_b|n_a)}\,, \\
    \dopa{n}{n_a} &= \frac1{\pol(n_b|n_a)}\dopa{n}{n_b} + \RED{\frac1{\pol(n|n_b)}}\dopa{n_b}{n_a}\,, \\
    \frac{\dopa{n}{n_a}}{\pol(n_a)} &= \frac{\dopa{n}{n_b}}{\pol(n_b)} + \RED{\frac1{\pol(n|n_b)}}\frac{\dopa{n_b}{n_a}}{\pol(n_a)}\,.
\end{align*}

\section{Self-Counting Cost Functions}\label{apdx:self_counting}

\begin{example}[Simple monotone self-counting cost functions]
In a binary tree, the cost function $c(n) = 2^{d(n)+1}-1$ is monotone self-counting --- it is also quite tight for perfect infinite binary trees.
The cost function $c(n) = 3^{d(n)+1}$ is also monotone self-counting, even though the it grossly overestimates the number of nodes of cost at most a given bound: for a given node $n^*$ there are far fewer than $3^{d(n^*)+1}$ nodes of cost at most $3^{d(n^*)+1}$ --- since there are only $2^{d(n^*)+1}-1$ of them.
In a binary tree, the cost function $c(n) = 2000$ for the 1023 nodes of depth at most 9 and $c(n) = \infty$ otherwise is also monotone self-counting.
In a chain (every node has only one child), the cost function $c(n) = d(n)+1$ is also monotone self-counting.
\end{example}

\begin{example}[Simple non-monotone self-counting cost-function]
In a binary tree, the cost function $c(n) = 1024$ for all the 1024 nodes at depth 10, and $c(n)=\infty$ for all the other nodes is self-counting but is not monotone. Hence \cref{lem:t_leq_cnt} does not apply.
Indeed, the root has infinite cost, but it is still the first node enumerated by BFS.
\end{example}

\begin{remark}[Inverse probability SCCF]\label{rmk:proba_dist_self-counting}
The inverse of a probability distribution over the nodes is a self-counting cost function.
Indeed, for all $\theta \geq 0$,
\begin{equation*}
    \left|\left\{n: \frac{1}{p(n)} \leq \theta \right\}\right|
    = |\{n: 1 \leq p(n)\theta \}|
    = \sum_{n: 1 \leq p(n)\theta} 1
    \leq \sum_{n: 1 \leq p(n)\theta} p(n)\theta \leq \theta\,.
\end{equation*}
However, a self-counting cost function is not necessarily the inverse of a probability distribution over the nodes.
\end{remark}

\begin{remark}[Harmonic composition of SCCF]
It is tempting to attempt to use the harmonic mean instead of the minimum so as to accumulate the weights for nodes that are shared.
Unfortunately, here is a simple counter-example:
The nodes $n_a$ and $n_b$ are children of the root $n_1$.
The costs are $c_1 = (1, 2, 3)$ for nodes $n_1, n_a, n_b$, and $c_2=(1, 3 , 2)$.
We take $\tpr_1 = \tpr_2 = 1/2$.
Then $c(n_a) = c(n_b) = 1/(1/2\cdot1/2 + 1/2\cdot1/3) = 12/5 < 3$, 
and thus $|\{n: c(n) \leq 12/5 \}| \not\leq 12/5$, so $c$ is not self-counting.
However, if $c_1$ and $c_2$ are inverse probability distributions (and thus are self-counting, see \Cref{rmk:proba_dist_self-counting}),
then their composition using a harmonic mean is
also the inverse of a probability distribution and thus is also self-counting.
\end{remark}

\section{Proof of \texorpdfstring{\Cref{thm:rootlts}}{the Main Theorem}}\label{apdx:rootlts_proof}

First we build a couple of tools to deal with non-monotone cost functions,
and compositions of non-monotone self-counting cost functions.
Then we prove the main theorem.

The following lemma holds for any cost function $c$,
and relates the costs of the nodes anywhere in the search tree to the costs of the ancestors of a particular path of interest.
It says that, if $n_t$ is a descendant of $n_j$, then
all the nodes visited by the best-first search between steps $j$ (excluded) and $t$ have cost at most the maximum cost of the nodes on the path from $n_j$ (excluded) to $n_t$.

\makenamedref{treetopath}{Tree-to-path}

\begin{lemma}[\treetopathname]\treetopathlabel
Let $n_t$ be the node visited by BFS with a (possibly non-monotone) cost function $c$ for all $t\geq 1$.
Then, for all nodes $n_j$ and $n_t$ such that $n_j \prec n_t$, we have 
\begin{align*}
    \max_{j < k \leq t} c(n_k) ~=~ \max_{n_j \prec n \preceq n_t} c(n)\,.
    &\qedhere
\end{align*}
\end{lemma}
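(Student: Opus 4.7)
The plan is to prove the two inequalities separately. The easy direction is $\geq$, and the substantive one is $\leq$, which relies on a standard invariant about the contents of the BFS priority queue.

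\textbf{Direction $\geq$.} Every ancestor of $n_t$ must be visited by BFS before $n_t$ itself, because a node enters $\queue$ only when its parent is visited (\cref{eq:queue_update}). In particular, every node $n$ with $n_j \prec n \preceq n_t$ has $n_j$ as a strict ancestor and so enters the queue only after step $j$, forcing it to be visited at some step $k$ with $j < k \leq t$. Hence $\{n : n_j \prec n \preceq n_t\} \subseteq \{n_{j+1}, \ldots, n_t\}$ and taking maxima of $c$ over the smaller set gives the $\geq$ inequality.

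\textbf{Direction $\leq$.} I would establish the invariant that for every step $k$ with $j < k \leq t$, the queue $\queue_k$ contains at least one node $n'$ with $n_j \prec n' \preceq n_t$. The argument is by induction on $k$: at step $j$, the node $n_j$ is removed from $\queue_j$ and its children (one of which is on the path to $n_t$, or equals $n_t$) are added, giving such an $n'$ in $\queue_{j+1}$; and whenever such an $n'$ is itself visited at some subsequent step $k < t$, its children are pushed into the queue, and again one of them lies on the path from $n_j$ (excluded) to $n_t$ (included).  Given this invariant, for every $k$ with $j < k \leq t$ we can pick such an $n' \in \queue_k$, and by the min-selection rule \cref{eq:visited_node} we have $c(n_k) \leq c(n') \leq \max_{n_j \prec n \preceq n_t} c(n)$. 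Taking the maximum over $k$ yields the $\leq$ direction.

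Combining the two directions gives the claimed equality. The main point to be careful about is that we do \emph{not} need any monotonicity of $c$; the argument rests entirely on the BFS rules \cref{eq:visited_node,eq:queue_update} and the tree structure of ancestry. The only mild subtlety is verifying the queue invariant cleanly: one has to note that when a qualifying ancestor $n'$ of $n_t$ with $n_j \prec n' \preceq n_t$ is itself visited at some step $k < t$, it must be a strict ancestor of $n_t$ (since $n_t$ is not yet visited), so it has a child on the path to $n_t$, and that child is placed in $\queue_{k+1}$, preserving the invariant.
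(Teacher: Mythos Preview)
Your proof is correct and follows essentially the same approach as the paper: establish that some node on the path from $n_j$ (excluded) to $n_t$ sits in $\queue_k$ for every $j<k\leq t$, use the min-selection rule \cref{eq:visited_node} to bound $c(n_k)$ by the maximum path cost, and obtain the reverse inequality by set inclusion. The paper states the queue invariant more tersely while you spell out the induction, but the argument is the same.
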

\begin{proof}
Define $\theta = \max_{n_j \prec n \preceq n_t} c(n)$.
Since $n_j\in Q_j$ (by \cref{eq:visited_node}),
by \cref{eq:queue_update} the priority queues
$\queue_{j+1}, \dots \queue_{t}$ each contain one of the nodes
on the path from $n_j$ to $n_t$
(\ie a node $n$ such that $n_j \prec n \preceq n_t$)
of cost at most $\theta$.
Hence, at step $k$ with $j<k\leq t$
any node in the priority queue $Q_k$ of cost strictly more that $\theta$
cannot be chosen by \cref{eq:visited_node} to be $n_k$.
Therefore, for all $j <k\leq t$, $c(n_k) \leq \theta$,
that is,
$\max_{j < k \leq t} c(n_k) \leq \theta = \max_{n_j \prec n \preceq n_t} c(n)$.
Equality follows due to set inclusion.
\end{proof}

\begin{remark}
Excluding $n_j$ in \treetopathref{} is more general than including it in the maximum:
Let $n_a$ be a node with cost 100,
$n_b$ a child of $n_a$ with cost 10,
$n_c$ a descendant of $n_b$
and assume that the maximum cost between $b$ and $c$ is 10.
Then \treetopathref{} says that 
not only all the nodes visited between  steps $b$ and $c$ (both included) have cost at most 10,
but also that all nodes visited between steps $a$ (excluded) and $b$ have cost at most 10.
\end{remark}

Next, we prove a versatile bound for compositions of (possibly non-monotone) self-counting cost functions.
It generalizes \cref{eq:compose_t_leq_cnt} which applies only to composition of monotone self-counting cost functions.

\makenamedref{generalsccfcomposition}{SCCF composition bound}

\begin{lemma}[\generalsccfcompositionname]\generalsccfcompositionlabel
Let $c(n) = \min_{i \in [N]} \frac{c_i(n)}{\tpr_i}$
for all $n\in\nodeset$
be a composition of $N$ base self-counting cost functions (possibly non-monotone).
Let $n_T$ be the node visited by BFS at step $T$ with the cost function $c$.
Let $S_{i, T}$ be the set of nodes where the weighted base cost function $c_i / \tpr_i$ is the minimum cost in $c$:
\begin{equation*}
    S_{i, T} = \left\{n_t: t\leq T,\ \frac{c_i(n_t)}{\tpr_i} = c(n_t) \right\}\,.
\end{equation*}
Then 
\begin{align*}
    T\ \leq\ \sum_{i\in[N]} \tpr_i \max_{n \in S_{i,T}} c(n)\,.
    &\qedhere
\end{align*}
\end{lemma}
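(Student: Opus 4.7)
The plan is to follow the same three-move template as \Cref{lem:compose_sccf}---cover, pointwise bound, self-counting---but to split the visited set according to which base cost realizes the minimum of $c$. Since $c$ need not be monotone, the clean identity $T \leq c(n_T)$ from \Cref{lem:t_leq_cnt} is unavailable, so instead of bounding $T$ by a single value I will bound it by the \emph{sum} over $i$ of per-component contributions.

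First, I would observe that for every visited node $n_t$ with $t \leq T$ the minimum defining $c(n_t)$ is attained by at least one index $i \in [N]$, so $n_t \in S_{i,T}$ for that $i$. Consequently $\{n_1,\dots,n_T\} \subseteq \bigcup_{i \in [N]} S_{i,T}$, and a union bound yields $T \leq \sum_{i \in [N]} |S_{i,T}|$. Next, I set $\theta_i := \max_{n \in S_{i,T}} c(n)$ (with the convention $\theta_i := 0$ when $S_{i,T} = \emptyset$). For every $n \in S_{i,T}$, by the defining equality of $S_{i,T}$, we have $c_i(n) = \tpr_i\, c(n) \leq \tpr_i \theta_i$, hence $S_{i,T} \subseteq \{n : c_i(n) \leq \tpr_i \theta_i\}$. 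Applying the self-counting property of $c_i$ at the level $\tpr_i \theta_i \geq 0$ then gives $|S_{i,T}| \leq \tpr_i \theta_i$, and summing over $i$ produces
\[
T \;\leq\; \sum_{i \in [N]} |S_{i,T}| \;\leq\; \sum_{i \in [N]} \tpr_i \theta_i,
\]
which is the desired inequality.

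I expect the only delicate point to be handling ties and empty sets cleanly: a visited node may belong to several $S_{i,T}$ simultaneously when multiple weighted base costs tie for the minimum, but this is harmless because the union bound only needs coverage, not disjointness; likewise the empty-set convention keeps the sum well-defined without weakening anything. The non-monotonicity of $c$, which was the main conceptual hurdle (and the reason \cref{eq:compose_t_leq_cnt} cannot be invoked), is sidestepped because the argument operates pointwise on the visited nodes and relies solely on the self-counting property of the individual $c_i$, never on a structural property of $c$ itself.
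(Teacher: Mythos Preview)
Your proof is correct and follows essentially the same approach as the paper: cover the visited nodes by $\bigcup_i S_{i,T}$, apply a union bound, then for each $i$ enlarge $S_{i,T}$ to $\{n:c_i(n)\le \tpr_i\theta_i\}$ and invoke the self-counting property of $c_i$. The paper presents this as a single chain of set manipulations rather than your more narrative two-step (union bound first, then per-component bound), but the logic and the key inequalities are identical.
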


Note that \Cref{eq:Cmax_visits} can be retrieved immediately by relaxing $n\in S_{i,t}$ to $n_t: 1 < t \leq T$ and using \treetopathref.

Also, by definition of $S_{i, T}$,
\Cref{generalsccfcomposition_label} implies $T\ \leq\ \sum_{i\in[N]} \max_{n \in S_{i,T}} c_i(n)$
and, recalling that the cost of a self-counting cost function is an upper bound on the number of steps that BFS takes,
this bound can be interpreted as follows:
The total number of steps of BFS with the compound cost function $c$ is the sum of (an upper bound on) the number of steps of BFS with each base cost function.

\begin{proofnoqed}
We have
\begin{subequations}
\begin{align}
    T = |\{n_t: t\leq T\}|
    &=\left|\left\{n_t: t\leq T,\ \RED{\min_{i\in [N]}\frac{c_i(n_t)}{\tpr_i} = c(n_t)} \right\}\right|  \\
    &=\left|\bigcup_{i\in[N]}\left\{n_t: t\leq T,\ \RED{\frac{c_i(n_t)}{\tpr_i} = c(n_t)} \right\}\right| = \left|\bigcup_{i\in [N]} S_{i, T}\right| \\
    &=\left|\bigcup_{i\in [N]}\left\{n\in S_{i, T}: 
    \frac{c_i(n_t)}{\tpr_i} 
    \leq \max_{\dot n\in S_{i, T}} c(\dot n) \right\}\right|  \\
    &\leq\left|\bigcup_{i\in [N]}\left\{\RED{n}: c_i(\RED{n})
    \leq \RED{\tpr_i}\max_{\dot n\in S_{i, T}} c(\dot n) \right\}\right|  \\
    &\leq \RED{\sum_{i\in [N]}}\left|\left\{n: c_i(n) 
    \leq \tpr_i\max_{\dot n\in S_{i, T}} c(\dot n) \right\}\right|  \\
    &\leq \sum_{i\in [N]} \tpr_i \max_{\dot n\in S_{i, T}} c(\dot n)
\end{align}
\end{subequations}
with
\begin{eqitems}
\item by adding a redundant condition and by definition of $c$,
\item by spreading the $n_t$ into different sets (possibly with repetition),
\item by (redundantly) upper bounding the costs $c(n_t)$ with the maximum cost of the elements of the same sets,
\item by dropping the condition on $n$ and moving $\tpr_i$,
\item using a union bound,
\item using that every $c_i$ is self-counting.
\qedhere
\end{eqitems}
\end{proofnoqed}

Now we turn our attention to $\Crlts$, which is a composition of $\{\Crlts_t\}_t$ --- see \cref{eq:Crlts}.
For all $t$, the cost function $\Crlts_t(\cdot)$ is monotone on the descendants of $n_t$ (excluded).
First, we show that, like for rooted $\sop$, all the base cost functions $\{\Crlts_t\}_t$ are also self-counting.
\begin{lemma}[\boldmath $\Crlts_t$ SCCF]
For all $t\geq 1$,
if $n_t$ is the node visited by BFS with the cost function $\Crlts$,
then the cost function $\Crlts_t(\cdot)$ is self-counting.
\end{lemma}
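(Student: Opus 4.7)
The plan is to reduce the claim to the self-counting property of the rooted slenderness cost function $\sopa{\cdot}{n_t}$ (asserted in the footnote following \cref{eq:sop_anc_root}). The whole argument is essentially a verification that the $-1$ offset in the definition of $\Crlts_t$ (see \cref{eq:Crlts}) is calibrated exactly to compensate for the exclusion of $n_t$ itself from the relevant level set.

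First I would fix an arbitrary $\theta \geq 0$ and rewrite the level set using the definition of $\Crlts_t$. Since $\Crlts_t(n) = \infty$ unless $n_t \prec n$, we have
$$\{n : \Crlts_t(n) \leq \theta\} \ = \ \{n : n_t \prec n \text{ and } \sopa{n}{n_t} \leq \theta + 1\}.$$
The strictness of the condition $n_t \prec n$ excludes $n_t$ itself, while $\sopa{n_t}{n_t} = 1 \leq \theta + 1$ shows that $n_t$ would otherwise belong to the enlarged level set $\{n : n_t \preceq n,\ \sopa{n}{n_t} \leq \theta + 1\}$.

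Next I would invoke the self-counting property of $\sopa{\cdot}{n_t}$ on the subtree rooted at $n_t$ (where this cost is finite and monotone), yielding
$$|\{n : n_t \preceq n,\ \sopa{n}{n_t} \leq \theta + 1\}| \ \leq \ \theta + 1.$$
Subtracting the single element $n_t$ from both sides then gives $|\{n : \Crlts_t(n) \leq \theta\}| \leq \theta$, which is the desired self-counting inequality.

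The only mild subtlety is the careful bookkeeping of $n_t$: the definition of $\Crlts_t$ is engineered precisely so that the $-1$ offset cancels $n_t$'s contribution to the rooted $\sop$ count. Once this is handled, the proof collapses to a one-line application of the rooted $\sop$ self-counting bound, and the hypothesis that $n_t$ is the node actually visited by BFS with $\Crlts$ plays no role in the argument beyond identifying which node indexes the base cost function.
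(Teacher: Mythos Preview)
Your proof is correct and follows essentially the same approach as the paper: both reduce the level set $\{n:\Crlts_t(n)\le\theta\}$ to the rooted $\sop$ level set $\{n:n_t\preceq n,\ \sopa{n}{n_t}\le\theta+1\}$, invoke the self-counting property of $\sopa{\cdot}{n_t}$, and subtract one to account for $n_t$ itself. Your closing remark that the BFS hypothesis plays no essential role is also accurate.
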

\begin{proof}
For all $\theta \geq 0$, for all $t\geq 1$,
recalling that $\Crlts_t(n) =\infty$ for $n_t \not\prec n$, we have
\begin{align*}
    |\{n\in\nodeset:\Crlts_t(n) \leq \theta\}|
    &= 
    |\{n\in\desc(n_t): \Crlts_t(n) \leq \theta\}| \\
    &= 
    |\{n\in\desc(n_t): \RED{\sopa{n}{n_t}} \leq \theta\RED{{}+{}1}\}| \\
    &= 
    |\{n\in\RED{\descn}(n_t): \sopa{n}{n_t} \leq \theta+1\}|\RED{{}-{}1}
    \leq \theta
\end{align*}
where we used that $\sopa{\cdot}{n_t}$ is self-counting on the last inequality.
\end{proof}

We can now specialize \generalsccfcompositionref{} to $\Crlts$.
\begin{corollary}[\boldmath $\Crlts$ composition bound]\label{lem:Crlts_composition}
Let $n_T$ be the node visited by BFS at step $T$ with the cost function $\Crlts$.
Then 
\begin{align*}
    T\ \leq\ 1+\sum_{i < T} \tpr_i \max_{i < t \leq T} \Crlts(n_t)\,.
    &\qedhere
\end{align*}
\end{corollary}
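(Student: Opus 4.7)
The plan is to apply the \generalsccfcompositionref{} directly to $\Crlts$ viewed as a composition of the base self-counting cost functions $\{\Crlts_t\}_{t\in\Naturals}$, and then to tighten two points in the resulting bound: the special treatment of the root $n_1$, and the observation that each base cost function $\Crlts_i$ contributes only at steps $t > i$.

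First I would recall that by \cref{eq:Crlts}, $\Crlts(n) = \min_{i\in\Naturals} \Crlts_i(n)/\tpr_i$ for $n \in \desc(n_1)$, and that each $\Crlts_i$ is self-counting by the lemma just before the corollary. The general composition bound then yields
\begin{equation*}
    T \ \leq\ \sum_{i\in\Naturals}\tpr_i \max_{n\in S_{i,T}} \Crlts(n)\,,
    \qquad S_{i,T} = \Bigl\{n_t: t\leq T,\ \tfrac{\Crlts_i(n_t)}{\tpr_i} = \Crlts(n_t)\Bigr\}\,.
\end{equation*}
Strictly speaking, the general lemma was stated for a finite composition, but the same proof (union bound, self-counting, union of countably many sets) goes through word-for-word with $i\in\Naturals$, so this will not be an obstacle.

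Next I would localize which indices $i$ can actually contribute. By the definition of $\Crlts_i$ in \cref{eq:Crlts}, $\Crlts_i(n)=\infty$ whenever $n_i\not\prec n$. In particular, $\Crlts_i(n_i)=\infty$, so $n_i \notin S_{i,T}$, and more generally any $n_t \in S_{i,T}$ satisfies $n_i\prec n_t$ and hence $i < t$. Conversely the root $n_1$ never lies in any $S_{i,T}$ (it is not a strict descendant of any visited node), so it needs to be accounted for separately. Splitting the count
\begin{equation*}
    T = 1 + |\{n_t: 1 < t \leq T\}| = 1 + \Bigl|\bigcup_{i<T} S_{i,T}\Bigr|
\end{equation*}
and redoing the union-bound/self-counting chain of \generalsccfcompositionref{} on the right-hand side gives $T \leq 1 + \sum_{i<T} \tpr_i \max_{n\in S_{i,T}} \Crlts(n)$.

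Finally, I would relax the inner maximum from $n\in S_{i,T}$ to $n\in\{n_t: i<t\leq T\}$, which is a valid superset by the localization step above. This delivers the stated bound
\begin{equation*}
    T \ \leq\ 1 + \sum_{i<T}\tpr_i \max_{i < t \leq T} \Crlts(n_t)\,.
\end{equation*}
There is no real difficulty in the argument; the only subtlety worth being careful about is the ``$+1$'' for the root, which comes from the fact that $n_1$ is unreachable by any base cost function in the composition (all of them assign it cost $\infty$) and must be counted by hand. Everything else is a direct specialization of the general SCCF composition bound.
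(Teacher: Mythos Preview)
Your proposal is correct and follows essentially the same route as the paper: apply \generalsccfcompositionref{} to the family $\{\Crlts_i\}_i$, account for the root, and relax $S_{i,T}$ to $\{n_t:i<t\leq T\}$ via the observation that $n_t\in S_{i,T}$ forces $n_i\prec n_t$ and hence $i<t$. The only cosmetic difference is that the paper, instead of splitting off the root by hand, absorbs the convention $\Crlts(n_1)=1$ into a fictitious extra base cost function $\Crlts_0$ with $\Crlts_0(n_1)=1$, $\Crlts_0(n)=\infty$ for $n\neq n_1$, and $\tpr_0=1$, so that the lemma applies verbatim and the ``$+1$'' appears as the term $\tpr_0\max_{n\in S_{0,T}}\Crlts(n)=1$.
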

\begin{proof}
The specific cost $\Crlts(n_1) = 1$ at the root (see \cref{eq:Crlts})
is equivalent to including in $\Crlts$ a trivial self-counting cost function $\Crlts_0$ such that $\Crlts_0(n_1) =1$ and $\Crlts_0(n) =\infty$ for all $n\neq n_1$,
and with $\tpr_0 = 1$.
Then observe that $\max_{n\in S_{0, T}}\Crlts(n) = 1$.
The result then follows from the definition of $\Crlts$ in \cref{eq:Crlts}
and \generalsccfcompositionref,
and relaxing $n\in S_{i,T}$ to $n_t$ being visited strictly after the root $n_i$ of the base cost function $\Crlts_i$ is visited.
\end{proof}

At this point, from \Cref{lem:Crlts_composition} we can easily obtain an equivalent of \Cref{cor:WTmax_bound} by relaxing $\max_{i < t \leq T}\Crlts(n_t) $ to $\max_{\RED{1}<t\leq T}\Crlts(n_t)$ and then using \treetopathref{} to get:
\begin{equation*}
    T \leq 1+\Tprm{< T} \max_{n_1 \prec n\preceq n_T} \Crlts(n)\,.
\end{equation*}

Now, to prove the more general bound of \Cref{thm:rootlts},
all that remains to do is to group the weights $\tpr_j$ into relevant sets
and upper bound the terms $\max_{i < t \leq T} \Crlts(n_t)$.
\treetopathref{} will again prove useful to relate to the costs of the nodes on the path from $n_1$ to $n_T$.

\begin{proof}[\Proofof{} \Cref{thm:rootlts}]
\newcommand{\cdec}{c^{\mathrm{dec}}}
The following holds for any chosen subtask decomposition $n_1 = n_{T_1}\prec \dots\ n_{T_m} = n_T$.
Define for all $t\leq T$,
with $i<m$ such that $T_i < t \leq T_{i+1}$,
\begin{equation*}
    \cdec(n_t) = \frac{\Crlts_{T_i}(n_t)}{\tpr_{T_i}}\,.
\end{equation*}
The cost function $\cdec$ is an upper bound of $\Crlts$ tailored to the subtask decomposition.
Indeed, for all $n_1 \prec n\preceq n_T$, using the definition of $\Crlts$ in \cref{eq:Crlts},
\begin{align}\label{eq:crlts_leq_cdec}
    \Crlts(n) \leq \min_{i < m} \frac{\Crlts_{T_i}(n)}{\tpr_{T_i}} \leq \cdec(n)\,.
\end{align}
Moreover, by monotonicity of $\Crlts_{T_j}$, 
for $t,j$ with $T_j < t \leq T_{j+1}$
we have $\cdec(n_t) \leq \cdec(n_{T_{j+1}})$,
and combining with \cref{eq:crlts_leq_cdec}:
\begin{align}\label{eq:crlts_leq_max_cdec}
    \Crlts(n_t)\leq \cdec(n_{T_{j+1}}) \leq \max_{j\leq i < m} \cdec(n_{T_{i+1}})\,.
\end{align}
Hence, from \Cref{lem:Crlts_composition},
\begin{subequations}
\begin{align}
    T-1&\leq \sum_{k<T}\tpr_k \max_{k < t \leq T}\Crlts(n_t) \notag\\
    &= \RED{\sum_{j < m}}\ \sum_{\RED{T_j\leq k< T_{j+1}}}\tpr_k \max_{k < t \leq T}\Crlts(n_t) \\
    &\leq \sum_{j < m}\ \sum_{T_j\leq k< T_{j+1}}\tpr_k \max_{\RED{T_j} < t \leq T}\Crlts(n_t) \\
    &=\sum_{j < m}\ \sum_{T_j\leq k< T_{j+1}}\tpr_k \max_{\RED{n_{T_j} \prec n_t \preceq n_T}}\Crlts(n_t) \\
    &\leq \sum_{j < m}\ \sum_{T_j\leq k< T_{j+1}} \tpr_k \max_{\RED{j \leq i < m}}\RED{\cdec(n_{T_{i+1}})}\\
    &= \sum_{j< m} (\Tprm{<T_{j+1}} - \Tprm{< T_j}) \max_{j \leq i < m}
    \frac1{\tpr_{T_i}}(\sopa{n_{T_{i+1}}}{n_{T_i}} - 1)\\
    &\leq \RED{-1} +\sum_{j< m} (\Tprm{<T_{j+1}} - \Tprm{< T_j}) \max_{j \leq i < m}
    \frac1{\tpr_{T_i}}\sopa{n_{T_{i+1}}}{n_{T_i}} 
\end{align}
\end{subequations}
where we have
\begin{eqitems}
\item by partitioning $k$ into the segments of the $m$ subtask decomposition,
\item by relaxing the condition on $t$,
\item by \treetopathref,
\item by \cref{eq:crlts_leq_max_cdec},
\item by definition of $\Tprm{<\cdot}$, and by definition of $\cdec$.
\item by discarding all $-1$ terms for all $j <m-1$,
and using $(\Tprm{<T_m} - \Tprm{< T_{m-1}})/\tpr_{T_{m-1}}\geq 1$ to extract the remaining $-1$ term.
\end{eqitems}
The result follows by adding 1 on both sides.
\end{proof}

\begin{remark}[Minimum subtask decomposition]
Define $\mathcal{D}(n_T)$ to be the set of all subtask decompositions
 of $n_T$.
\Cref{thm:rootlts} is given in the form ``for all subtasks decompositions'',
which is equivalent to
\begin{align*}
    T \leq \min_{\{T_i\}_{i < m} \in \mathcal{D}(n_T)} 
    \sum_{j< m} (\Tprm{<T_{j+1}} - \Tprm{< T_j}) \max_{j \leq i < m}
    \frac1{\tpr_{T_i}}\sopa{n_{T_{i+1}}}{n_{T_i}}\,.
    &\qedhere
\end{align*}
\end{remark}

\section{Comparison between LTS and \rootlts{}}\label{apdx:compare_lts}

To be able to compare the bounds of LTS and \rootlts{} (the simplified bound of \Cref{cor:WTmax_bound}), we can rewrite the LTS bound as a `subtask decomposition'.
Let $n_T$ be the node found at step $T$ by
either LTS or \rootlts{} using the cost function $\sop(\cdot)$, then for any subtask decomposition $\{n_{T_i}\}_{i< m}$ is bounded by
\begin{align*}
    \text{LTS:}\quad T\ &\leq\ \slend(n_T)\, \BLUE{\prod_{i < m}}\, \frac1{\RED{\pol(n_{T_{i+1}}\mid n_{T_i})}}\,, \\[2em]
    \text{\rootlts{}:}\quad T &\ \leq\  
    \,\BLUE{\max_{1 \leq i < m} \frac{\Tprm{<T}}{\tpr_{T_i}}}\,
    \frac{\slend\costargs{n_{T_{i+1}}}{n_{T_i}}}{\RED{\pol(n_{T_{i+1}}\mid n_{T_i})}}\,.
\end{align*}
(Note that the value of $T$ may be differ depending on the algorithm.)
On first reading, omit the terms $\slend(\cdot)$ which are usually small
compared to the other terms --- and recall that $\sopa{\ndn}{n} \leq d(\ndn) - d(n)$.
Each term $\pol(\cdot|\cdot)$ is a product of `action' probabilities,
so $1/\pol(n_{T_{i+1}}|n_{T_i})$ can easily be exponential with the depth of $n_{T_{i+1}}$ relative to $n_{T_i}$.
Now, LTS features a \emph{product} of these terms,
which can itself be exponential with the number of `subtasks' $m-1$ in the considered decomposition.
By contrast, \rootlts{} features a \emph{maximum}, which is a significant improvement over the product in LTS, at the price of the factor $\Tprm{<T} / \tpr_{T_i}$.
A (very) good rerooter could set $\tpr_{T_i} = 1$ for all $i< m$ on the path toward $n_T$,
and $\tpr_t = 0$ everywhere else. Then $\Tprm{<T} = m-1$, and the \rootlts{} bound improves exponentially over the LTS bound.

Of course, the rerooter may not be very good.
In particular, in \Cref{sec:robustness}, it is explained that the factor $\Tprm{<T}$ may be linear with $T$, leading to vacuous bounds, and a transformation of input rerooting weights is proposed to ensure that the bounds do not become vacuous.

Observe also that setting $\tpr_1=1$ and $\tpr_t = 0$ everywhere else makes the \rootlts{} and LTS bounds coincide.

\section{Additional Rerooting Weight Schemes}\label{apdx:weight_schemes}

\Cref{sec:robustness} shows one way to reparameterize the input rerooting weights to provide additional guarantees.
Alternative schemes can be derived.
First, we need this simple result:
\begin{lemma}\label{lem:mvt}
Let $f$ be a differentiable function on $[a, b]$ (possibly with $a=b$), then 
\begin{align*}
    f(b) - f(a) ~\geq~ (b - a) \min_{x\in[a, b]} f'(x)\,.
    &\qedhere
\end{align*}
\end{lemma}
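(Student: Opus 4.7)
The plan is to reduce this directly to the mean value theorem. First I would dispatch the degenerate case $a=b$ immediately: both sides are zero, so the inequality holds trivially (with equality).

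For the case $a<b$, I would invoke the classical mean value theorem, which guarantees the existence of some $c\in(a,b)$ such that
\begin{align*}
    f(b) - f(a) = (b-a)\,f'(c)\,.
\end{align*}
Since $c\in[a,b]$, we have $f'(c) \geq \min_{x\in[a,b]} f'(x)$, and since $b-a > 0$ this inequality is preserved after multiplication by $(b-a)$, yielding the claim.

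There is essentially no obstacle here: the statement is a one-line corollary of the mean value theorem, with the only subtlety being the handling of the allowed degenerate case $a=b$ (which the author flags explicitly). The assumption that $f$ is differentiable on $[a,b]$ (and hence continuous) is exactly what is needed to apply MVT. I would not belabor regularity conditions beyond what is stated, since this lemma is clearly intended as a lightweight technical tool to be used in subsequent weighting-scheme arguments.
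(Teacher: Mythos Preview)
Your proposal is correct and matches the paper's approach exactly: the paper's proof consists of the single sentence ``Follows directly from the mean value theorem.'' Your version is simply a more fleshed-out rendering of that one line, including the (trivial) handling of the $a=b$ case.
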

\begin{proof}
Follows directly from the mean value theorem.
\end{proof}

Now, given arbitrary \emph{input} rerooting weights $\{\ttpr_t\}_t\geq 0$, 
for some function $f:[0,\infty) \to [0,\infty)$,
define the rerooting weights for all $t$:
\begin{equation*}
    \tpr_t = f(\TTprm{\leq t}) - f(\TTprm{\leq t-1})\,.
\end{equation*}
Then 
\begin{equation*}
    \Tprm{< T} = f(\TTprm{< T}) - f(0)\,.
\end{equation*}
Using \Cref{lem:mvt}, and observing that $\TTprm{\leq t} -\TTprm{<t} = \ttpr_t$ we also obtain
\begin{equation*}
    \tpr_t \geq \ttpr_t \min_{x\in[\TTprm{< t},\, \TTprm{\leq t}]} f'(x)\,.
\end{equation*}

\begin{example}
Taking $f(x) = -1/\ln(e+x)$ we obtain:
\begin{align*}
    \Tprm{< T} &\leq 1\,,&
    \text{ and }&&
    \tpr_t &\geq \frac{\ttpr_t}{(e+\TTprm{\leq t})\ln^2(e+\TTprm{\leq t})}
\end{align*}
and thus from \cref{cor:WTmax_bound} we get
that for every subtask decomposition $n_1=n_{T_1}\prec\dots\ n_{T_m} = n_T$,
\begin{equation*}
    T \ \leq\  
    \max_{1 \leq i < m}
    \frac{(e+\TTprm{\leq T_i})}{\ttpr_{T_i}}\ln^2(e+\TTprm{\leq T_i})
    \sopa{n_{T_{i+1}}}{n_{T_i}}\,.
\end{equation*}
where the $\ttpr$ factors are fully independent of $T$.
Compare the factor $\ln^2(e+\TTprm{\leq t})$ with the factor 
$1+\ln(\TTprm{< T}/\ttpr_1)$ of \Cref{sec:robustness}.
\end{example}

Some other interesting choices to consider:
$f(x) = \sqrt{x}$,
$f(x) = \ln(1+\alpha x)$ with $\alpha > 0$,
$f(x) = \ln\ln (1+ x)$,
$f(x) = -1/\sqrt{\ln(e+x)}$, 
etc.

\section{Speeding Up the Calculation of the Cost}\label{apdx:calc_cost}

Recall from \cref{eq:Crlts} that the cost of a node $n$ for \rootlts{} is
\begin{align*}
    \Crlts(n) = \min_{n_t \prec n} \frac{\Crlts_t(n)}{\tpr_t}\,.
\end{align*}
The naive calculation of this cost --- as in \Cref{alg:rootlts} (p. \pageref{alg:rootlts}) --- is linear with the depth of $n$.
This may not be particularly problematic if other elements of the search require heavier computation (such as simulating the environment or querying a neural network).
But, using the following result, which exploits a property of $\sop$, it is still possible to reduce this cost to $O(1)$ computation per step for most cases.

\begin{lemma}\label{lem:discard_cfs}
Let $\{n_a, n_b\} \subseteq\anc(n_t)$ be two ancestors of a node $n_t$.
If 
\begin{align*}
    \frac{\Crlts_a(n_t)}{\tpr_a}\leq \frac{\Crlts_b(n_t)}{\tpr_b}
    \quad\text{and}\quad
    \frac{\pol(n_a)}{\tpr_a} \leq \frac{\pol(n_b)}{\tpr_b}
\end{align*}
then for all $n\in\descn(n_t)$,
\begin{align*}
    \frac{\Crlts_a(n)}{\tpr_a}\leq \frac{\Crlts_b(n)}{\tpr_b}\,.
    &\qedhere
\end{align*}
\end{lemma}
This implies that $\Crlts_b(n)/\tpr_b$ never plays a role in the minimum calculation of $\Crlts(n)$.
\begin{proof}
From \Cref{lem:sop_telescopes}
(where we substitute $n_a\leadsto n_a$, $n_b\leadsto n_t$)
it follows that, for all $n\in\descn(n_t)$,
\begin{align*}
    \frac{\Crlts_a(\BLUE{n})}{\pol(n_a)} = \frac{\Crlts_a(\RED{n_t})}{\pol(n_a)}
    + \frac{\Crlts_{\RED{t}}(\BLUE{n})}{\pol(\RED{n_t})}\,,
\end{align*}
and similarly for $b$.
Multiplying all sides by $\pol(n_a) / \tpr_a$ and using the assumptions we obtain
\begin{align*}
    \frac{\Crlts_a(n)}{\tpr_a} 
    \ =\ \frac{\Crlts_a(n_t)}{\tpr_a}
    + \frac{\pol(n_a)}{\tpr_a}\frac{\Crlts_t(n)}{\pol(n_t)}
    \ \leq\ \frac{\Crlts_b(n_t)}{\tpr_b}
    + \frac{\pol(n_b)}{\tpr_b}\frac{\Crlts_t(n)}{\pol(n_t)}
    \ =\ \frac{\Crlts_b(n)}{\tpr_b} \,.
    &\qedhere
\end{align*}
\end{proof}

\Cref{alg:update_cfs} makes use of \Cref{lem:discard_cfs} 
to filter out ancestors that are not needed to calculate the cost of any descendant.

\begin{remark}[Binary weights]\label{rmk:bin_weights_optim}
If $\tpr_t\in\{0,1\}$ for all $t$,
then both conditions of \cref{lem:discard_cfs}
are satisfied if $\tpr_b=1$, for all $n_a \prec n_b$.
This means that the calculation of $\Crlts(n)$ only ever requires considering the deepest ancestor $n_b$ of $n$ with $\tpr_b = 1$.
That is, in \cref{alg:update_cfs}, $|A(n_k)| = 1$ for all $k$.
\end{remark}

\comment{The worst case is triggered in a chain by setting at depth $t$
$\tpr_t = 1/(D+t)$ which (apparently) forces to keep up to $D+2$ cost functions.}

\begin{algorithm}[thbp]
\caption{A simple implementation of the optimization of \cref{apdx:calc_cost}.
The algorithm maintains a subset $A(n_k)$ of the ancestors of $n_k$ needed to calculate the costs of one of its children $n_k$.
Calculating the cost (at step $k$) of $n_t$ and $A(n_t)$ takes time $O(|A(n_k)|)$.
See also \Cref{rmk:num_stability} for additional numerical stability.
\warning{This simple version may miss some removals}
}
\label{alg:update_cfs}
\begin{lstlisting}
# $\lcc{n_k}$: parent of $\lcc{n_t}$
# $\lcc{n_t}$: node for which to calculate the cost
# $\lcc{A(n_k)}$: Subset of the ancestors of $\lcc{n_k}$ needed to calculate
#   the cost of the children of $\lcc{n_k}$
# $\lcc{\pol(n_t|n_k)}$: Policy probability of the transition from $\lcc{n_k}$ to $\lcc{n_t}$
# Returns: the cost of $\lcc{n_t}$ and its subset of ancestors $\lcc{A(n_t)}$.
def cost_and_A($n_k$,  $n_t$, $A(n_k)$, $\pol(n_t|n_k)$):
  # Incremental update of the costs
  $\Big.\Crlts_k(n_t) = 1/\pol(n_t|n_k)$
  for $n_a$ in $A(n_k)$:
    $\Big.\pol(n_t|n_a) = \pol(n_k|n_a)\pol(n_t|n_k)$ 
    $\Big.\Crlts_a(n_t) = \Crlts_a(n_k) + 1/\pol(n_t|n_a)$
  
  $A_t$ = $A(n_k)\cup \{n_k\}$  # temporary set
  
  $\displaystyle a^* = \argmin_{a\in A_t} \frac{\Crlts_a(n_t)}{\tpr_a}$ ; $\displaystyle\Crlts(n_t) = \frac{\Crlts_{a^*}(n_t)}{\tpr_{a^*}}$
  
  # Keep ancestors that cannot be discarded by (*\lcc{\cref{lem:discard_cfs}}*) applied to $\lcc{a\leadsto a^*}$.
  $\displaystyle A(n_t) = \left\{n_b \in A_t :\ \pol(n_{a^*}) / \tpr_{a^*} > \pol(n_b) / \tpr_b \right\}$
  
  return $\Crlts(n_t)$, $A(n_t)$
\end{lstlisting}
\end{algorithm}

\begin{remark}[Numerical stability]\label{rmk:num_stability}
\todo{Add $\slend'$ to table of notation? Only used in this remark though.}
Because the quantities $\pol(\cdot)$ can become exponentially small with the depth,
some care needs to be taken to avoid numerical instability when calculating the costs of the nodes.
Recall that, from \cref{eq:Crlts} and \cref{eq:sop}, for all $n_b\prec n_k \prec n_t$ with $n_t\in\children(n_k)$,
\begin{align*}
    \Crlts_b(n_t) = \Crlts_k(n_t) + \frac1{\pol(n_t|n_b)}\,.
\end{align*}
Define
\begin{align*}
    \slend'\costargs{n_t}{n_b} = \Crlts_b(n_t)\pol(n_t|n_b)\,.
\end{align*}
We recommend maintaining the quantity $\slend'\costargs{n_t}{n_b}$ in linear space, and the quantity $\pol(n_t|n_b)$ in log space.
The quantities can be updated from a node $n_k$ to one of its children $n_t$ as follows:
\begin{align*}
    \slend'\costargs{n_t}{n_b} &= \slend'\costargs{n_k}{n_b}\pol(n_t|n_k) + 1\,, \\
    \ln \pol(n_t|n_b) &= \ln \pol(n_k|n_b) + \ln \pol(n_t|n_k)\,.
\end{align*}
Then the cost can also be calculated in log space --- recall that best-first search is invariant to monotone increasing transformations of the cost function:
\begin{align*}
    \ln \Crlts_b(n_t) &= \ln \slend'\costargs{n_t}{n_b} - \ln \pol(n_t|n_b) \,,\\
    \ln \Crlts(n_t) &= \min_{n_b \prec n_t} \ln \Crlts_b(n_t) - \ln \tpr_b\,.
\end{align*}
Moreover, the condition
\begin{align*}
    \frac{\pol(n_a)}{\tpr_a}\ \leq\ \frac{\pol(n_b)}{\tpr_b}
    \quad\text{is equivalent to}\quad
    \frac{1}{\tpr_a\pol(n|n_a)}\ \leq\ \frac{1}{\tpr_b\pol(n|n_b)}
\end{align*}
for all $n\in\descn(n_a)\cap\descn(n_b)$.
The latter form (in log space) can be more numerically stable as most of the time it requires less precision.
\end{remark}

\section{Detailed Example: Sokoban}\label{apdx:sokoban}

\def\numcells{89}
\def\numstates{207\,538\,210}
\def\actseqlen{25}
\def\sopnT{195\,879\,469}
\def\Mone{1\,143\,408}
\def\Mtwo{125\,496}
\def\Mthree{3\,024}
\def\sopta{733}
\def\soptatb{31}
\def\soptbtc{229}
\def\soptctd{393}
\def\soptb{7213}
\def\soptc{795\,181}
\def\alldecompbound{141\,782\,592}
\def\decompskipta{114\,954\,336}
\def\decompcbound{4\,753\,728}

\begin{figure}[htb]
    \centering
    \includegraphics[width=.4\textwidth]{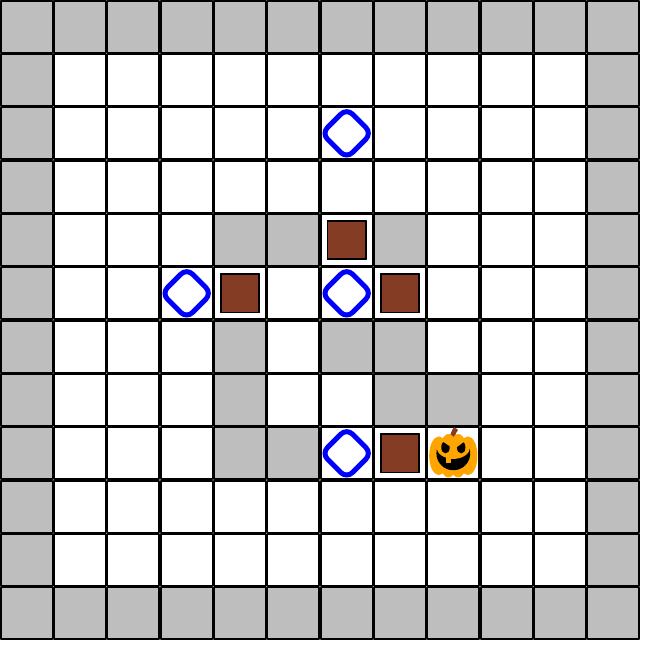}
    \caption{A simple level of Sokoban. The player (the pumpkin) 
    can move in all 4 directions (up, down, left, right) and 
    must push all 4 boxes (brown squares)
    onto one goal spot each (blue diamond cells).
    Boxes cannot be pulled.}
    \label{fig:sokoban}
\end{figure}

To make our ideas a little more concrete, we make some back-of-the-envelope calculations to analyse the behaviour of \rootlts{} on a level of Sokoban --- see \Cref{fig:sokoban}.
The reader must keep in mind that this serves only illustrative purposes.
In particular, this level of Sokoban has been designed to demonstrate some interesting features of our results, and most of the numbers we present are likely largely overestimated. 
Both the rerooter and the policy we design are simplistic for the sake of clarity.

A \emph{state} is one configuration of the board.
Multiple nodes in the search tree may correspond to the same states.
There are \numcells{} non-wall cells, 4 boxes, 4 goal spots and 1 player.
That is, there are \numcells{} possible places for the 5 moving objects and,
since the 4 boxes are not distinguishable,
the number of possible states is at most (but close to) $\numcells{}!/(1!4!(\numcells{}-4-1)!)=\numstates$.
With this kind of Sokoban level configuration, breadth-first search \emph{with transposition tables} --- to ensure visiting each state only once --- likely takes about half this number of node visits on average.

For the level displayed in \Cref{fig:sokoban},
we consider the following solution path.
Starting at the root, the player takes the following actions:\\
\texttt{down left left up up left up up left,\\
  right right up up,\\
  right right down down left,\\
  right down right down down left left,}\\
and ends up in a solution state at a node $n_T = n^*$, where all 4 boxes are on goal spots.
In the sequence above, commas are placed each time a box has been pushed on a goal spot.
The number of actions (= depth of the solution) in this sequence is \actseqlen{}.

We define the policy to be uniform among the actions that lead to states
that differ from the current state and from the previous state.
In particular, this avoids undoing the previous action, such as moving down after moving up --- unless a box has been pushed.
Note that this is significantly weaker than if we had used full transposition tables,
as it does not prevent states from being visited multiple times,
but this allows us to read the policy's probabilities directly from the picture.
Let us rewrite the sequence of actions above, 
with each action preceded by the number of child nodes of positive policy probability:\\
\texttt{
(3)down (3)left (2)left (3)up (2)up (1)left (1)up (1)up (2)left, \\ 
(2)right (2)right (2)up (2)up, \\
(4)right (2)right (3)down (2)down (3)left, \\
(2)right (3)down (1)right (3)down (2)down (3)left (2)left.\\
}
Multiplying all these numbers together and taking the inverse gives the path probability $\pol(n^*)$.
We can calculate $\sop(n^*) = \sop(n_T) = \sopnT$.
This is an upper bound on the number of steps that LTS with this policy needs to perform to find this particular solution node.

Now let us design a simple rerooter.
For the sake of the argument, we assume that no state is visited twice ---
which, as we mentioned earlier, could be enforced at the policy level.
We consider 3 `types' of clues, and each type will have a different associated rerooting weight.
A clue node is of clue type $z\in\{1,2, 3\}$
if exactly $z$ boxes are on a goal spot,
and a box has just been pushed on a goal spot.

Let us make some quick estimates.
There are $N=\numcells$ non-wall cells.
For clues of type 3,
if exactly 3 boxes (chosen out of 4) are on goal spots,
then the remaining box can be in any of the $N-5$ cells (removing the 4 goal positions and the player's position).
For each such positioning of the boxes, the player can be in at most one of 3 cells surrounding each box on a goal spot, so a total of 9 positions.
So our estimate of the number of clues of type 3 is $\binom{4}{3}\binom{N-5}{1}\times 9$.

More generally, for clues of type $z\in\{1,2,3\}$, we estimate the number of clues to be
\begin{equation*}
    M_z = \binom{4}{z}\binom{N-5}{4-z}\times z\times 3\,.
\end{equation*}
where $z\times 3$ is the number of possible positions of the player around one of the $z$ boxes on their goal spots.
We obtain
\begin{align*}
    M_1 &= \Mone\,,&
    M_2 &= \Mtwo\,,&
    M_3 &= \Mthree\,.
\end{align*}
If $n_t$ is a clue node of type $z$ then we set $\tpr_t = 1/M_z$,
otherwise $\tpr_t = 0$.
We also set $\tpr_1 = 1$.
Note that
\begin{equation*}
    \Tprm{<T} \leq 
    \tpr_1 + \sum_{z\leq 3} \text{number of clues of type } z\times 1/M_z
    \leq 4\,.
\end{equation*}

Now let us estimate a bound on the step $T$ at which $n^* = n_T$ is visited for the example of \Cref{fig:sokoban} and the sequence of actions above.

Let $t_1, t_2, t_3$ be the steps at which \rootlts{} visits 
the 3 clue nodes (of types $1,2,3$ in order) of the solution trajectory.

\Cref{thm:rootlts} and \Cref{cor:WTmax_bound} allow us to choose any convenient decomposition into subtasks to calculate a bound,
for a given sequence of actions and the corresponding resulting node $n_T$, \emph{for the analysis, without changing the algorithm}.
So first, let us choose to not decompose at all,
with the subtask decomposition $\{n_1, n_T\}$.
Then, as in \cref{eq:Tw1_bound} we have
\begin{equation*}
    T \leq \Tprm{<T}\sop(n_T) \leq 4 \times \sopnT\,.
\end{equation*}
Obviously, this is not a great bound, as it is a factor 4 worse than the LTS bound.

Second, let us choose a decomposition $\{n_1, n_{t_1}, n_{t_2}, n_{t_3}, n_T\}$
on all 3 clues of each type visited on the solution path.
Then, from \Cref{cor:WTmax_bound},
\begin{align*}
    T &\leq \Tprm{<T} \max\left\{ 
    \sop(n_{t_1}),
    \ \mask{\Mone\times\soptatb}{M_1\sopa{n_{t_2}}{n_{t_1}}},
    \ M_2\sopa{n_{t_3}}{n_{t_2}},
    \ M_3\sopa{n_T}{n_{t_3}}
    \right\}\\
    &=
    \Tprm{<T}
    \max\left\{
    \mask{\sop(n_{t_1})}{\sopta},
    \ \Mone\times\soptatb,
    \ \mask{M_2\sopa{n_{t_3}}{n_{t_2}},}{\Mtwo\times\soptbtc},
    \ \Mthree\times\soptctd 
    \right\} \\
    &\leq \alldecompbound\,.
\end{align*}
While this bound is an improvement over the previous one, as it matches roughly the LTS bound, it still does not offer any advantage.
The culprit is the large number $M_1$:
There are too many clues of type 1, that is, these clues are not informative enough.
So let us choose a decomposition $\{n_1, n_{t_2}, n_{t_3}, n_T\}$
that `skips over' the clue node of type 1, which gives:
\begin{align*}
    T &\leq \Tprm{<T} \max\left\{ 
    \sop(n_{t_2}),
    \ \mask{{}\Mtwo\times\soptbtc{}}{{}M_2\sopa{n_{t_3}}{n_{t_2}}{}},
    \ M_3\sopa{n_T}{n_{t_3}}
    \right\}\\
    &\leq
    \mask{{}\Tprm{<T}{}}{4} \max\left\{ 
    \mask{{}\sop(n_{t_2}){}}{{}\soptb{}},
    \ \Mtwo\times\soptbtc,
    \ \Mthree\times\soptctd
    \right\} \\
    &= \decompskipta\,.
\end{align*}
This bound improves a little bit on the previous one, but not significantly.
$M_2$ is also too large a number.
While there are fewer clue nodes of type 2 than of type 1, 
the information that clue nodes of type 2 provide is not enough
to compensate for the difficulty to find the next clue node (of type 3) from the clue node of type 2.
Let us try one more decomposition $\{n_1, n_{t_3}, n_T\}$ that skips over the first two clue nodes:
\begin{align*}
    T &\leq \Tprm{<T} \max\left\{ 
    \mask{\soptc}{\sop(n_{t_3})},
    M_3\sopa{n_T}{n_{t_3}}
    \right\}\\
    &\leq
    \mask{{}\Tprm{<T}{}}{4} \max\left\{ 
    \soptc,
    \ \Mthree\times\soptctd
    \right\}\\
    &= \decompcbound\,.
\end{align*}
This time we obtain a bound that improves over the LTS bound by a factor 40
--- and, since the clues of types 1 and 2 are always too numerous, it is natural to remove them altogether to save another factor 2 in $\Tprm{<T}$.

Alternatively, for a clue of type $k$ at node $n_t$ we could assign
a rerooting weight $\tpr_t = 1/[(1+q_{k,t})\ln (1+M_k)]$ where $q_{k,t}$ is the number of clues of type $k$ seen up to step $t$ (included).
Then we would still have $\Tprm{<T} \leq 4$, but early clues of type $k$ would have a significantly higher weight than $1/M_k$.

If $M_k$ is unknown or hard to estimate accurately, we can also simply set $\tpr_t = 1/(1+q_{k,t})$, leading to $\Tprm{<T} \leq 1+\sum_k \ln (1+ q_{k,T}) \leq 1+3\ln (1 + \sum_k q_{k,T} / 3)$ (using Jensen's inequality) and thus in the worst case $\Tprm{<T} \leq 1+3\ln (T/3)$.
And, indeed, a numerical simulation (\emph{without} transposition tables) shows that, with this rerooter, the Sokoban level is solved at step $T=1\,853$,
with $\Tprm{<T} < 3.83$ and the numbers of visited clue nodes 
is 8 for type 1, 1 for type 2, and 1 for type 3.
Using this information, we can calculate the bound on the number of visits by decomposing into 4 subtasks:
\begin{align*}
    T \leq 3.83 \max\{\sopta,\ 9\times\soptatb,\ 2\times\soptbtc,\ 2\times\soptctd\} \leq 3\,011\,.
\end{align*}
Observe that this bound is not even a factor 2 away from the actual number of node visits, and is less than a factor 6 of the cube root of the LTS bound $\sop(n_T)= \sopnT$.

\ifdraft
\input{old-and-misc/more-ideas}
\fi

\clearpage
\section{Table of Notation}\label{apdx:table_notation}
\begin{tabular}{ll}
$\Naturals$         & $\{1, 2, \dots\}$ \\
$[N]$               & $\{1,2,\dots N\}$ \\
$\nodeset$          & Set of all nodes, may contain several root nodes \\
$n,\dot n$          & Arbitrary nodes in $\nodeset$\\
$n^*$               & A solution node \\
$\nup,\ndn$         & $n$ ``up'' the tree (ancestor of $n$), $n$ ``down'' the tree (descendant of $n$) \\
$n_1$               & The first node visited by best-first search, global root node  \\
$n_t$               & $t$th node visited by BFS (with some underlying cost function) \\
$d(n)$              & Depth of the node $n$; the root $n_1$ has depth 0 \\
$\slend(n)$         & Slenderness, depends on $\pol$, see \cref{eq:slend} \\
$\children(n)$      & Children of $n$ \\
$\parent(n)$        & Single parent of $n$, except at the root $n_1$\\
$\anc(n)$           & Set of ancestors of $n$ \\
$\ancn(n)$          & $\anc(n)\cup\{n\}$ \\
$\desc(n)$          & Descendants of $n$ \\
$\descn(n)$         & $\desc(n)\cup\{n\}$ \\
$n \prec n', n \preceq n'$ & $n\in \anc(n'), n\in\ancn(n')$ \\
$c$                 & An arbitrary cost function\\
$\Cnm$              & Non-monotone rerooting cost function, see \cref{eq:Cnm} \\
$\Cmax$             & Monotone rerooting cost function, see \Cref{sec:rootlts}\\
$\Crlts$            & Non-monotone rerooting cost function used by \rootlts, see \cref{eq:Crlts}\\
$\nodesettheta$     & Set of nodes  of cost $\sop(\cdot)$ at most $\theta$\\
$\pol(n)$           & $=\pol(n|n_1)$, probability of the node $n$ according to the policy $\pol$, see \Cref{sec:notation} \\
$\pol(n'|n)$        & $\pol(n') / \pol(n)$, assuming $n \preceq n'$ \\
$\dop(n)$           & $d(n)/\pol(n)$, cost function used in original LTS \\
$\sop(n)$           & $\slend(n)/\pol(n) \leq 1+\dop(n)$, self-counting cost function, see \cref{eq:sop} \\ 
$\sopa{n}{n^a}$     & Rooted version of $\sop$, see \cref{eq:sop_anc_root}\todo{Use $\sop(n|n^a)$ for clarity?}\\
$\tpr_t$            & Rerooting weight assigned to the node $n_t$ upon visiting it at step $t$\\
$\Tprm{\leq t}$     & Cumulative rerooting weight $\Tprm{\leq t} = \sum_{k\leq t} \tpr_k$\\
$\ttpr$             & Reparameterization of $\tpr$, input rerooting weights\\
$q$                 & Fixed number of clues \\
$q_t$               & Number of clue nodes visited up to (including) step $t$\\ 
$\clueset$          & Set of clue nodes \\
$\indicator{test}$  &=1 if $test$ is true, 0 otherwise
\end{tabular}

\ifdraft
\input{old-and-misc/log-loss-rerooter-policy}
\fi

\end{appendix}

\end{document}